\pgfplotsset{width=10cm,compat=1.9}
\DeclareMathOperator*{\argmin}{argmin}
\newtheorem{theorem}{Theorem}[section]
\newtheorem{lemma}{Lemma}[section]
\newtheorem{definition}{Definition}
\newtheorem{proposition}{Proposition}[section]
\newtheorem{assumption}{Assumption}
\newtheorem{remark}{Remark}[section]
\newcommand{\ZZ}{\mathcal{Z}}
\crefname{algorithm}{Algorithm}{Algorithms}
\crefname{assumption}{Assumption}{Assumptions}
\crefname{equation}{}{}
\crefname{figure}{Fig.}{Figs.}
\crefname{table}{Table}{Tables}
\crefname{section}{Section}{Sections}
\crefname{subsection}{Section}{Sections}
\crefname{theorem}{Theorem}{Theorems}
\crefname{lemma}{Lemma}{Lemmmas}
\crefname{proposition}{Proposition}{Propositions}
\crefname{definition}{Definition}{Definitions}
\crefname{corollary}{Corollary}{Corollaries}
\crefname{remark}{Remark}{Remarks}
\crefname{example}{Example}{Examples}
\crefname{appendix}{Appendix}{Appendices}
\newcommand{\XX}{\mathcal{X}}
\newcommand{\WW}{\mathcal{W}}
\newcommand{\ws}{w^{*}}
\newcommand{\prox}{\texttt{\textup{prox}}}
\newcommand{\GG}{\mathcal{G}}
\newcommand{\hf}{\widehat{F}}
\newcommand{\DD}{\mathcal{D}}
\newcommand{\hw}{\widehat{w}}
\newcommand{\expec}{\mathbb{E}}
\newcommand{\EPL}{\mathbb{E}F(\widehat{w}_R) - F(\ws)}
\newcommand{\edit}[1]{{\color{black} #1}}
\newcommand{\epsor}{\epsilon_0^r}
\newcommand{\delor}{\delta_0^r}
\newcommand{\epso}{\epsilon_0}
\newcommand{\delo}{\delta_0}
\newcommand{\rand}{\mathcal{R}^{(i)}}
\newcommand{\bz}{\mathbf{Z}}
\newcommand{\bx}{\mathbf{X}}
\newcommand{\edsim}{\underset{(\epsilon, \delta)}{\simeq}}
\newcommand{\Al}{\mathcal{A}}
\newcommand{\EG}{\expec \| \nabla \hf_{\bx}(\wpr)\|^2}
\newcommand{\wpr}{w_{\text{priv}}}
\newcommand{\wb}{\widebar{w}}
\newcommand{\EGMN}{\mathbb{E}\|\widehat{\mathcal{G}}_{\eta}(\wpr, \bx)\|^2}
\newcommand{\hwrt}{\hw_{r+1}^{t+1}}
\newcommand{\wrt}{w_{r+1}^t}
\newcommand{\wrtp}{w_{r+1}^{t+1}}
\newcommand{\ES}{\mathbb{E}\widehat{F}_{\small \bx \normalsize}(w_S) - \widehat{F}_{\small \bx \normalsize}^*}
\newcommand{\ESR}{\mathbb{E}\widehat{F}_{\small \bx \normalsize}(\wpr) - \widehat{F}_{\small \bx \normalsize}^*}
\newcommand{\EPLL}{\expec F(w_R) - F^*}
\newcommand{\Renyi}{R\'enyi }
\newcommand{\hfx}{\hf_\bx^0}
\newcommand{\one}{\mathds{1}_{\{M < N\}}}
\newcommand{\gapx}{\hat{\Delta}_{\bx}}
\newcommand{\GGh}{\widehat{\GG}_{\eta}}
\newcommand{\pvec}{\mathcal{P}_{\text{vec}}}
\begin{document}

\twocolumn[

\aistatstitle{Private Non-Convex Federated Learning Without a Trusted Server}

\aistatsauthor{Andrew Lowy \And Ali Ghafelebashi \And  Meisam Razaviyayn}

\aistatsaddress{
University of Southern California 
\And
University of Southern California \And
University of Southern California
} ]

\begin{abstract}
\vspace{-0.4cm}
We study federated learning (FL)--especially \textit{cross-silo} FL--with non-convex loss functions  and data from people who do not trust the server or other silos. In this setting, each silo (e.g. hospital) must protect the privacy of each person's data (e.g. patient's medical record), even if the server or other silos act as adversarial eavesdroppers. To that end, we consider \textit{inter-silo record-level} (ISRL) \textit{differential privacy} (DP), which requires silo~$i$'s \textit{communications} to satisfy record/item-level DP. 
We propose novel ISRL-DP algorithms 
for FL with heterogeneous (non-i.i.d.) silo data and two classes of Lipschitz continuous loss functions: First, we consider losses satisfying the  {\it Proximal Polyak-\L ojasiewicz (PL)} inequality, which is an extension of the classical PL condition to the constrained setting. In contrast to our result, prior works only considered  \textit{unconstrained} private  optimization with \textit{Lipschitz} PL loss, which rules out most interesting PL losses such as strongly convex problems and linear/logistic regression. 
Our algorithms nearly attain the optimal 
\textit{strongly convex}, \textit{homogeneous} (i.i.d.)
rate for ISRL-DP FL \textit{without assuming convexity or i.i.d. data}. 
Second, we give the first private algorithms for \textit{non-convex non-smooth} loss functions. Our utility bounds even improve on the state-of-the-art bounds for \textit{smooth} losses.
We complement our upper bounds with lower bounds. 
Additionally, we provide \textit{shuffle DP} (SDP) algorithms that improve over the state-of-the-art \textit{central DP} algorithms 
under more practical trust assumptions.
Numerical experiments show that our 
algorithm
has better
accuracy 
than
baselines for most 
privacy levels.  All the codes are publicly available at: \href{https://github.com/ghafeleb/Private-NonConvex-Federated-Learning-Without-a-Trusted-Server}{https://github.com/ghafeleb/Private-NonConvex-Federated-Learning-Without-a-Trusted-Server}.
\end{abstract}

\section{INTRODUCTION}
\label{sec: intro}
Federated learning (FL) is a machine learning paradigm in which many ``silos'' (a.k.a. ``clients''), such as  hospitals, banks, or schools, collaborate to train a model by exchanging local updates, while storing their training data locally \citep{kairouz2019advances}. Privacy has been an important motivation for FL due to decentralized data storage \citep{mcmahan2017originalFL}. However, silo data can still be leaked in FL without additional safeguards (e.g. via membership or model inversion attacks) 
\citep{inversionfred, inversionhe, inversionsong, zhu2020deep}. Such leaks can occur when silos send updates to the central server---which an adversarial eavesdropper may access---or (in peer-to-peer FL) directly to other silos.

\textit{Differential privacy} (DP)~\citep{dwork2006calibrating} ensures that data cannot be leaked to an adversarial eavesdropper. Several variations of DP have been considered for FL. 
Numerous works~\citep{jayaraman2018distributed, truex2019hybrid, wang2019efficient, kang2021weighted, noble2022differentially} 
studied FL with \textit{central DP} (CDP).\footnote{Central differential privacy (CDP) is often simply referred to as differential privacy (DP) \citep{dwork2014}, but we use CDP here for emphasis. This notion should not be confused with concentrated DP \citep{dwork2016concentrated, bun16}, which is sometimes also abbreviated as ``CDP'' in other works.} Central DP provides protection for silos' \textit{aggregated} data against an adversary who only sees the \textit{final trained model}. Central DP FL has two drawbacks: 1) the aggregate guarantee does not protect the privacy of \textit{each individual silo}'s local data; and 2) it does not defend against privacy attacks from other silos or against an adversary with access to the server during training.

User-level DP (a.k.a. client-level DP) has been proposed as an alternative to central DP~\citep{mcmahan17, geyer17, jayaraman2018distributed, gade2018privacy, wei2020user, zhou2020differentially, levy2021learning, ghazi2021user}. User-level DP remedies the first drawback of CDP by preserving the privacy of every silo's \textit{full local data set}. Such a privacy guarantee is useful for \textit{cross-device FL}, where each silo/client is associated with data from a single person (e.g. cell phone user) possessing many records (e.g. text messages). However, it is ill-suited for \textit{cross-silo FL}, where silos (e.g. hospitals, banks, or schools) typically have data from many different people (e.g. patients, customers, or students). 
In cross-silo FL, each person's (health, financial, or academic) record (or ``item'') may contain sensitive information. Thus, it is desirable to ensure DP for \textit{each individual record} (``item-level DP'') of silo $i$, instead of silo $i$'s full data set. 
Another crucial shortcoming of user-level DP is that, like central DP, it only guarantees the privacy of the \textit{final output} of the FL algorithm against \textit{external} adversaries: it \textit{does not protect against an adversary with access to the server, other silos, or the communications among silos} during training.

While central DP and user-level DP implicitly assume that people (e.g. patients) \textit{trust all parties} (e.g. their own hospital, other hospitals, and the server) with their private data, \textit{local DP} (LDP)~\citep{whatcanwelearnprivately, duchi13} makes an extremely different assumption. In the LDP model, each person (e.g. patient) who contributes data does not trust \textit{anyone}: not even their own silo (e.g. hospital) is considered trustworthy. In cross-silo FL, this assumption is unrealistic: e.g., patients typically want to share their accurate medical test results with \textit{their own} doctor/hospital to get the best care possible. Therefore, LDP is often unnecessary and \textit{may be too stringent to learn useful/accurate models}. 

\begin{figure}[ht]
 \vspace{-.1in}
 \centering
 \includegraphics[width=.5\textwidth]
{ 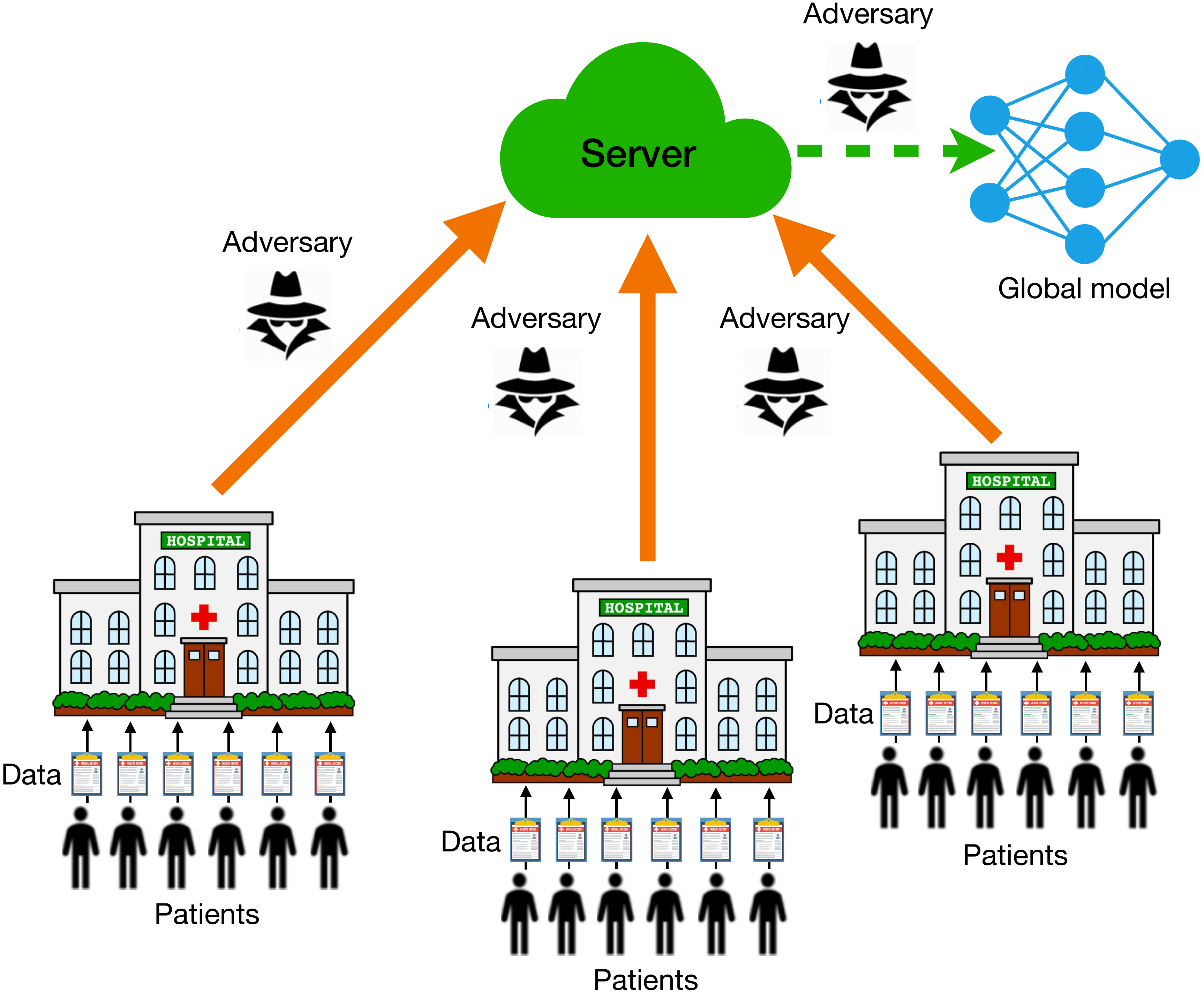}
 \vspace{-.3in}
\caption{\footnotesize 
\footnotesize
ISRL-DP ensures that \textit{each patient's data cannot be leaked, even if the server/other silos collude to decode the data of hospital~$i$.}
In contrast, user-level DP protects the full data of hospital $i$ and leaves hospitals vulnerable to attacks on the server.
}\label{fig:LDP def}
\vspace{-.1in}
\end{figure}

In this work, we consider an intermediate privacy notion between the two extremes of local DP and central/user-level DP: \textit{inter-silo record-level differential privacy} (ISRL-DP). ISRL-DP realistically assumes that \textit{people trust their own silo, but do not trust the server or other silos}. An algorithm is ISRL-DP if all of the \textit{communications} that silo $i$ sends satisfy \textit{item-level DP} (for all $i$). \edit{See~Figure~\ref{fig:LDP def} for a pictorial description and~\cref{sec: prelims} for the precise definition.} ISRL-DP eradicates all the drawbacks of central/user-level DP and local DP discussed above: 1) The item-level DP guarantee for each silo ensures that \textit{no person's data (e.g. medical record) can be leaked}. 2) Privacy of each silo's communications \textit{protects silo data against attacks from an adversarial server and/or other silos}. By post-processing~\citep{dwork2014}, it also implies that the final trained model is private. 3) By relaxing the overly strict trust assumptions of local DP, ISRL-DP allows for \textit{better model accuracy}. ISRL-DP has been considered (under different names) in~\citep{truex20, huang19, huang2020differentially, wu2019value, wei19, dobbe2020, zhao2020local, arachchige2019local, seif20, lr21fl, noble2022differentially, virginia}.

Although ISRL-DP was largely motivated by cross-silo applications, it can also be useful in \textit{cross-device} FL without a trusted server. This is because \textit{ISRL-DP implies user-level DP} if the ISRL-DP parameter is small enough: see~\cref{app: dp relationships} and also~\citep{lr21fl}. However, unlike user-level DP, ISRL-DP has the benefit of preventing leaks to the untrusted server and other users. 

Another intermediate DP notion between the low-trust local models and the high-trust central/user-level models is the \textit{shuffle model} of DP~\citep{prochlo, cheu2019distributed, esarevisited, efm20, fmt20, flame, girgis21a, ghazishuffle}. In the shuffle model, silos send their local updates to a secure \textit{shuffler}. The shuffler randomly permutes silos' updates (anonymizing them), and then sends the shuffled messages to the server. $\Al$ is shuffle DP (SDP) if the shuffled messages satisfy central DP. \edit{Figure~\ref{fig: privacy table} compares the trust assumptions of the different notions of DP FL discussed above.  }

\begin{figure}[ht]
 \vspace{-.1in}
 \centering
 \includegraphics[width=.5\textwidth]
{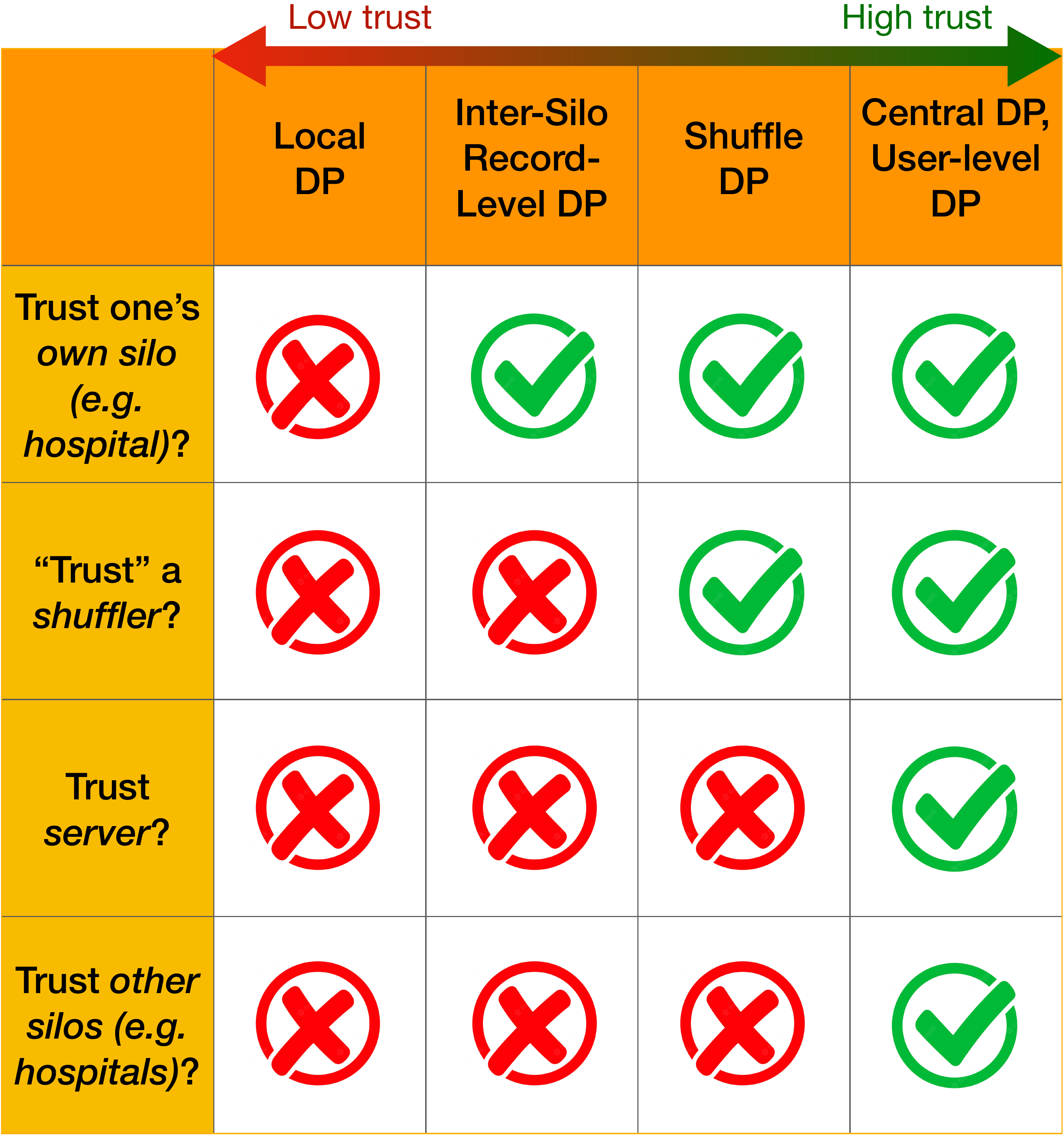}
 \vspace{-.3in}
\caption{\footnotesize 
Trust assumptions of DP FL notions. ``Trust'' is in quotes because silo messages must already satisfy (at least a weak level of) ISRL-DP in order to realize SDP: anonymization alone cannot provide DP~\citep{dwork2014}.}
\label{fig: privacy table}
\vspace{-.1in}
\end{figure}

\textbf{Problem setup:} Consider a \edit{horizontal} FL setting with $N$ silos (e.g. hospitals). Each silo has a local data set with $n$ samples (e.g. patient records): $X_i = (x_{i, 1}, \cdots , x_{i, n}) \in \XX^n$ for $i \in [N] \triangleq \{1, \cdots, N\}$. Let $X_i \sim \DD_i^n$, for unknown distributions $\DD_i$, which may vary across silos (``\textit{heterogeneous}''). 
In the $r$-th round of communication, silos receive the global model $w_r$ from the server and use their local data to improve the model. Then, silos send local updates to the server (or other silos, in peer-to-peer FL), who updates the global model to $w_{r+1}$.
 Given a loss function $f: \mathbb{R}^d \times \XX \to \mathbb{R} \bigcup \{
+\infty\}$, %
let
\begin{equation}
\label{eq: SO}
F_i(w):= \mathbb{E}_{x_i \sim \mathcal{D}_i}[f(w, x_i)].
\vspace{-0.12cm}
\end{equation} 
\normalsize
At times, we consider empirical risk minimization (ERM), with $\widehat{F}_i(w) :=  \frac{1}{n}\sum_{j=1}^{n} f(w,x_{i,j})$.
We aim to 
solve the FL problem: \small
\small
\begin{equation}
\small
\label{eq: FL}
\min_{w \in \mathbb{R}^d} \left\{F(w):= \frac{1}{N}\sum_{i=1}^N F_i(w)\right\},
\end{equation}
\normalsize
\noindent or~$\small \min_{w \in \mathbb{R}^d} \{\widehat{F}_{\bx}(w):= \frac{1}{N}\sum_{i=1}^N \widehat{F}_i(w)\} \normalsize$ for ERM, while keeping silo data private. Here $\bx = (X_1, \cdots, X_N) \in \XX_1^n \times \cdots \times \XX_N^n =: \mathbb{X}$ is a distributed database. 
We allow for constrained FL by considering $f$ that takes the value $+\infty$ outside of some closed set $\WW \subset \mathbb{R}^d$. When $F_i$ takes the form~\cref{eq: SO} (not necessarily ERM), we  refer to the problem as \textit{stochastic optimization} (SO) for emphasis. For SO, we assume that the samples $x_{i,j}$ are independent. For ERM, we make no assumptions on the data. 
The \textit{excess risk} of an algorithm $\Al$ for solving~\cref{eq: FL} is $\expec F(\Al(\mathbf{X})) - F^*$, where $F^* = \inf_{w} F(w)$ and the expectation is taken over both the random draw of $\mathbf{X} = (X_1, \ldots, X_N)$ and the randomness of $\Al$. For ERM, the \textit{excess empirical risk} of $\Al$ is $\expec \hf_\bx(\Al(\bx)) - \hf_\bx^*$, where the expectation is taken solely over the randomness of $\Al$. For general non-convex loss functions, meaningful excess risk guarantees are not tractable in polynomial time. Thus, we use the norm of the gradient to measure the utility (stationarity) of FL algorithms.\footnote{In the non-smooth case, we instead use the norm of the \textit{proximal gradient mapping}, defined in~\cref{sec: Prox-SVRG}. }   

\textbf{Contributions and Related Work:} 
For \textit{(strongly) convex} losses, the optimal performance of ISRL-DP and SDP FL algorithms is mostly understood~\citep{lr21fl, girgis21a}. In this work, we consider the following questions for \textit{non-convex} losses:
\vspace{-.05in}
\begin{center}
\noindent\fbox{
    \parbox{0.4\textwidth}{
    \vspace{-0.cm}
\textbf{Question 1.} What is the best performance that any inter-silo record-level DP algorithm can achieve for solving~\cref{eq: FL} with \textit{non-convex} $F$?

\vspace{0.2cm}

\textbf{Question 2.} With a trusted shuffler (but no trusted server), what performance is attainable?

    }
    }
    \end{center}
\vspace{0.1cm}

Our \underline{first contribution} in \cref{sec: PL SO} is a nearly complete answer to \textbf{Questions 1 and 2} for the subclass of non-convex loss functions that satisfy the {\it Proximal Polyak-\L ojasiewicz (PL)} inequality~\citep{karimi2016linear}. The Proximal PL (PPL) condition is a generalization of the classical PL inequality~\citep{polyak} and covers many important ML models: e.g. some classes of neural nets \edit{such as wide neural nets~\citep{liu2022loss,lei2021sharper}}, linear/logistic regression, LASSO, strongly convex losses~\citep{karimi2016linear}. For \textit{heterogeneous} FL with \textit{non-convex proximal PL} losses, our ISRL-DP algorithm attains excess risk that \textit{nearly matches the strongly convex, i.i.d. lower bound}~\citep{lr21fl}. Additionally, the excess risk of our SDP algorithm nearly matches the \textit{strongly convex, i.i.d., central DP lower bound}~\citep{bft19} and is attained without convexity, without i.i.d. data, and without a trusted server. Our excess risk bounds nearly match the optimal statistical rates in certain practical parameter regimes, resulting in ``privacy almost for free.''

To obtain these results, we invent a new method of analyzing noisy proximal gradient algorithms that does not require convexity, applying tools from the analysis of \textit{objective perturbation}~\citep{chaud, kifer2012private}. Our novel analysis is necessary because privacy noise cannot be easily separated from the non-private optimization terms in the presence of the proximal operator and non-convexity.

Our \underline{second contribution} in \cref{subsec:PL ERM} is a nearly complete answer to \textbf{Questions 1 and 2} for \textit{federated ERM} with proximal PL losses. We provide novel, communication-efficient, proximal \textit{variance-reduced} ISRL-DP and SDP algorithms for non-convex ERM. Our algorithms have near-optimal excess empirical risk that almost match the \textit{strongly convex} ISRL-DP and \textit{CDP} lower bounds~\citep{lr21fl, bst14}, without requiring convexity.

Prior works~\citep{wang2017ermrevisited, kang2021weighted, zhangijcai} on DP PL optimization considered an \textit{extremely narrow} PL function class: \textit{unconstrained} optimization with Lipschitz continuous\footnote{Function $h: \mathbb{R}^d \to \mathbb{R}^m$ is $L$-Lipschitz on $\WW \subset \mathbb{R}^d$ if $\|h(w) - h(w')\| \leq L\|w - w'\|$ for all $w, w' \in \WW$.} losses satisfying the \textit{classical} PL inequality~\citep{polyak}. The combined assumptions of Lipschitz continuity and the PL condition on $\mathbb{R}^d$ (unconstrained) are very strong and rule out most interesting PL losses (e.g. neural nets, linear/logistic regression, LASSO, strongly convex), since the Lipschitz parameter of such losses is infinite or prohibitively large.\footnote{In particular, the DP strongly convex, Lipschitz lower bounds of \citep{bst14, bft19,lr21fl} do not imply lower bounds for the unconstrained Lipschitz, PL function class considered in these works, since their hard instances are not Lipschitz on all of $\mathbb{R}^d$.} By contrast, the \textit{Proximal} PL function class that we consider allows for such losses, which are Lipschitz on a restricted parameter domain. 

\underline{Third}, we address \textbf{Questions 1 and 2} for general \textit{non-convex/non-smooth} (non-PL) loss functions in \cref{sec: Prox-SVRG}. We develop the first DP optimization (in particular, FL) algorithms for non-convex/non-smooth loss functions. Our ISRL-DP and SDP algorithms have significantly better utility than all previous ISRL-DP and \textit{CDP} FL algorithms for \textit{smooth} losses~\citep{wang2019efficient, ding2021, hu21, noble2022differentially}. 
We complement our upper bound with the first non-trivial ISRL-DP \textit{lower bound} for non-convex FL in \cref{sec: lower bound}.

As a consequence of our analyses, we also obtain bounds for FL algorithms that satisfy \textit{both ISRL-DP and user-level DP simultaneously}, in ~\cref{app: hybrid lower bounds}. Such a privacy requirement would be useful in \textit{cross-device} FL with users (e.g. cell phone) who do not trust the server or other users with their sensitive data (e.g. text messages). 

Finally, \underline{numerical experiments} in~\cref{sec: experiments} showcase the practical performance of our algorithm on several benchmark data sets. In each experiment, our algorithm attains better accuracy than the baselines for most privacy levels. 

See~\cref{table: ldp summary} for a summary of our results and~\cref{app: related work} for a thorough discussion of related work. 

\begin{figure}[ht]
 \vspace{-.1in}
 \centering
 \includegraphics[width=.5\textwidth]
{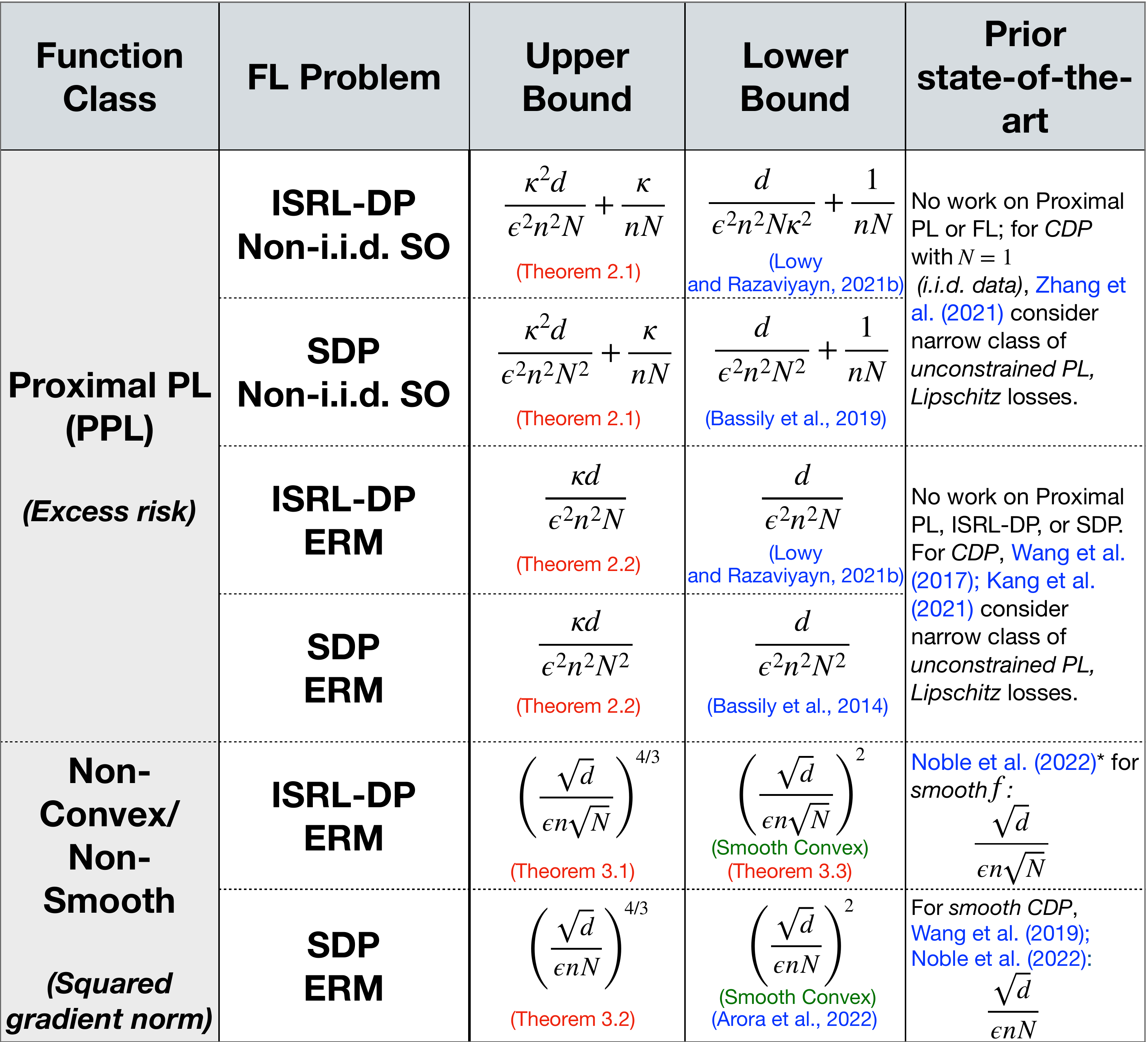}
 \vspace{-.3in}
\caption{\footnotesize 
Summary of results for $M=N$, log terms omitted. $\kappa = \beta/\mu$, where $\beta$ is the smoothness parameter and $\mu$ is the proximal-PL parameter of the loss. *\citep{noble2022differentially} mostly analyzes CDP FL but we observe that a ISRL-DP bound can also be obtained with a small modification of their algorithm and analysis. 
  \normalsize}
\label{table: ldp summary}
\vspace{-.1in}
\end{figure}

\subsection{Preliminaries}
\label{sec: prelims}
\textbf{Differential Privacy:} Let $\mathbb{X} = \XX_1^n \times \cdots \XX^n$ and $\rho:\mathbb{X}^2 \to [0, \infty)$ be a  distance between databases. Two databases $\bx, \bx' \in \mathbb{X}$ are \textit{$\rho$-adjacent} if $\rho(\bx, \bx') \leq 1$. DP ensures that (with high probability) an adversary cannot distinguish between the outputs of algorithm $\Al$ when it is run on adjacent databases: 
\begin{definition}[Differential Privacy]
\label{def: DP}
Let $\epsilon \geq 0, ~\delta \in [0, 1).$ A randomized algorithm $\Al: \mathbb{X} \to \mathcal{W}$ is \textit{$(\epsilon, \delta)$-differentially private} (DP) (with respect to $\rho$) if for all $\rho$-adjacent data sets $\bx, \bx' \in \mathbb{X}$ and all measurable subsets $S \subseteq \WW$, we have 
\begin{equation}
\label{eq: DP}
\mathbb{P}(\Al(\bx) \in S) \leq e^\epsilon \mathbb{P}(\Al(\bx') \in S) + \delta.
\end{equation}
\end{definition}
\normalsize 

\begin{definition}[Inter-Silo Record-Level Differential Privacy]
\label{def: informal ISRL-DP}
Let $\rho_i: \XX^2 \to [0, \infty)$, $\rho_i(X_i, X'_i) := \sum_{j=1}^{n} \mathds{1}_{\{x_{i,j} \neq x_{i,j}'\}}$, $i \in [N]$. 
A randomized algorithm $\Al$ is $(\epsilon, \delta)$-ISRL-DP if for all $i \in [N]$ and all $\rho_i$-adjacent silo data sets $X_i, X'_i$, the full transcript of silo $i$'s sent messages 
satisfies~\cref{eq: DP} 
for any fixed settings of other silos' messages and data.
\end{definition}

\begin{definition}[Shuffle Differential Privacy~\citep{prochlo, cheu2019distributed}]
\label{def: SDP}
A randomized algorithm $\Al$ is \textit{$(\epsilon, \delta)$-shuffle DP} \textit{(SDP)} if for all $\rho$-adjacent databases $\bx, \bx' \in \mathbb{X}
$ 
and all measurable subsets $S$, the collection of all uniformly randomly permuted messages that are sent by the shuffler 
satisfies~\cref{eq: DP}, with $\rho(\bx, \bx') := \sum_{i=1}^N \sum_{j=1}^{n} \mathds{1}_{\{x_{i,j} \neq x'_{i,j}\}}$.
\end{definition}
In~\cref{app: SDP}, we recall the basic DP building blocks
that our algorithms employ. 

\textbf{Notation and Assumptions:}
Denote by $\|\cdot\|$ the $\ell_2$ norm. Let $\WW$ be a closed convex set. For differentiable (w.r.t. $w$) $f^0: \WW \times \XX \to \mathbb{R}$, denote its gradient w.r.t. $w$ by $\nabla f^0(w,x)$. Function 
$h$ is \textit{$\beta$-smooth} if $\nabla h$ is $\beta$-Lipschitz.  A \textit{proper} function has range $\mathbb{R}\bigcup \{+\infty\}$ and
is not identically equal to $+\infty$. Function $g$ is \textit{closed} if~$\forall \alpha \in \mathbb{R}$, the 
set $\{w \in \mbox{dom}(g) \vert g(w) \leq \alpha \}$ is closed. 
The indicator function of $\WW$ is \footnotesize $
\iota_{\WW}(w) := \begin{cases}
0 &\mbox{if $w \in \mathcal{W}$} \\
+\infty &\mbox{otherwise}
\end{cases}.$
\normalsize
The \textit{proximal operator} of function $f^1$ is 
$
\prox_{\eta f^1}(z) := \argmin_{y \in \mathbb{R}^d}\left(\eta f^1(y) + \frac{1}{2}\|y - z\|^2 \right), ~\text{for}~\eta > 0.$
Write $a \lesssim b$ if $\exists C > 0$ such that $a \leq Cb$ and $a = \widetilde{\mathcal{O}}(b)$ if $a \lesssim \log^2(\theta) b$ for some parameters $\theta$. Let $\small \hat{\Delta}_{\bx} := \hf_{\bx}(w_0) - \hf_{\bx}^*$, with $\hf_{\bx}^* = \inf_w \hf_{\bx}(w)$.
We assume the loss function $f(w,x) = f^0(w,x) + f^1(w)$ is \textit{non-convex/non-smooth composite}, where $f^0$ is bounded below, and: 
\begin{assumption}
\label{ass: lip}
$f^0(\cdot, x)$ is $L$-Lipschitz (on $\WW$ if $f^1 = \iota_{\WW} + g$ for some convex $g \geq 0$; on $\mathbb{R}^d$ otherwise) and $\beta$-smooth for all $x \in \XX$. 
\end{assumption}
\begin{assumption}
\label{ass: f1}
$f^1$ is a proper, closed, convex function. 
\end{assumption}

\edit{Examples of functions satisfying~\cref{ass: f1} include indicator functions of convex sets $\iota_{\WW}$ and $\ell_p$-regularizers $\lambda \|w\|_p$ with $p \in [1, \infty]$.} We allow FL networks in which some silos may be unable to participate in every round (e.g. due to internet/wireless communication  problems):  
\begin{assumption}
\label{ass: unreliable}
In each round of communication~$r$, a uniformly random subset~$S_r$ of $M = |S_r| \in [N]$ silos receives the global model and can send messages.\footnote{In the Appendix, we prove general versions of some of our results with $|S_r| = M_r$ for \textit{random} $M_r$.} 
\end{assumption}
\cref{ass: unreliable} is realistic for cross-device FL. However, in cross-silo FL, typically $M \approx N$~\citep{kairouz2019advances}. 

\section{
ALGORITHMS FOR PROXIMAL-PL LOSSES
}
\label{sec:PL}
\vspace{-.07in}
\subsection{Noisy Distributed Proximal SGD for Heterogeneous FL (SO)}
\label{sec: PL SO}
\vspace{-.07in}
We propose a simple distributed \textit{Noisy Proximal SGD (Prox-SGD)} method: in each round $r \in [R]$, available silos $i \in S_r$ draw local samples $\{x_{i,j}^r\}_{j=1}^{K := \lfloor n/R \rfloor}$ from $X_i$ (without replacement) and compute 
$\widetilde{g}_r^{i} := \frac{1}{K} \sum_{j=1}^{K} \nabla f^0(w_r, x_{i,j}^r) + u_i$, where $u_i \sim \mathcal{N}(0, \sigma^2 \mathbf{I}_d)$ for ISRL-DP. For SDP, $u_i$ has Binomial distribution~\citep{cheu2021shuffle}. The server (or silos) aggregates $\widetilde{g}_{r} := \frac{1}{M} \sum_{i \in S_r} \widetilde{g}_r^{i}$ and then updates the global model $w_{r+1} :=\prox_{\frac{1}{2\beta}f^1}(w_r - \frac{1}{2\beta}\widetilde{g}_r)$. The use of \texttt{prox} is needed to handle the potential non-smoothness of $f$. Pseudocodes are in~\cref{app: sdp prox grad SO}. 
\begin{assumption}
The loss is $\mu$-PPL in expectation: $\forall w \in \mathbb{R}^d$,
\label{ass: stochProximal PL}
\vspace{-.02in}
 \small 
\begin{align*}
\small
\mu \expec[\hf_{\cal{S}}(w) - \inf_{w'} \hf_{\cal{S}}(w')] 
&
\leq -\beta \expec\Bigg[\min_{y}\Big[\langle \nabla \hf_{\cal{S}}^0(w), y - w \rangle 
\\
&\;\; 
+ \frac{\beta}{2}\|y - w\|^2 +f^1(y) - f^1(w)\Big]\Bigg],
\vspace{-.02in}
\end{align*}
\normalsize
where $\hf_{\cal{S}}(w) := \frac{1}{MK} \sum_{i \in S} \sum_{j=1}^K f(w, x_{i,j})$, $S \subseteq [N]$ is a uniformly random subset of size $M$, $\mathcal{S} = \{ x_{i,j} \}_{i \in S, j \in [K]}$, and $x_{i,j} \sim \DD_i$. Denote $\kappa = \beta/\mu$.
\end{assumption}
\vspace{-.07in}
As discussed earlier, many interesting losses (e.g. neural nets, linear regression) satisfy~\cref{ass: stochProximal PL}.
\begin{theorem}[Noisy Prox-SGD: Heterogeneous PPL FL]
\label{thm: hetero pl fl proxgrad}
Grant~\cref{ass: stochProximal PL}. Let $\epsilon \leq \min\{8\ln(1/\delta), 15\}, \delta \in (0,1/2)$, $n \geq \widetilde{\Omega}(\kappa)$. Then, there exist $\sigma^2$ and $K$ such that:\\
1. ISRL-DP Prox-SGD is $(\epsilon, \delta)$-ISRL-DP, and in $R = \widetilde{\mathcal{O}}(\kappa)$ communications, we have:
\small
\begin{equation} 
\label{eq: ISRL-DP Prox SGD}
\small
\EPLL = \widetilde{\mathcal{O}}\left(\frac{L^2}{\mu}\left(\frac{\kappa^2 d \ln(1/\delta)}{\epsilon^2 n^2 M} + \frac{\kappa}{Mn}\right)\right).
\end{equation}
\normalsize
2. SDP Prox-SGD is $(\epsilon, \delta)$-SDP for $M \geq N \min(\epsilon/2, 1)$, and if $R = \widetilde{\mathcal{O}}(\kappa)$, then:
\small
\begin{equation}
\label{eq: SDP Prox SGD}
\small
\EPLL = \widetilde{\mathcal{O}}\left(\frac{L^2}{\mu}\left(\frac{\kappa^2 d \ln^2(d/\delta)}{\epsilon^2 n^2 N^2} + \frac{\kappa}{Mn}\right)\right).
\end{equation}
\normalsize
\end{theorem}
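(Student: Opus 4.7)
The plan is to split the proof into a privacy accounting step and a PPL-based utility analysis, handling ISRL-DP and SDP in parallel since only the noise calibration differs. For privacy, the key observation is that the algorithm samples $K=\lfloor n/R\rfloor$ points per round \emph{without replacement across rounds}, so the $R$ noisy gradients produced by each silo act on \emph{disjoint} subsets of $X_i$. Hence parallel composition applies and it suffices to prove that a single round is $(\epsilon,\delta)$-DP for silo $i$. Under~\cref{ass: lip}, the $\ell_2$-sensitivity of the per-round gradient $\frac{1}{K}\sum_j\nabla f^0(w_r,x_{i,j}^r)$ is $2L/K$, so the Gaussian mechanism with $\sigma^2=\Theta(L^2\log(1/\delta)/(K^2\epsilon^2))=\Theta(L^2 R^2\log(1/\delta)/(n^2\epsilon^2))$ gives ISRL-DP, and post-processing closes the argument. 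For SDP, I would replace Gaussian with vector Binomial noise following~\citep{cheu2021shuffle} and combine it with shuffle amplification, which reduces the effective per-message noise by roughly $1/\sqrt{M}$ provided $M\geq N\min(\epsilon/2,1)$; disjointness of rounds again lets us invoke parallel composition to avoid an $R$-factor blow-up.

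For utility, write $\widetilde g_r=\nabla F(w_r)+\xi_r$, where $\xi_r$ collects the heterogeneous sampling error and the injected privacy noise. Independence of the privacy noise across silos together with Lipschitzness of $\nabla f^0$ yield
\begin{equation*}
\mathbb{E}\|\xi_r\|^2 \;\lesssim\; \frac{L^2}{MK} + \frac{d\,\sigma^2}{M}.
\end{equation*}
Using $\beta$-smoothness of $f^0$ and the optimality condition for $w_{r+1}=\prox_{f^1/(2\beta)}(w_r-\widetilde g_r/(2\beta))$, I would establish a descent inequality
\begin{equation*}
\mathbb{E}[F(w_{r+1})] \le \mathbb{E}[F(w_r)] + \beta\,\mathbb{E}[\mathcal{M}(w_r)] + C\,\mathbb{E}\|\xi_r\|^2,
\end{equation*}
where $\mathcal{M}(w)$ is precisely the inner minimum appearing in~\cref{ass: stochProximal PL}. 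Applying the PPL inequality turns $\beta\,\mathcal{M}(w_r)$ into $-\mu(F(w_r)-F^*)$, giving
\begin{equation*}
\mathbb{E}[F(w_{r+1})-F^*] \le \Bigl(1-\tfrac{\mu}{\beta}\Bigr)\,\mathbb{E}[F(w_r)-F^*] + \tfrac{C}{\mu}\Bigl(\tfrac{L^2}{MK}+\tfrac{d\sigma^2}{M}\Bigr).
\end{equation*}
Unrolling over $R=\widetilde{\Theta}(\kappa)$ iterations with a geometric sum decays the initial gap below the noise floor; substituting $K=n/R$ and the privacy-calibrated $\sigma^2$ produces the claimed rates. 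The extra factor of $\kappa^2$ in the DP term comes precisely from $\sigma^2\propto R^2=\kappa^2$ under parallel composition over disjoint rounds, while the $\kappa/(Mn)$ statistical term comes from $L^2/(MK)$ amplified by $1/\mu=\kappa/\beta$.

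The hard part is the descent inequality itself: without convexity of $F$, one cannot separate noise from signal by the standard firm-nonexpansiveness trick for $\prox$, and one cannot directly control the PPL surrogate $\mathcal{M}$ evaluated at the noiseless argument in terms of the noisy proximal step. I would handle this by introducing the noiseless shadow iterate $\bar w_{r+1}=\prox_{f^1/(2\beta)}(w_r-\nabla F(w_r)/(2\beta))$, using $\|w_{r+1}-\bar w_{r+1}\|\le\|\xi_r\|/(2\beta)$ by non-expansiveness of $\prox$ together with convexity of $f^1$ from~\cref{ass: f1}, and then bounding $F(w_{r+1})-F(\bar w_{r+1})$ via $\beta$-smoothness of $f^0$. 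This is the ``objective perturbation'' intuition invoked in the paper: the privacy noise acts as a quadratic-in-$\|\xi_r\|$ perturbation of the per-round proximal objective, and the shadow iterate lets one absorb this perturbation without losing extra factors of $\kappa$ in the utility. Once this perturbation lemma is in place, the remaining steps---taking expectation over $S_r$, applying PPL, geometric summation, and substituting the DP parameters---are mechanical.
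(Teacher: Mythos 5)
Your overall strategy matches the paper's: privacy by parallel composition over disjoint per-round batches (sensitivity $2L/K$, Gaussian mechanism for ISRL-DP, the binomial shuffle protocol plus subsampling amplification for SDP), and utility by viewing each noisy proximal step as an objective-perturbed version of the noiseless proximal surrogate, applying PPL, and contracting geometrically. The descent inequality you write down, with $\beta\,\mathbb{E}[\mathcal{M}(w_r)]$ plus a $C\,\mathbb{E}\|\xi_r\|^2$ error, is exactly the paper's target inequality, and the contraction and parameter substitution are mechanical as you say.

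However, the mechanism you propose for the hard step does not actually produce that inequality. You introduce the noiseless shadow iterate $\bar w_{r+1}=\prox_{f^1/(2\beta)}\bigl(w_r-\nabla F(w_r)/(2\beta)\bigr)$, invoke non-expansiveness to get $\|w_{r+1}-\bar w_{r+1}\|\le\|\xi_r\|/(2\beta)$, and then say you would bound $F(w_{r+1})-F(\bar w_{r+1})$ via $\beta$-smoothness of $f^0$. This gives at best
\begin{equation*}
F^0(w_{r+1})-F^0(\bar w_{r+1})\;\le\;\bigl\langle\nabla F^0(\bar w_{r+1}),\,w_{r+1}-\bar w_{r+1}\bigr\rangle+\tfrac{\beta}{2}\|w_{r+1}-\bar w_{r+1}\|^2
\;\le\;\tfrac{L\|\xi_r\|}{2\beta}+\tfrac{\|\xi_r\|^2}{8\beta},
\end{equation*}
and the linear term $L\|\xi_r\|/(2\beta)$ dominates. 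It does \emph{not} vanish in expectation: although $\mathbb{E}[\xi_r\mid w_r]=0$, the prox map is nonlinear, so $\mathbb{E}[w_{r+1}\mid w_r]\ne\bar w_{r+1}$ in general, and $\mathbb{E}\langle\nabla F^0(\bar w_{r+1}),\,w_{r+1}-\bar w_{r+1}\rangle$ cannot be made $O(\mathbb{E}\|\xi_r\|^2)$. In addition, this comparison does not control the $f^1$ contribution, which is not smooth and need not be Lipschitz. A linear-in-$\|\xi_r\|$ error would scale like $L\sigma\sqrt{d}/\beta\sqrt{M}$ rather than $d\sigma^2/(\beta M)$, i.e. roughly the square root of the rate you are trying to prove.

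The paper avoids this by never comparing values of $F$ at the two iterates. It compares values of the strongly convex per-round surrogate $H_r(y)=\langle\nabla\widehat F_r^0(w_r),y-w_r\rangle+\beta\|y-w_r\|^2+f^1(y)-f^1(w_r)$ (with $f^1$ \emph{inside} the surrogate) between its minimizer $y_*$ and the minimizer $y_*^{\mathrm{priv}}=w_{r+1}$ of $H_r^{\mathrm{priv}}:=H_r+\langle\bar u_r,\cdot\rangle$, and combines \emph{two} facts: (a) the perturbation stability bound $\|y_*-y_*^{\mathrm{priv}}\|\le\|\bar u_r\|/(2\beta)$ (your non-expansiveness step), and (b) optimality of $y_*^{\mathrm{priv}}$ for $H_r^{\mathrm{priv}}$, which gives $H_r(y_*^{\mathrm{priv}})-H_r(y_*)\le\langle\bar u_r,\,y_*-y_*^{\mathrm{priv}}\rangle$. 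Cauchy--Schwarz then yields the quadratic bound $H_r(y_*^{\mathrm{priv}})-H_r(y_*)\le\|\bar u_r\|^2/\beta$. Step (b) is what cancels the otherwise uncontrolled linear cross term; smoothness alone cannot substitute for it. Once this is in place, the descent lemma, $\langle\bar u_r,w_{r+1}\rangle$ cancellation, and PPL go through exactly as you sketch. You should replace the ``shadow iterate via smoothness'' step with this two-ingredient argument on the surrogate $H_r$.
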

\begin{remark}[Near-Optimality and ``privacy almost for free'']
Let $M=N$. Then, the bound in~\cref{eq: SDP Prox SGD} nearly matches the central DP \textit{strongly convex, i.i.d.} lower bound
of \citep{bft19} up to the factor $\widetilde{O}(\kappa^2)$ without a trusted server, without convexity, and without i.i.d. silos. Further, if $\kappa d \log^2(d/\delta)/\epsilon^2 \lesssim nN$, then~\cref{eq: SDP Prox SGD} matches the \textit{non-private} \textit{strongly convex, i.i.d.} lower bound of~\citep{agarwal} up to a $\widetilde{\mathcal{O}}(\kappa)$ factor, providing privacy nearly for free, without convexity/homogeneity. The bound in~\cref{eq: ISRL-DP Prox SGD} is larger than the \textit{i.i.d.}, \textit{strongly convex}, ISRL-DP lower bound of~\citep{lr21fl} by a factor of $\widetilde{\mathcal{O}}(\kappa^4)$.\footnote{In the terminology of~\citep{lr21fl}, Noisy Prox-SGD is $C$-compositional with $C = \sqrt{R} = \widetilde{O}(\sqrt{\kappa})$.} 
Moreover, if $\kappa d \ln(1/\delta)/\epsilon^2 \lesssim n$, then the ISRL-DP rate in~\cref{eq: ISRL-DP Prox SGD} matches the \textit{non-private}, strongly convex, i.i.d. lower bound~\citep{agarwal} up to $\widetilde{\mathcal{O}}(\kappa)$. 
\vspace{-.1in}
\end{remark}
\cref{thm: hetero pl fl proxgrad} is proved in~\cref{app: proof ofProximal PL SO}. Privacy follows from parallel composition~\citep{mcsherry2009privacy} and the guarantees of the Gaussian mechanism~\citep{dwork2014} and binomial-noised shuffle vector summation protocol~\citep{cheu2021shuffle}. The main idea of the excess loss proofs is to view each noisy proximal evaluation as an execution of \textit{objective perturbation}~\citep{chaud}. Using techniques from the analysis of objective perturbation, we bound the key term arising from descent lemma by the corresponding noiseless minimum plus an error term that scales with $\|\frac{1}{M}\sum_{i \in S_r} u_i\|^2$. 

Our novel proof approach is necessary because with the proximal operator and without convexity, privacy noise cannot be easily separated from the non-private terms. By comparison, in the convex case, the proof of \citep[Theorem 2.1]{lr21fl} uses non-expansiveness of projection and independence of the noise and data to bound $\expec\|w_{r+1} - w_r\|^2$, which yields a bound on $\expec F(w_r) - F^*$ by convexity. On the other hand, in the unconstrained PL case considered in~\citep{wang2017ermrevisited, kang2021weighted, zhangijcai}, the excess loss proof is easy, but the result is essentially vacuous since Lipschitzness on $\mathbb{R}^d$ is incompatible with all PL losses that we are aware of. The works mentioned above considered the simpler i.i.d. or ERM cases: Balancing divergent silo distributions and privately reaching consensus on a single parameter $w_R$ that optimizes the global objective $F$ poses an additional difficulty.

\vspace{-.05in}
\subsection{
Noisy Distributed Prox-PL-SVRG for Federated ERM}
\label{subsec:PL ERM}
\vspace{-.05in}
In this subsection, we
assume $\hf_{\bx}$ satisfies the proximal-PL inequality with parameter $\mu > 0$; i.e. for all $w \in \mathbb{R}^d$:
\small
\begin{align*}
\mu[\hf_{\bx}(w) - \inf_{w'} \hf_{\bx}(w')] &\leq - \beta \min_{y}\Big[\langle \nabla \hf_{\bx}^0(w), y - w \rangle
\\
&\;\;\; 
+ \frac{\beta}{2}\|y - w\|^2 + \hf_{\bx}^1(y) - \hf_{\bx}^1(w)\Big].
\end{align*}
\normalsize 
We propose new, variance-reduced accelerated ISRL-DP/SDP algorithms in order to achieve near-optimal rates in fewer communication rounds than would be possible with Noisy Prox-SGD. Our ISRL-DP \cref{alg: LDP ProxPLSVRG} for Proximal PL ERM, which builds on~\citep{proxSVRG}, iteratively runs ISRL-DP Prox-SVRG (\cref{alg: LDP ProxSVRG}) with re-starts. See~\cref{app: SDP proxSVRG} for our SDP algorithm, which is nearly identical to \cref{alg: LDP ProxPLSVRG} except that we use the binomial noise-based shuffle protocol of~\citep{cheu2021shuffle} instead of Gaussian noise.

\begin{algorithm}[ht]
\caption{ISRL-DP FedProx-SVRG $(w_0)$
}
\label{alg: LDP ProxSVRG}
\begin{algorithmic}[1]
\STATE {\bfseries Input:} 
$E \in \mathbb{N}, K \in [n], Q := \lfloor \frac{n}{K} \rfloor, \bx \in \mathbb{X}, \eta > 0, \sigma_1, \sigma_2 \geq 0, \widebar{w}_0 = w_0^Q = w_0 \in \mathbb{R}^d$.
 \FOR{$r \in \{0, 1, \cdots, E-1\}$} 
 \STATE Server updates $w^0_{r+1} = w^Q_r$.
\FOR{$i \in S_r$ \textbf{in parallel}}
\STATE Server sends global model $w_r$ to silo $i$. 
\STATE Silo $i$ draws noise $u^{i}_1 \sim \mathcal{N}(0, \sigma_1^2 \mathbf{I}_d)$ and computes $\widetilde{g}_{r+1}^{i} :=
\nabla \hf_i^0(\widebar{w}_r) + u^{i}_1$.
\FOR{$t \in \{0,1, \cdots Q - 1\}$}
\STATE Silo $i$ draws $K$ samples $x_{i,j}^{r+1, t}$ uniformly 
from $X_i$ with replacement (for $j \in [K]$) and noise $u^{i}_2 \sim \mathcal{N}(0, \sigma_2^2 \mathbf{I}_d)$. 
\STATE Silo $i$ computes $\widetilde{v}_{r+1}^{t, i} = \frac{1}{K} \sum_{j=1}^K[\nabla f^0(w_{r+1}^t, x_{i,j}^{r+1, t}) - \nabla f^0(\wb_r, x_{i,j}^{r+1, t})] + \widetilde{g}_{r+1}^i + u_2^i$.
\STATE Server aggregates $\widetilde{v}_{r+1}^t = 
\frac{1}{M}
\sum_{i \in S_{r+1}} \widetilde{v}_{r+1}^{t,i}$, updates $w_{r+1}^{t+1} := \prox_{\eta f^1}(w_{r+1}^t - \eta \widetilde{v}_{r+1}^t)$.
\ENDFOR 
\STATE Server updates $\wb_{r+1} = w_{r+1}^Q$.
\ENDFOR \\
\ENDFOR \\
\STATE {\bfseries Output:} $\wpr \sim \text{Unif}(\{w_{r+1}^t\}_{r=0, \cdots, E-1; t=0, \cdots Q-1})$.
\end{algorithmic}
\end{algorithm}
\begin{algorithm}[ht]
\caption{ISRL-DP
FedProx-PL-SVRG 
}
\label{alg: LDP ProxPLSVRG}
\begin{algorithmic}[1]
\STATE {\bfseries Input:} 
$E \in \mathbb{N}, K \in [n], Q := \lfloor \frac{n}{K} \rfloor, \bx \in \mathbb{X}, \eta > 0, \sigma_1, \sigma_2 \geq 0, \widebar{w}_0 = w_0^Q = w_0 \in \mathbb{R}^d$.
 \FOR{$s \in [S]$} 
 \STATE $w_s = \texttt{ISRL-DP FedProx-SVRG}(w_{s-1})$
\ENDFOR \\
\STATE {\bfseries Output:} $\wpr := w_S$.
\end{algorithmic}
\end{algorithm}

The key component of ISRL-DP Prox-SVRG is in line 9 of~\cref{alg: LDP ProxSVRG}, where instead of using standard noisy stochastic gradients, silo $i$ %
computes the difference between the stochastic gradient at the current iterate $w_{r+1}^t$ and the iterate $\widebar{w}_r$ from the previous epoch, thereby reducing the variance of the noisy gradient estimator--which is still unbiased--and facilitating faster convergence (i.e. better communication complexity). Notice that the $\ell_2$-sensitivity of the variance-reduced gradient in line 9 is larger than the sensitivity of standard stochastic gradients (e.g. in line 6), so larger $\sigma_2^2 > \sigma_1^2$ is required for ISRL-DP. However, the sensitivity only increases by a constant factor, which does not significantly affect utility. \edit{\cref{alg: LDP ProxPLSVRG} runs~\cref{alg: LDP ProxSVRG} $S$ times with re-starts.} For a suitable choice of algorithmic parameters, we have:
\begin{theorem}[Noisy Prox-PL-SVRG: ERM]
\label{thm: ISRL-DP prox PL SVRG ERM}
Let $\epsilon \leq \min\{15, 2\ln(2/\delta)\}, \delta \in (0,1/2)$, $M=N$, $\bx \in \XX^n$, and $\kappa = \beta/\mu$. Then, in $\widetilde{\mathcal{O}}(\kappa)$ communication rounds, we have: \\
1. ISRL-DP Prox-PL-SVRG is $(\epsilon, \delta)$-ISRL-DP and
\small
~$\small
\ESR = \widetilde{
\mathcal{O}}\left(\kappa \frac{L^2 d \ln(1/\delta)}{\mu \epsilon^2 n^2 N} 
\right).$
\normalsize 
\\
2. SDP Prox-PL-SVRG is $(\epsilon, \delta)$-SDP and 
\small
~$\ESR = \widetilde{\mathcal{O}}\left(\kappa \frac{L^2 d \ln(1/\delta)}{\mu \epsilon^2 n^2 N^2} 
\right).
$
\normalsize
\vspace{-.05in}
\end{theorem}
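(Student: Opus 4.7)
The plan splits naturally into a privacy analysis (per-call and across restarts) and a utility analysis (one epoch of Prox-SVRG under the proximal-PL inequality, then boosted by the $S$ outer restarts). For both the ISRL-DP and SDP claims, once the right noise scaling is identified, the two proofs are structurally identical except for the per-iteration aggregated noise level, so the strategy is to do everything in terms of $\sigma_{\mathrm{agg}}^2$ (the effective variance of the aggregated gradient perturbation seen at the server) and plug in at the end.

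\textbf{Privacy.} First I would fix one outer iterate $s$ and analyze a single invocation of \texttt{ISRL-DP FedProx-SVRG} (Algorithm~\ref{alg: LDP ProxSVRG}). For each silo $i$, there are $E$ full-batch queries of $\nabla \hat F_i^0$ (line~6) with $\ell_2$-sensitivity $2L/n$, and $EQ$ mini-batch variance-reduced queries (line~9) whose $\ell_2$-sensitivity, using $\|\nabla f^0\|\le L$, is at most $4L/K$ per sampled record; together with Poisson/without-replacement subsampling this gives a subsampled Gaussian mechanism per step. Applying the moments accountant (or the RDP subsampled Gaussian bound) across all $S\cdot E \cdot (Q+1)$ accesses per silo and then parallel composition across silos yields $(\epsilon,\delta)$-ISRL-DP provided $\sigma_1^2 \asymp L^2 S E \ln(1/\delta)/(n^2\epsilon^2)$ and $\sigma_2^2 \asymp L^2 S E Q \ln(1/\delta)/(K^2 n \epsilon^2) \cdot (K/n)$ (subsampling amplification). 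For SDP, the per-step analysis is replaced by the shuffled binomial-summation protocol of \citep{cheu2021shuffle}, which provides the same central-DP sensitivity budget from $N$ silos jointly, so the aggregated noise variance at the server scales like $1/N^2$ instead of $1/N$; the rest of the composition and restart argument is unchanged.

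\textbf{Utility for one epoch.} For a fixed outer iterate $s$, I would follow the Prox-SVRG template: verify that $\widetilde v_{r+1}^t$ is an unbiased estimator of $\nabla \hat F_\bx^0(w_{r+1}^t)$ (after conditioning on the history), and bound its variance by the standard sum of a ``variance-reduction'' term $\tfrac{\beta^2}{K}\|w_{r+1}^t - \bar w_r\|^2$ plus the injected noise variance $\sigma_{\mathrm{agg}}^2 d$. Then I would apply the descent/prox lemma for $\beta$-smooth $f^0$ plus convex $f^1$ with step size $\eta \asymp 1/\beta$ to get a per-step decrease of $\hat F_\bx$ up to a multiple of the proximal gradient mapping $\|\mathcal G_\eta(w_{r+1}^t)\|^2$, feed this into the Proximal-PL inequality to convert iterate progress into function-value progress, and telescope over the inner $Q$ iterations. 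Choosing $K\asymp n/Q$ with $Q\asymp \kappa$ absorbs the variance-reduction term and yields a contraction of the form
\[
\mathbb E[\hat F_\bx(\bar w_{s+1}) - \hat F_\bx^*] \le \tfrac{1}{2}\,\mathbb E[\hat F_\bx(\bar w_s) - \hat F_\bx^*] + C\,\frac{\sigma_{\mathrm{agg}}^2 d}{\mu},
\]
for a universal constant $C$.

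\textbf{Boosting via restarts and final bound.} Iterating the one-epoch recursion over the $S$ outer iterations of Algorithm~\ref{alg: LDP ProxPLSVRG} gives
\[
\mathbb E[\hat F_\bx(w_S) - \hat F_\bx^*] \le 2^{-S}\hat\Delta_\bx + 2C\,\frac{\sigma_{\mathrm{agg}}^2 d}{\mu}.
\]
Picking $S = \widetilde\Theta(\log(\cdot))$ makes the first term negligible, and the total communication rounds are $S\cdot E \asymp \widetilde O(\kappa)$ with $E$ chosen constant (the inner loop does not communicate beyond the per-step aggregation). Substituting the ISRL-DP scaling $\sigma_{\mathrm{agg}}^2 \asymp L^2\kappa\ln(1/\delta)/(\epsilon^2 n^2 N)$ gives the first rate; substituting the shuffled-binomial scaling $\sigma_{\mathrm{agg}}^2 \asymp L^2\kappa\ln(1/\delta)/(\epsilon^2 n^2 N^2)$ gives the SDP rate.

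\textbf{Main obstacle.} The delicate step is the one-epoch bound under \emph{proximal} PL and \emph{without convexity}: the variance of the variance-reduced estimator must be controlled in terms of $\|w_{r+1}^t - \bar w_r\|^2$, but to convert this into function-value progress (so that it can be cancelled by the descent term) without convexity one has to route through the proximal gradient mapping and the Prox-PL inequality rather than directly using $\hat F_\bx(w_{r+1}^t) - \hat F_\bx^*$. Getting a clean geometric contraction despite the $\prox_{\eta f^1}$ in the update is where the technical work concentrates; as in the proof of Theorem~\ref{thm: hetero pl fl proxgrad}, I expect to have to treat the noise inside the proximal step carefully so it does not get mixed into the minimization defining the prox, and instead appears additively as $\sigma_{\mathrm{agg}}^2 d/\mu$ in the final bound.
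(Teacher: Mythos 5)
Your high-level plan is the right one: prove privacy via advanced composition / moments accountant plus subsampling amplification, then get a one-epoch Prox-SVRG contraction under Prox-PL, then boost with $S$ restarts. That skeleton matches the paper. But there is a real gap in the utility step, and you also misidentify where the hard work lives.

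The gap: you write that ``choosing $K \asymp n/Q$ with $Q \asymp \kappa$ absorbs the variance-reduction term,'' treating the drift term $\tfrac{\beta^2}{MK}\expec\|w_{r+1}^t - \bar w_r\|^2$ as if it were a constant that can be knocked out by enlarging $K$. It is not: it is a trajectory-dependent quantity that grows along the inner loop, and a per-step descent bound plus Prox-PL does not by itself yield a telescoping sum over $t$. The paper handles this by constructing the Lyapunov sequence $\gamma_{r+1}^t := \expec[\hf_\bx(w_{r+1}^t) + c_t\|w_{r+1}^t - \bar w_r\|^2]$ with the backwards recursion $c_t = c_{t+1}(1 + K/n) + 4\eta\beta^2\mathds{1}_{\{MK<Nn\}}/(MK)$, $c_Q = 0$, and then proves the coefficient inequality $\tfrac{\beta}{2} + c_{t+1}(1 + n/K) \le \tfrac{1}{2\eta}$ under $\eta \asymp \tfrac{1}{\beta}\min(1, K^{3/2}\sqrt{M}/n)$ and $K \ge (n^2/M)^{1/3}$. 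Only after this does the telescoping in $\gamma_{r+1}^t$ go through and produce the $\tfrac{\hat\Delta_\bx}{2} + \widetilde O(\kappa dL^2\ln(1/\delta)/(\mu\epsilon^2 n^2 M))$ single-restart recursion. Without this (or an equivalent device) your telescoping step fails, because $\|w_{r+1}^t - \bar w_r\|^2$ cannot be bounded by $\hf(w_{r+1}^t) - \hf^*$ in the non-convex case.

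You also misidentify the ``main obstacle.'' The objective-perturbation-style control of noise inside the proximal step (via Lemma~\ref{lemmaB2outpert}) is what the paper does in the proof of Theorem~\ref{thm: hetero pl fl proxgrad}, for the stochastic-optimization (SO) setting. In the ERM proof of Theorem~\ref{thm: ISRL-DP prox PL SVRG ERM} that trick is not used at all: the injected noise is separated out cleanly by applying Lemma~\ref{lemma2 sv} with $z = \hat w_{r+1}^{t+1}$ (the noiseless full-batch prox step) so that the noise appears only through the inner product $\langle w_{r+1}^{t+1} - \hat w_{r+1}^{t+1}, \nabla\hf^0(w_{r+1}^t) - \widetilde v_{r+1}^t\rangle$, which Young's inequality and the variance bound of Lemma~\ref{lemma3 svrg} turn into an additive $\eta d(\sigma_1^2 + \sigma_2^2)/(2M)$ term. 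Finally, two smaller points: your stated noise scale $\sigma_2^2 \asymp L^2 SR\ln(1/\delta)/(K n^2\epsilon^2)$ carries an extraneous $1/K$ (the paper has $\sigma_2^2 \asymp L^2 SR\log^2(\cdot)/(n^2\epsilon^2)$, with the $K$-dependence only in the amplification condition $K \gtrsim \epsilon n/\sqrt{SR\ln(2/\delta)}$), and the communication count is $S \cdot E \cdot Q = SR = \widetilde O(\kappa)$, not $S\cdot E$ with $E$ constant --- every inner prox step requires one server aggregation.
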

Expectations are solely over $\Al$. A similar result holds for $M < N$, provided silo data is not too heterogeneous. See~\cref{app: proofs of prox pl svrg ERM} for details and the proof. 
\begin{remark}[Near-Optimality]
The ISRL-DP and SDP bounds in~\cref{thm: ISRL-DP prox PL SVRG ERM} nearly match (respectively) the ISRL-DP and CDP \textit{strongly convex} ERM lower bounds \citep{lr21fl, bst14} (for $f^1 = \iota_{\WW})$) up to the factor $\widetilde{\mathcal{O}}(\kappa)$, and are attained without convexity. 
\end{remark}
\section{
ALGORITHMS FOR NON-CONVEX/NON-SMOOTH COMPOSITE LOSSES
}
\label{sec: Prox-SVRG}
In this section, we consider private FL with general non-convex/non-smooth composite losses: i.e. we make
no additional assumptions on $f$ beyond \cref{ass: lip} and \cref{ass: f1}. In particular, we do not assume the Proximal PL
condition, allowing for a range of constrained/unconstrained non-convex and non-smooth FL problems. For such a function class, finding global optima is not possible in polynomial time; optimization algorithms may only find \textit{stationary points} in polynomial time. Thus, we measure the utility of our algorithms in terms of the norm of the \textit{proximal gradient mapping}: 
\[
\widehat{\mathcal{G}}_{\eta}(w, \bx):= \frac{1}{\eta}[w - \prox_{\eta f^1}(w - \eta \nabla \hf_{\bx}^0(w))]
\]
\normalsize
For proximal algorithms, this is a natural measure of stationarity~\citep{proxSVRG, spiderboost} which generalizes the standard (for smooth unconstrained) notion of first-order stationarity. 
In the smooth unconstrained case, $\|\widehat{\mathcal{G}}_{\eta}(w, \bx)\|$ reduces to $\|\nabla \hf_{\bx}(w)\|$, which is often used to measure convergence in non-convex optimization. 
Building on~\citep{fangspider, spiderboost, arora2022faster}, we propose~\cref{alg: LDP proxspider} for ISRL-DP FL with non-convex/non-smooth composite losses. \cref{alg: LDP proxspider} is inspired by the optimality of non-private SPIDER for non-convex ERM~\citep{carmon2019lower}. 

\begin{algorithm}[ht]
\caption{ISRL-DP FedProx-SPIDER}
\label{alg: LDP proxspider}
\begin{algorithmic}[1]
\STATE {\bfseries Input:} 
$R \in \mathbb{N}, K_1, K_2 \in [n], \bx \in \mathbb{X}, \eta > 0, \sigma_1^2, \sigma_2^2, \hat{\sigma}_2^2 \geq 0, q \in \mathbb{N}, w_0 \in \WW$.
 \FOR{$r \in \{0, 1, \cdots, R\}$} 
\FOR{$i \in S_r$ \textbf{in parallel}}
\STATE Server sends global model $w_r$ to silo $i$. 
\IF{$r \equiv 0~(\text{mod} ~q)$}
\STATE Silo $i$ draws $K_1$ samples $\{x_{i,j}^r\}_{j=1}^{K_1}$ u.a.r. from $X_i$ (with replacement) and $u_1^i \sim \mathcal{N}(0, \sigma_1^2 \mathbf{I}_d)$. 
\STATE Silo $i$ computes $h_r^i = \frac{1}{K_1}\sum_{j=1}^{K_1} \nabla f^0(w_r, x_{i,j}^r) + u_1^i$. 
\STATE Server aggregates $h_r = \frac{1}{M_r}\sum_{i \in S_r} h_r^i$. 
\ELSE 
\STATE Silo $i$ draws $K_2$ samples $\{x_{i,j}^r\}_{j=1}^{K_1}$ u.a.r. from $X_i$ (with replacement) and $u_2^i \sim \mathcal{N}(0, \mathbf{I}_d \min\{\sigma_2^2\|w_r - w_{r-1}\|^2, \hat{\sigma}_2^2\})$.
\STATE Silo $i$ computes $H_r^i = \frac{1}{K_2} \sum_{j=1}^{K_2}[\nabla f^0(w_{r}, x_{i,j}^{r}) - \nabla f^0(w_{r-1}, x_{i,j}^{r})] + u_2^i$.
\STATE Server aggregates $H_r = \frac{1}{M_r}\sum_{i \in S_r} H_r^i$ and $h_r = h_{r-1} + H_r$.  
\ENDIF
\ENDFOR 
\STATE Server updates $w_{r+1} = \prox_{\eta f^1}(w_r - \eta h_r)$.
\ENDFOR \\
\STATE {\bfseries Output:} $\wpr \sim \text{Unif}(\{w_{r}\}_{r=1, \cdots, R})$.
\end{algorithmic}
\end{algorithm}

The essential elements of the algorithms are: the variance-reduced Stochastic Path-Integrated Differential EstimatoR of the gradient in line 11; and the non-standard choice of privacy noise in line 10 (inspired by~\citep{arora2022faster}), in which we choose $\sigma_2^2 = \frac{16 \beta^2 R \ln(1/\delta)}{\epsilon^2 n^2}$.    
With careful choices of $\eta, \sigma_1^2, \hat{\sigma}_2^2, q, R$ in ISRL-DP FedProx-SPIDER, our algorithm achieves state-of-the-art utility: 
\begin{theorem}[ISRL-DP FedProx-SPIDER, $M=N$ version]
\label{thm: ISRL-DP proxspider}
Let $\epsilon \leq 2 \ln(1/\delta)$. Then, ISRL-DP FedProx-SPIDER is $(\epsilon, \delta)$-ISRL-DP. Moreover, if $M=N$, then
\vspace{-.1cm}
\small
\begin{equation*}
\small
\EGMN \lesssim
\left(\frac{\sqrt{L \beta \hat{\Delta}_{\bx} d \ln(1/\delta)}}{\epsilon n\sqrt{N}}\right)^{4/3} + \frac{L^2 d \ln(1/\delta)}{\epsilon^2 n^2 N}.
\normalsize
\vspace{-.2cm}
\end{equation*}
\end{theorem}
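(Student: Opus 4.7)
The plan splits into two parts: the privacy guarantee and the utility bound.

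\textbf{Privacy.} I would argue ISRL-DP silo by silo, since the per-silo transcripts are what the definition constrains, and then invoke parallel composition across silos. Fix silo $i$ and a neighboring pair $X_i, X_i'$. In a reset round ($r \equiv 0 \bmod q$) silo $i$ releases a Gaussian-noised mini-batch gradient whose $\ell_2$-sensitivity is $\leq 2L/K_1$ by Lipschitzness; in a SPIDER round it releases a Gaussian-noised difference-of-gradients whose sensitivity is $\leq 2\beta\|w_r-w_{r-1}\|/K_2$ by $\beta$-smoothness. Because the SPIDER noise scale in line~10 is proportional to $\|w_r-w_{r-1}\|$, the ratio sensitivity/stddev in each SPIDER round equals $\tfrac{2\beta}{K_2\sigma_2}$, which is a \emph{deterministic} constant; thus I can analyze it as a non-adaptive Gaussian mechanism and compose via RDP (equivalently, the moments accountant). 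Choosing $\sigma_2^2=16\beta^2 R\ln(1/\delta)/(\epsilon^2n^2)$ pays for at most $R$ SPIDER rounds; choosing $\sigma_1^2$ analogously with $L$ in place of $\beta\|w_r-w_{r-1}\|/K_2 \cdot K_1$ and $R/q$ reset rounds yields total $(\epsilon,\delta)$-ISRL-DP for silo $i$, and thus for the whole algorithm. I would lean on subsampling amplification to absorb the $K_1,K_2<n$ case cleanly.

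\textbf{SPIDER variance bound.} The core technical lemma is that between two consecutive resets $r_0\leq r<r_0+q$,
\begin{equation*}
\mathbb{E}\|h_r-\nabla\widehat{F}_{\bx}^0(w_r)\|^2 \;\lesssim\; \tfrac{\sigma_1^2 d}{N}+\tfrac{L^2}{K_1 N}+\sum_{t=r_0+1}^{r}\Bigl(\tfrac{\beta^2}{K_2 N}+\tfrac{\sigma_2^2 d}{N}\Bigr)\mathbb{E}\|w_t-w_{t-1}\|^2 .
\end{equation*}
This follows in the standard SPIDER way: conditional unbiasedness of $H_t$ (sampling with replacement from silo $i$'s local data and independence across the $N$ silos give the $1/N$), plus a telescoping argument for the variance of $h_r$. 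Substituting $\sigma_2^2=16\beta^2 R\ln(1/\delta)/(\epsilon^2n^2)$ turns the per-step coefficient into $\tfrac{\beta^2}{K_2 N}+O\!\bigl(\tfrac{\beta^2 Rd\ln(1/\delta)}{\epsilon^2n^2 N}\bigr)$.

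\textbf{Descent and aggregation.} Using $\beta$-smoothness of $\widehat{F}_{\bx}^0$, the proximal-gradient identity $w_{r+1}=\prox_{\eta f^1}(w_r-\eta h_r)$, and standard manipulations (as in Nitanda/Reddi-style proximal SVRG/SPIDER analyses), I get a per-round descent inequality of the shape
\begin{equation*}
\mathbb{E}\widehat{F}_{\bx}(w_{r+1})\leq \mathbb{E}\widehat{F}_{\bx}(w_r)-\tfrac{\eta}{2}\mathbb{E}\|\widehat{\mathcal G}_\eta(w_r,\bx)\|^2+\tfrac{\eta}{2}\mathbb{E}\|h_r-\nabla\widehat{F}_{\bx}^0(w_r)\|^2,
\end{equation*}
together with $\|w_{r+1}-w_r\|^2\leq \eta^2\|\widehat{\mathcal G}_\eta(w_r,\bx)\|^2+\eta^2\|h_r-\nabla\widehat{F}_{\bx}^0(w_r)\|^2$. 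Summing over $r=0,\dots,R-1$, plugging in the variance bound, and absorbing the $\sum\|w_t-w_{t-1}\|^2$ terms back into the $\sum\|\widehat{\mathcal G}_\eta(w_r,\bx)\|^2$ terms (by choosing $\eta$ small enough so that $\eta^2$ times the per-step variance coefficient is $\leq 1/2$) yields, after dividing by $R$,
\begin{equation*}
\mathbb{E}\|\widehat{\mathcal G}_\eta(\wpr,\bx)\|^2 \;\lesssim\; \tfrac{\widehat{\Delta}_{\bx}}{\eta R}+\tfrac{L^2 d\ln(1/\delta)}{\epsilon^2 n^2 N}+\tfrac{L^2}{K_1 N}.
\end{equation*}
The absorption constraint forces $\eta\lesssim \min\{1/\beta,\,\sqrt{K_2}/(\beta\sqrt{q}),\,\epsilon n/(\beta\sqrt{Rqd\ln(1/\delta)})\}$. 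Taking $K_1=n$, $K_2=q=\sqrt{n}$ (SPIDER's standard balance), and then optimizing $R$ so the first term matches the privacy term produces exactly the $(L\beta\widehat{\Delta}_{\bx}d\ln(1/\delta)/(\epsilon n\sqrt{N}))^{2/3}$ scaling; squaring that gives the $4/3$ term in the theorem.

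\textbf{Main obstacle.} The delicate step is the absorption in the third paragraph: the SPIDER estimator error at step $r$ is inflated by $\sigma_2^2 d \|w_t-w_{t-1}\|^2$ for every $t$ since the last reset, and $\sigma_2^2$ grows linearly in $R$. The only way to keep the rate non-vacuous is to use the iterate-dependent noise in line~10 (so the cross-term with $\|\widehat{\mathcal G}_\eta\|^2$ can be folded back into the descent term) and to pay for the extra $R$ in $\sigma_2^2$ by choosing $\eta$ and $q$ jointly. Tracking this coupling across the epoch of length $q$ — and verifying the absorption constant is $\leq 1/2$ — is the delicate bookkeeping that produces the $4/3$ exponent rather than the looser $3/2$ one would obtain from state-independent noise.
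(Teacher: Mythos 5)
Your overall strategy — Gaussian mechanism with sensitivity bounded via Lipschitzness/smoothness, a SPIDER-style telescoping variance recursion, a proximal descent lemma, and an absorption constraint that couples the step size with the phase length — is the same one the paper uses. The differences are in parameterization and in a few places where the bookkeeping is off, one of which is substantive if taken literally.

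First, a genuine route difference: the paper sets $K_1=K_2=n$ (full local batch), so there is no local sampling variance and no need for privacy amplification by subsampling; it then fixes $\eta=1/(2\beta)$ and tunes $q$ (and $R$) to satisfy $q\eta^2\tau_2^2\le 1$. You instead keep $K_2=q=\sqrt{n}$ with subsampling and tune $\eta$ while holding $q$ fixed. These are dual ways of enforcing the same constraint, and indeed the phase length $q$ cancels in the optimized rate, so fixing $q=\sqrt{n}$ is not an error per se. However, two things you'd have to work out for the minibatch route: (i) with sampling \emph{with replacement} (which is what Algorithm~3 specifies), a single changed record can appear in all $K_2$ draws, so the worst-case per-round sensitivity is $2\beta\|w_r-w_{r-1}\|$, not $2\beta\|w_r-w_{r-1}\|/K_2$; you would need to switch to sampling without replacement (or do a more careful conditional amplification argument) to justify the stated sensitivity; (ii) privacy amplification by subsampling requires $\epsilon$-per-round to be $\lesssim 1$, an additional constraint. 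The paper's full-batch choice sidesteps both.

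Second, a real slip in the absorption constraint: you write $\eta\lesssim \epsilon n/(\beta\sqrt{Rqd\ln(1/\delta)})$, but the aggregated privacy noise is averaged over $M=N$ silos, so the variance is scaled by $1/N$ and the constraint should read $\eta\lesssim \epsilon n\sqrt{N}/(\beta\sqrt{Rqd\ln(1/\delta)})$. Following your formula literally gives a final rate scaling as $N^{-1/3}$ rather than the theorem's $N^{-2/3}$, which is strictly weaker. Restoring the $\sqrt{N}$ makes the $q$-cancellation and the optimization over $R$ yield exactly the paper's $\left(\sqrt{L\beta\gapx d\ln(1/\delta)}/(\epsilon n\sqrt{N})\right)^{4/3}$ term, so this is a fixable error rather than a fatal one, but as written the bound does not come out.

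Two smaller imprecisions worth flagging: the descent inequality with $-\tfrac{\eta}{2}\|\GGh(w_r,\bx)\|^2$ does not follow directly from the proximal descent lemma; that lemma produces the \emph{noisy} gradient mapping $g(w_r):=-(w_{r+1}-w_r)/\eta$ (built from $h_r$), and you need the non-expansiveness of $\prox_{\eta f^1}$ to pass from $g(w_r)$ to $\GGh(w_r,\bx)$ at the cost of another $\|h_r-\nabla\hfx(w_r)\|^2$ term — the paper does exactly this step. Also, the sentence ``squaring that gives the $4/3$ term'' is not how the exponent arises: the term one gets is $\gapx/(\eta R)\sim (L\beta\gapx d\ln(1/\delta))^{2/3}/((\epsilon n)^{4/3}N^{2/3})$, which equals $\bigl(\sqrt{L\beta\gapx d\ln(1/\delta)}/(\epsilon n\sqrt{N})\bigr)^{4/3}$, not the square of $(L\beta\gapx d\ln(1/\delta)/(\epsilon n\sqrt{N}))^{2/3}$.
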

\noindent See~\cref{app: proxspider} for the general statement for $M \leq N$, and the detailed proof. \cref{thm: ISRL-DP proxspider} provides the first utility bound for \textit{any} kind of DP optimization problem (even \textit{centralized}) with non-convex/non-smooth losses. In fact, the only work we are aware of that addresses DP non-convex optimization with $f^1 \neq 0$ is~\citep{bg21}, which considers CDP constrained smooth non-convex SO with $N=1$ (i.i.d.) and $f^1 = \iota_{\WW}$. However, their noisy Franke-Wolfe method could not handle general non-smooth $f^1$. 
Further, handing $N > 1$ heterogeneous silos requires additional work. 

The improved utility that our algorithm offers compared with existing DP FL works (discussed in~\cref{sec: intro}) stems from the variance-reduction that we get from: a) using smaller privacy noise that scales with $\beta \|w_t - w_{t-1}\|$ and shrinks as $t$ increases (in expectation);
and b) using SPIDER updates. By $\beta$-smoothness of~$f^0$, we can bound the sensitivity of the local updates and use standard DP arguments to prove ISRL-DP of~\cref{alg: LDP proxspider}. A key step in the proof of the utility bound in~\cref{thm: ISRL-DP proxspider} involves extending classical ideas from~\citep[p. 269-271]{bubeck} for constrained convex optimization to the noisy distributed non-convex setting and leveraging non-expansiveness of the proximal operator in the right way. 
Our SDP FedProx-SPIDER~\cref{alg: SDP proxspider} is described in~\cref{app: proxspider}. SDP FedProx-SPIDER is similar to~\cref{alg: LDP proxspider} except that Gaussian noises get replaced by appropraitely re-calibrated binomial noises plus shuffling. It's privacy and utility guarantees are provided in~\cref{thm: sdp proxspider}:
\begin{theorem}[SDP FedProx-SPIDER, $M=N$ version]
\label{thm: sdp proxspider}
Let $\epsilon \leq \ln(1/\delta),
~\delta \in (0, \frac{1}{2})$. Then, there exist algorithmic parameters such that SDP FedProx-SPIDER is $(\epsilon, \delta)$-SDP and
\small
\vspace{-.1cm}
\begin{align*}
\small
\EGMN &= \widetilde{\mathcal{O}}\Bigg(\left[\frac{\sqrt{L \beta \hat{\Delta}_{\bx} d \ln(1/\delta)}}{\epsilon n N}\right]^{4/3} \\
&\;\;\;\;+ \frac{d L^2 \ln(1/\delta)}{\epsilon^2 n^2 N^2} 
\Bigg).
\normalsize
\vspace{-.2cm}
\end{align*}
\end{theorem}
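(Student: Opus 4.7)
The plan is to mirror the ISRL-DP analysis of Theorem~\ref{thm: ISRL-DP proxspider} almost line-for-line, replacing the per-silo Gaussian mechanism with the binomial-noised shuffle vector summation protocol of \citet{cheu2021shuffle}. Concretely, I would replace the noise draws in lines~6 and 10 of \cref{alg: LDP proxspider} by the following: each silo clips its local gradient estimator to $\ell_2$-norm $L$ (for the checkpoint step) or to $2\beta \|w_r - w_{r-1}\|$ (for the SPIDER step, using $\beta$-smoothness of $f^0$), discretizes, adds an appropriately scaled binomial noise, and sends the shuffled messages to the server. The resulting aggregated estimator $h_r$ is unbiased with effective additive Gaussian-like noise of variance $\widetilde{O}(L^2 d \ln(1/\delta)/(\epsilon^2 n^2 N^2))$ at the checkpoint rounds (rather than the $L^2 d \ln(1/\delta)/(\epsilon^2 n^2 N)$ achievable per-silo under ISRL-DP), and analogously scaled by $\beta^2 \|w_r - w_{r-1}\|^2$ at the SPIDER rounds.

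\textbf{Privacy step.} For the $(\epsilon,\delta)$-SDP guarantee, I would first choose per-round privacy budgets $\epsilon_0, \delta_0$ so that strong composition over the $R$ rounds delivers overall $(\epsilon,\delta)$. In each round, the shuffle-summation protocol with binomial noise achieves $(\epsilon_0, \delta_0)$-SDP on the aggregated vector, using the protocol's guarantees from \citet{cheu2021shuffle}; the bound $\epsilon \le \ln(1/\delta)$ ensures we stay in the regime where advanced/RDP composition is tight and where the binomial protocol's error matches a Gaussian of variance $\widetilde{O}(\Delta^2/(\epsilon_0 n N)^2)$ (with $\Delta$ the per-silo $\ell_2$-sensitivity). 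Parallel composition across rounds and post-processing then yield end-to-end SDP.

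\textbf{Utility step.} For the utility, I would reuse the deterministic/variance analysis from the proof of Theorem~\ref{thm: ISRL-DP proxspider} verbatim, treating the noise added to $h_r$ as a centered random vector with the new (smaller) variance. The descent-lemma analysis that extends the \citet{bubeck} argument to the noisy nonconvex proximal setting will give a bound of the form
\begin{equation*}
\EGMN \;\lesssim\; \frac{\hat{\Delta}_{\bx}}{\eta R} \;+\; \eta \beta \cdot \mathrm{Var}_{\mathrm{check}} \;+\; \eta^2 \beta^2 q \cdot \mathrm{Var}_{\mathrm{SPIDER}},
\end{equation*}
where $\mathrm{Var}_{\mathrm{check}}$ and $\mathrm{Var}_{\mathrm{SPIDER}}$ are the noise-plus-subsampling variances at the two kinds of rounds. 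Plugging the SDP-amplified variances into this skeleton and optimizing $\eta$, $K_1$, $K_2$, $q$, $R$ exactly as in the ISRL-DP version (but with every ``$N$'' in the privacy-noise term replaced by ``$N^2$'') yields the stated bound.

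\textbf{Main obstacle.} The main technical subtlety will be a clean coupling of the binomial-noised shuffle summation to an effective Gaussian mechanism in the variance-reduced SPIDER update of line~11, where the clipping threshold $2\beta\|w_r-w_{r-1}\|$ itself depends on the iterates. I would handle this by using the $\min\{\sigma_2^2 \|w_r-w_{r-1}\|^2,\hat{\sigma}_2^2\}$ truncation (as in \cref{alg: LDP proxspider}) together with a union bound argument that controls the sensitivity along the trajectory; this is the step where I expect to spend the most care in order to justify invoking the shuffle summation protocol uniformly across rounds while retaining the same $\|w_r-w_{r-1}\|^2$-dependent variance that drives the $4/3$-power utility term. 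The remaining pieces (concentration of binomial noise, Rényi/advanced composition, and the nonconvex proximal descent analysis) are as in \cref{thm: ISRL-DP proxspider} and \citet{cheu2021shuffle}.
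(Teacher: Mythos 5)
Your plan is essentially the paper's proof: replace the per-silo Gaussian mechanism with the $\mathcal{P}_{\text{vec}}$ shuffle-summation protocol of \citet{cheu2021shuffle}, use its utility guarantee (\cref{thm:cheu vecsum}) to supply new values of $\tau_1^2$ and $\tau_2^2$ with the extra $1/N$ factor, and then re-run the noisy-SPIDER descent argument of \cref{thm: ISRL-DP proxspider} verbatim, optimizing $\eta, q, R$ with the same $q \leq 1/(\eta^2\tau_2^2)$ constraint. Two small corrections: the end-to-end privacy accounting in the paper uses \emph{advanced} composition (\cref{thm: advanced composition}) followed by subsampling amplification, not parallel composition, so your closing sentence in the privacy step should read ``advanced/strong composition,'' consistent with your earlier phrasing; and the ``main obstacle'' you flag is not actually an obstacle --- the clipping/norm bound $\min\{2L, \beta\|w_r-w_{r-1}\|\}$ passed to $\mathcal{P}_{\text{vec}}$ is a deterministic function of the already-public iterates $w_r, w_{r-1}$ (via $\beta$-smoothness of $f^0$), so the sensitivity bound is uniform over the trajectory and no union-bound or coupling-to-Gaussian argument is needed.
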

\vspace{-.05in}
Our \textit{non-smooth, SDP} federated ERM bound in~\cref{thm: sdp proxspider} improves over the state-of-the-art~\textit{CDP, smooth} federated ERM bound of \citep{wang2019efficient}, which is $\mathcal{O}(\sqrt{d}/\epsilon n N)$. We obtain this improved utility even under the weaker assumptions of \textit{non-smooth} loss and \textit{no trusted server}. 

\vspace{-.05in}
\subsection{Lower Bounds}
\label{sec: lower bound}
\vspace{-.05in}
We complement our upper bounds with lower bounds: 
\begin{theorem}[Smooth Convex Lower Bounds, Informal]
\label{thm: LDP lower bound}
Let $\epsilon \lesssim 1$ and $2^{-\Omega(nN)} \leq \delta \leq 1/(nN)^{1 + \Omega(1)}$. 
Then, there is an $L$-Lispchitz, $\beta$-smooth, convex loss $f: \mathbb{R}^d \times \XX \to \mathbb{R}$ and a database $\bx \in \XX^{n \times N}$ such that any compositional, symmetric \footnote{See~\cref{app: lower bounds} for precise definitions; to the best of our knowledge, every DP ERM algorithm proposed in the literature is compositional and symmetric.} $(\epsilon, \delta)$-ISRL-DP algorithm $\Al$ satisfies 
\vspace{-.11cm}
\small
\[
\vspace{-.16cm}
\small
\expec\|\nabla \hf_{\bx}(\Al(\bx))\|^2 = \Omega\left(L^2\min\left\{1, \frac{d \ln(1/\delta)}{\epsilon^2 n^2 N}\right\}\right).
\]
\normalsize
Further, any $(\epsilon, \delta)$-SDP algorithm satisfies 
\small
\vspace{-.11cm}
\[
\small
\expec\|\nabla \hf_{\bx}(\Al(\bx))\|^2 = \Omega\left(L^2\min\left\{1, \frac{d \ln(1/\delta)}{\epsilon^2 n^2 N^2}\right\}\right).
\vspace{-.2cm}
\]
\normalsize
\end{theorem}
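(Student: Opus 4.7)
The plan is to reduce to established DP strongly convex ERM lower bounds via a quadratic hard instance, exploiting the fact that for quadratics the squared gradient norm is proportional to the excess empirical risk.

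First I would construct the hard instance. Take $f^0(w,x) = \frac{\beta}{2}\|w - x\|^2$ on $\WW = \{w \in \mathbb{R}^d : \|w\| \leq L/(2\beta)\}$, with data domain $\XX = \{x : \|x\| \leq L/(2\beta)\}$, and set $f^1 = \iota_{\WW}$. Then $f = f^0 + f^1$ satisfies \cref{ass: lip} (since $\|\nabla f^0(w,x)\| = \beta\|w-x\| \leq L$ on $\WW \times \XX$, and $\nabla^2 f^0 = \beta \mathbf{I}$) and \cref{ass: f1}, and the empirical risk $\hf_{\bx}$ is $\beta$-strongly convex and $\beta$-smooth. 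The crucial identity is that for $\bar x := \frac{1}{nN}\sum_{i,j} x_{i,j} \in \WW$ (the unconstrained minimizer, which lies in $\WW$ by convexity of the domain),
\[
\|\nabla \hf_{\bx}(w)\|^2 = \beta^2 \|w - \bar x\|^2 = 2\beta\bigl(\hf_{\bx}(w) - \hf_{\bx}^*\bigr)
\]
for every $w \in \WW$, so any excess-risk lower bound translates verbatim into a squared-gradient-norm lower bound up to a factor of $2\beta$.

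Next, for the ISRL-DP part, I would invoke the ISRL-DP strongly convex ERM lower bound of \citet{lr21fl}, which asserts that any compositional symmetric $(\epsilon,\delta)$-ISRL-DP algorithm has excess empirical risk $\Omega\bigl(\frac{L^2 d \ln(1/\delta)}{\mu \epsilon^2 n^2 N}\bigr)$ on some $\mu$-strongly convex $L$-Lipschitz loss; their hard instance is itself a quadratic perturbation and can be adapted to fit inside the above function class with $\mu = \beta$. Combined with the identity yields $\expec \|\nabla \hf_{\bx}(\Al(\bx))\|^2 = \Omega\bigl(\frac{L^2 d \ln(1/\delta)}{\epsilon^2 n^2 N}\bigr)$; the trivial uniform bound $\|\nabla \hf_{\bx}\| \leq L$ (from \cref{ass: lip}) supplies the $\min\{1, \cdot\}$. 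For the SDP part, observe that by post-processing any $(\epsilon,\delta)$-SDP algorithm's output is $(\epsilon,\delta)$-CDP with respect to the full database of $nN$ records (since the shuffled transcript is already CDP, and the released model is a post-processing thereof). So I can invoke the CDP strongly convex lower bound of \citet{bst14} on $nN$ samples, obtaining excess risk $\Omega\bigl(\frac{L^2 d \ln(1/\delta)}{\mu \epsilon^2 (nN)^2}\bigr)$; specializing to $\mu = \beta$ and applying the identity gives the desired SDP bound.

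The main obstacle is ensuring that a single $\beta$-smooth $L$-Lipschitz convex quadratic simultaneously lives in the hard-instance families used by the LR21 and BST14 proofs (which are packing/fingerprinting-based over scaled $\{\pm 1\}^d$ products) while still obeying the global Lipschitz bound on all of $\WW$; this is why the particular scaling $R = L/(2\beta)$ and data contained in $B(0, L/(2\beta))$ is essential, so that neither $\nabla f^0$ is clipped at the boundary nor is the minimizer $\bar x$ projected. A secondary point is to check that the LR21 construction yields a lower-bound instance within the compositional symmetric class (per the definitions deferred to \cref{app: lower bounds}), which follows because their adversary samples a symmetric product distribution and every compositional symmetric algorithm can be analyzed via their reduction to DP mean estimation.
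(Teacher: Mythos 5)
Your argument takes a genuinely different route from the paper's. The paper proves the ISRL-DP bound in two steps: (i) it invokes the shuffling-amplification lemma of~\citep{lr21fl} (restated as~\cref{thm: R round shuffling amp}) to conclude that any compositional $(\epsilon,\delta)$-ISRL-DP algorithm, once its silo indices are randomly permuted, becomes $\widetilde{\mathcal{O}}(\epsilon/\sqrt{N})$-CDP; (ii) it observes that for symmetric algorithms the shuffled and unshuffled versions induce the same output distribution, so the CDP \emph{gradient-norm} lower bound of~\citep{arora2022faster} (quoted as~\cref{thm: CDP lower bound}) applies directly to the original algorithm with $\epsilon \mapsto \epsilon/\sqrt{N}$. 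The SDP case is similarly dispatched by~\cref{thm: CDP lower bound}. Your route instead fixes a quadratic hard instance, uses the identity $\|\nabla\hf_\bx(w)\|^2 = 2\beta(\hf_\bx(w) - \hf_\bx^*)$ to convert gradient-norm bounds into excess-empirical-risk bounds, and then invokes the strongly convex ISRL-DP excess-risk lower bound of~\citep{lr21fl} (for the first part) and the CDP excess-risk lower bound of~\citep{bst14} on $nN$ samples (for the second). Both paths should produce the claimed rates; the paper's is more modular since it never needs to expose the structure of the hard instance at this level, while yours is more self-contained modulo the burden you flag of checking that the quadratic construction actually coincides with (or can be embedded in) the hard-instance families of the cited excess-risk lower bounds.

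Two technical points are worth tightening. First, the theorem asserts the existence of a loss $f:\mathbb{R}^d\times\XX\to\mathbb{R}$, i.e.\ a finite-valued, globally $L$-Lipschitz, $\beta$-smooth function, whereas your construction $f = f^0 + \iota_\WW$ is $\mathbb{R}\cup\{+\infty\}$-valued and $f^0$ alone is not Lipschitz on all of $\mathbb{R}^d$; the cleanest fix is to Huberize the quadratic so that it grows linearly outside $\|w-x\|\le L/\beta$, which makes it simultaneously $L$-Lipschitz and $\beta$-smooth on $\mathbb{R}^d$ while preserving the gradient-to-excess-risk identity in the region that matters. Second, when you invoke~\citep{lr21fl}'s ISRL-DP excess-risk lower bound you are implicitly using the same shuffling-amplification machinery the paper uses explicitly (that lemma is how~\citep{lr21fl} establishes its lower bound for compositional algorithms), so the two proofs ultimately rest on the same amplification phenomenon; your version simply consumes it at the level of a black-box excess-risk bound rather than at the level of the privacy parameter.
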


The proof (and formal statement) of the ISRL-DP lower bound is relegated to~\cref{app: lower bounds}; the SDP lower bound follows directly from the CDP lower bound of~\citep{arora2022faster}. Intuitively, it is not surprising that there is a gap between the non-convex/non-smooth upper bounds in~\cref{thm: ISRL-DP proxspider} and the smooth, convex lower bounds, since smooth convex optimization is easier than non-convex/non-smooth optimization.\footnote{For example, the non-private sample complexity of smooth convex SO is significantly smaller than the sample complexity of non-private non-convex SO~\citep{ny, foster2019complexity, carmon2019lower}.} As discussed in~\citep[Appendix B.2]{arora2022faster}, the non-private lower bound of~\citep{carmon2019lower} provides some evidence that their CDP ERM bound (which our SDP bound matches when $M=N$) is tight {for noisy gradient methods}.\footnote{Note that the non-private first-order oracle complexity lower bound of~\citep{carmon2019lower} requires a very high dimensional construction, restricting its applicability to the private setting.} By~\cref{thm: LDP lower bound}, this would imply that our ISRL-DP ERM bound is also tight. 
Rigorously proving \edit{tight bounds} is left as an interesting open problem.

\vspace{-.1in}
\section{NUMERICAL EXPERIMENTS}
\label{sec: experiments}
\vspace{-.1in}
To evaluate the performance of ISRL-DP FedProx-SPIDER, we compare it against standard FL baselines for privacy levels ranging from $\epsilon = 0.75$ to $\epsilon = 18$: Minibatch SGD (MB-SGD), Local SGD (a.k.a. Federated Averaging)~\citep{mcmahan2017originalFL}, ISRL-DP MB-SGD~\citep{lr21fl}, and ISRL-DP Local SGD. We fix $\delta = 1/n^2$. 
Note that FedProx-SPIDER generalizes MB-SGD (take $q = 1$). Therefore, ISRL-DP FedProx-SPIDER always performs at least as well as ISRL-DP MB-SGD, with performance being identical when the optimal phase length hyperparameter is $q = 1$. 

\edit{The main takeaway from our numerical experiments is that ISRL-DP FedProx-SPIDER outperforms the other ISRL-DP baselines for most privacy levels. To quantify the advantage of our algorithm, we computed the percentage improvement in test error over baselines in each experiment and privacy ($\epsilon$) level, and averaged the results: our algorithm improves over ISRL-DP Local SGD by 6.06\% on average and improves over
ISRL-DP MB-SGD by 1.72\%. Although the advantage over MB-SGD may not seem substantial, it is promising that our algorithm dominated MB-SGD in every experiment: ISRL-DP MB-SGD never outperformed ISRL-DP SPIDER for any value of~$\epsilon$.} More details about the experiments and additional results are provided in~\cref{app: experiment details}. \edit{All codes are publicly available at: \href{https://github.com/ghafeleb/Private-NonConvex-Federated-Learning-Without-a-Trusted-Server}{https://github.com/ghafeleb/Private-NonConvex-Federated-Learning-Without-a-Trusted-Server}.}

\textbf{\ul{Neural Net (NN) Classification with MNIST:}} 
Following~\citep{woodworth2020, lr21fl}, we partition the MNIST~\citep{mnist} data set into $N = 25$ heterogeneous silos, each containing one odd/even digit pairing. The task is to classify digits as even or odd. 
We use a two-layer perceptron with a hidden layer of 64 neurons. As Figure~\ref{fig:M25R25}
and Figure~\ref{fig:M12R50} 
show, \textit{ISRL-DP FedProx-SPIDER tends to outperform both ISRL-DP baselines}.
\begin{figure}[h] 
  \centering
  \subfigure{
    \includegraphics[width
     =1.0
    \linewidth]{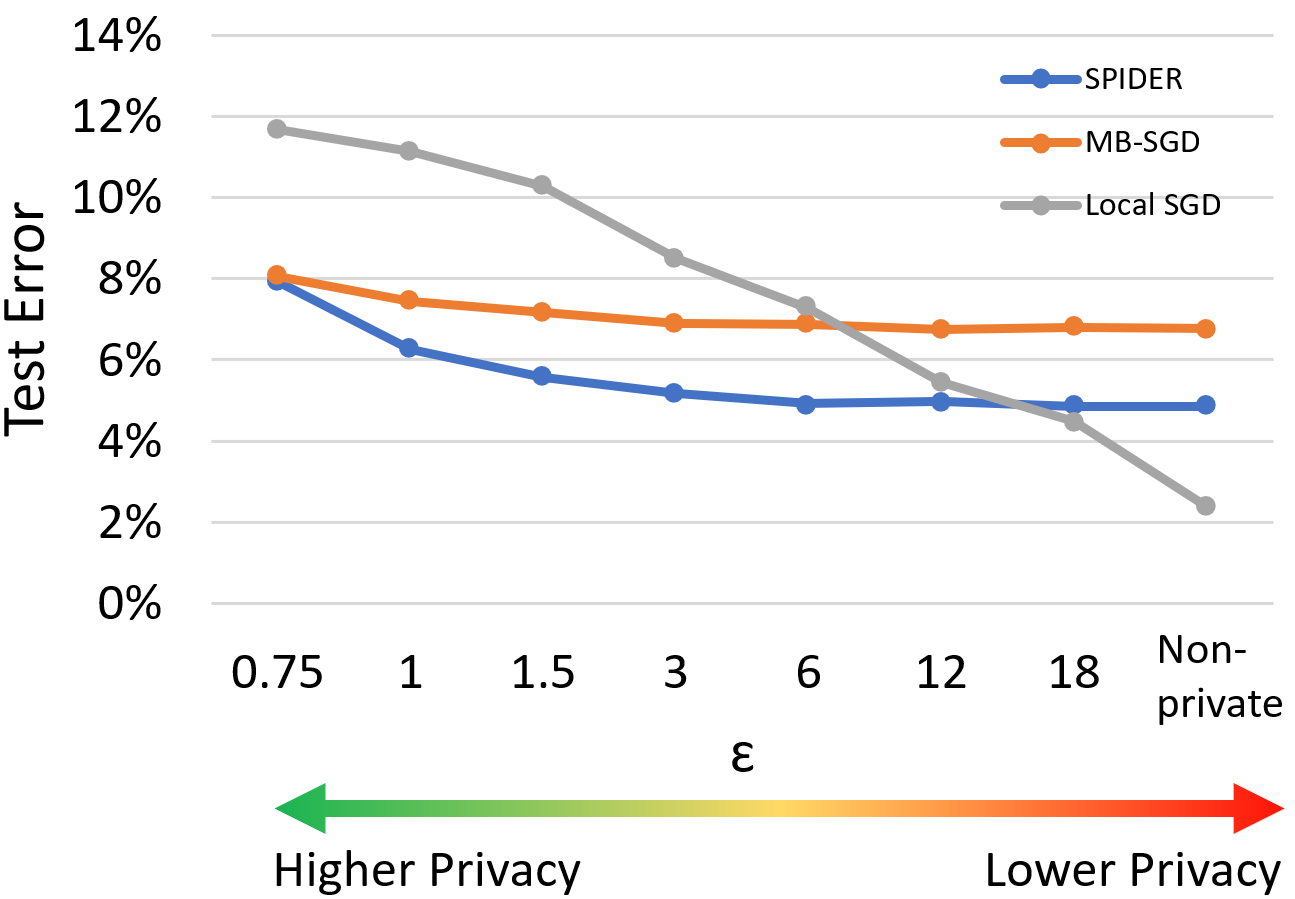}  
    }
  \caption{MNIST data. $M = 25, R = 25$.}
\label{fig:M25R25}
\end{figure}
\begin{figure}[h] 
  \centering
\subfigure{
\includegraphics[width = .90
\linewidth]{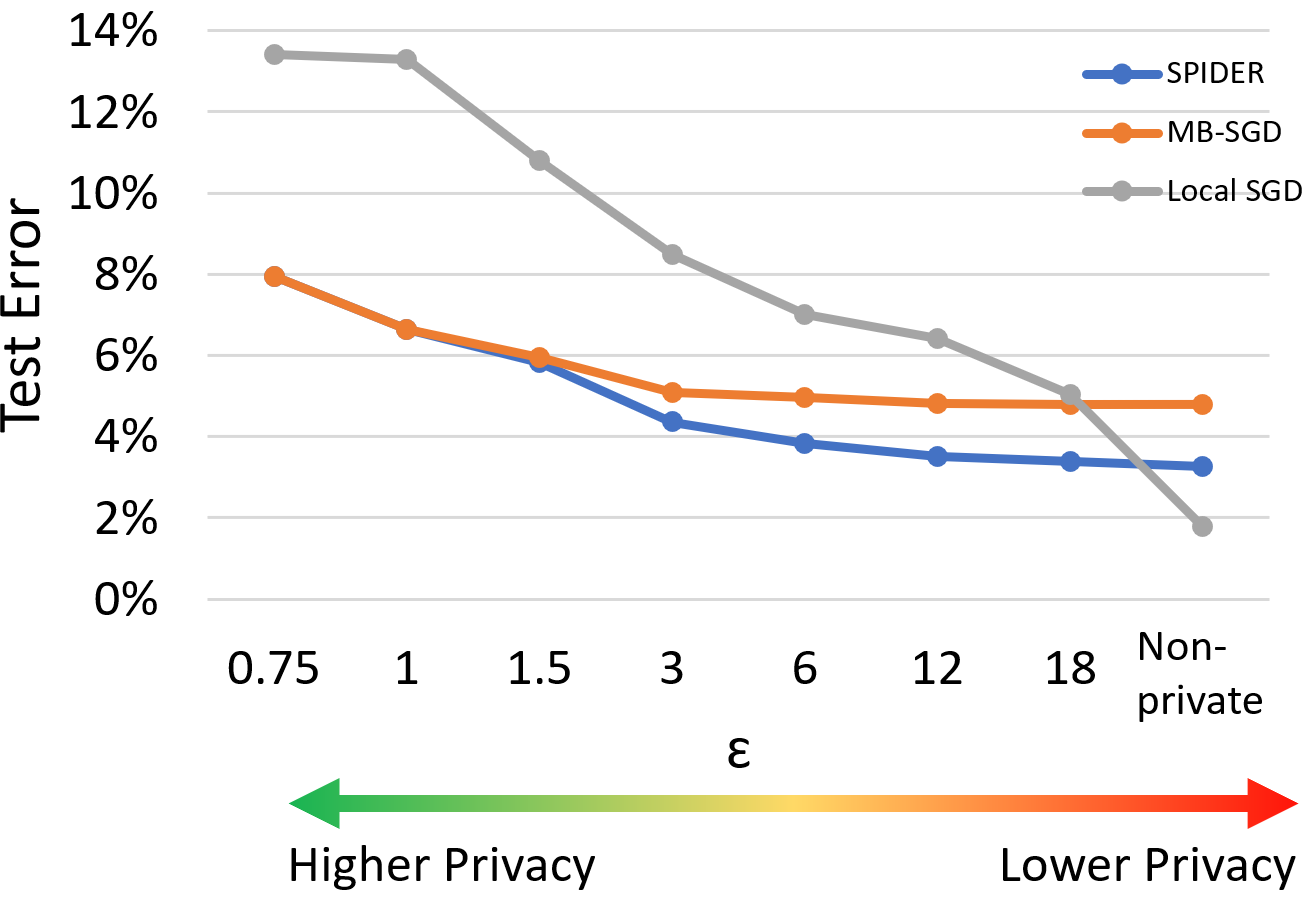}
}
  \caption{MNIST data. $M = 12, R = 50$.}
\label{fig:M12R50}
\end{figure}

\noindent \textbf{\ul{Convolutional NN Classification with CIFAR-10:}} 
 CIFAR-10 dataset~\citep{krizhevsky2009learning} includes 10 image classes and we partition it into $N=10$ heterogeneous silos, each containing one class. Following~\citep{pytorchCNN}, we use a 5-layer CNN with two 5x5 convolutional layers (the first with 6 channels, the second with 16 channels, each followed by a ReLu activation and a 2x2 max pooling) and three fully connected layers with 120, 84, 10 neurons in each fully connected layer. The first and second fully connected layers are followed by a ReLu activation. 
 As Figure~\ref{fig:CIFAR10M10R50} and Figure~\ref{fig:CIFAR10M10R100} show, \textit{ISRL-DP FedProx-SPIDER outperforms both ISRL-DP baselines for most tested privacy levels}. 
\begin{figure}[h] 
  \centering
  \subfigure{
    \includegraphics[width = .90\linewidth]{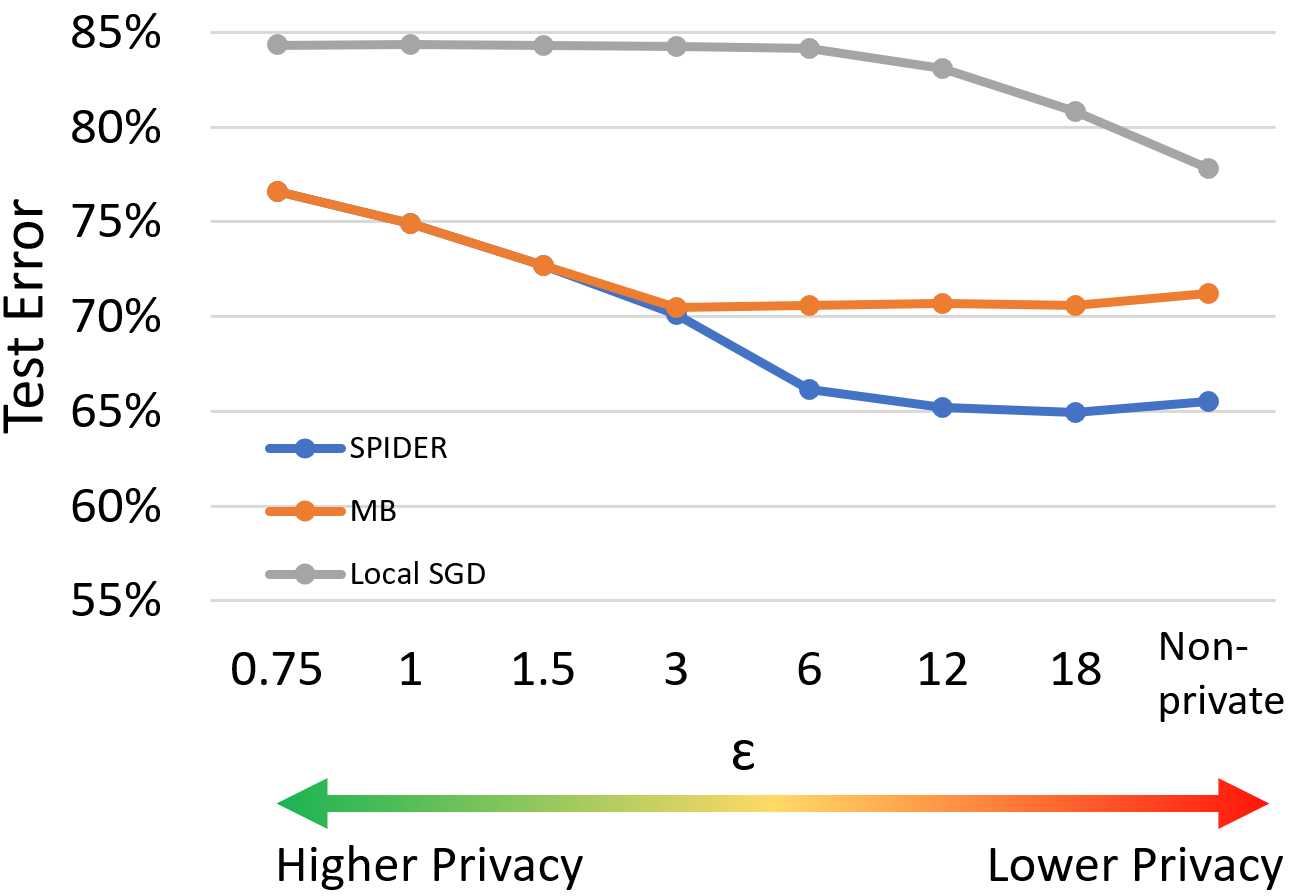}  
    }
  \caption{CIFAR-10 data. $M = 10, R = 50$.}
\label{fig:CIFAR10M10R50}
\end{figure}
\begin{figure}[h] 
  \centering
\subfigure{
\includegraphics[width = .90\linewidth]{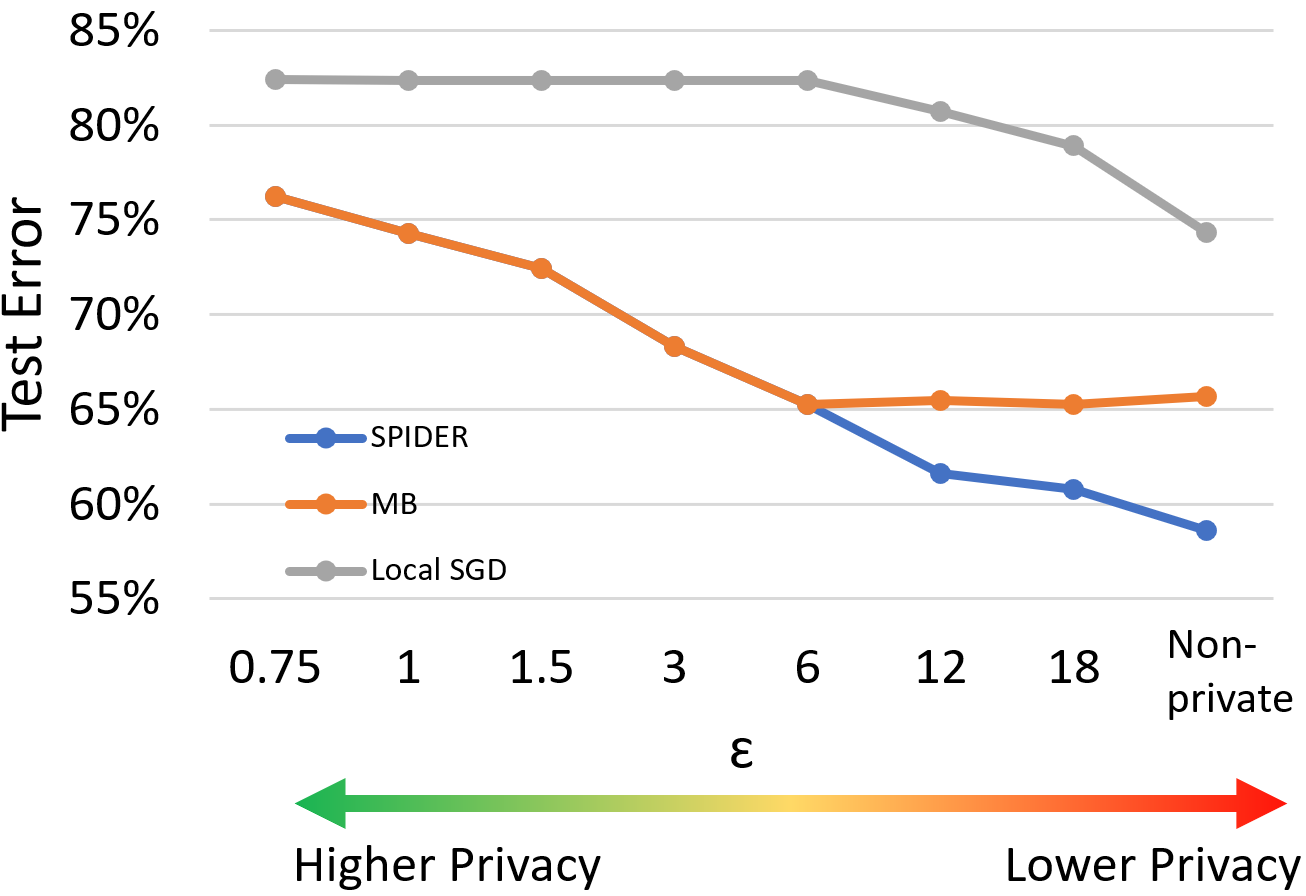}
}
  \caption{CIFAR-10 data. $M = 10, R = 100$.}
\label{fig:CIFAR10M10R100}
\end{figure}

\textbf{\ul{Neural Net Classification with Breast Cancer Data:}} 
With the Wisconsin Breast Cancer (Diagnosis) (WBCD) dataset~\citep{Dua2019}, our task is to diagnose breast cancers as malignant vs. benign. We partition the data set into $N=2$ heterogeneous silos, one containing malignant labels and the other benign labels. We use a 2-layer perceptron with 5 neurons in the hidden layer.
As Figure~\ref{fig:M2R25} shows, \textit{ISRL-DP FedProx-SPIDER outperforms both ISRL-DP baselines for most tested privacy levels}. 

\begin{figure}[ht] 
  \centering
  \begin{tabular}{@{}c@{}}
    \includegraphics[width
    =0.90
    \linewidth]{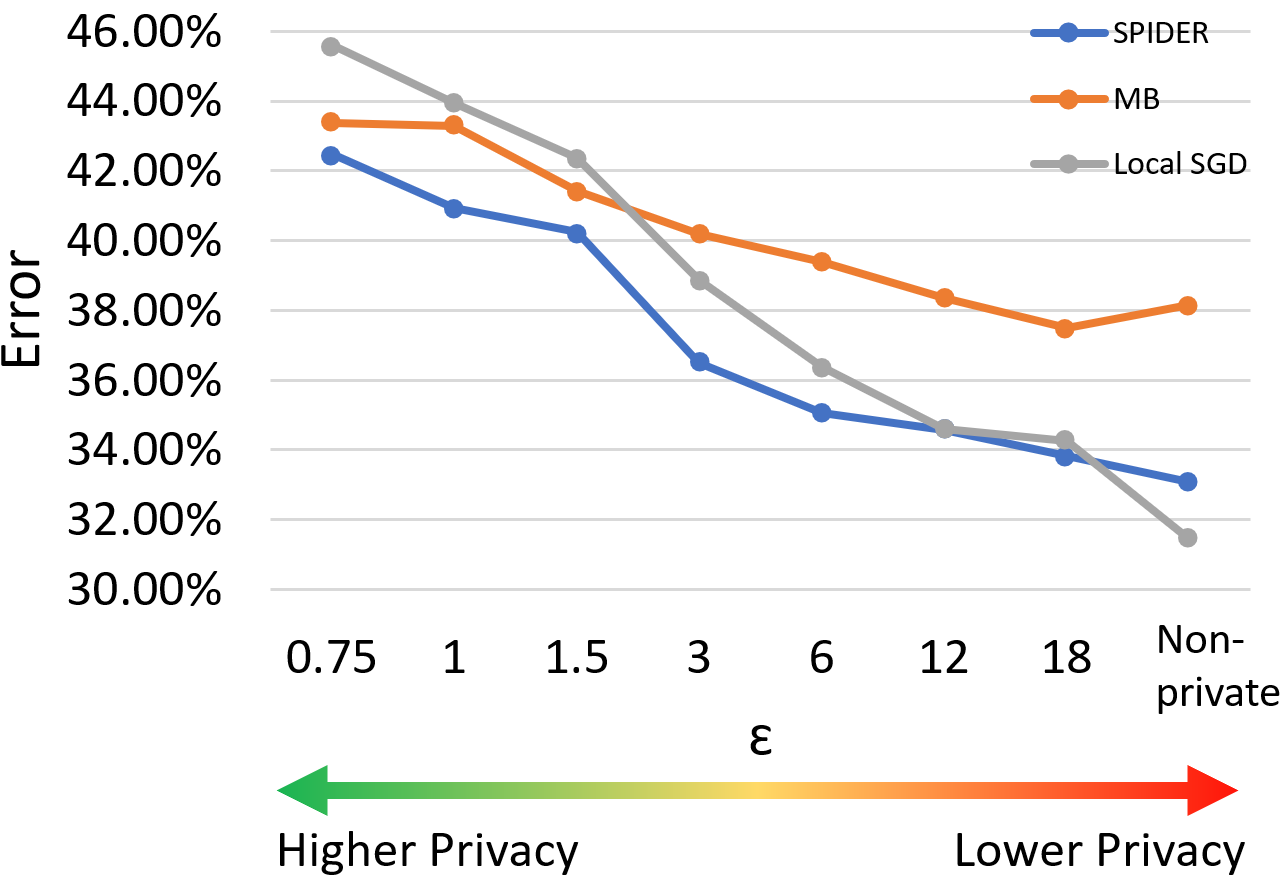} \\[\abovecaptionskip]
  \end{tabular}
  \caption{WBCD data. $M = 2, R = 25$.}
  \label{fig:M2R25}
\end{figure} 

\section{
CONCLUDING REMARKS AND OPEN QUESTIONS
}
We considered non-convex FL in the absence of trust in the server or other silos. We discussed the merits of ISRL-DP and SDP in this setting. For two broad classes of non-convex loss functions, we provided novel ISRL-DP/SDP FL algorithms and utility bounds that advance the state-of-the-art. 
For proximal-PL losses, our algorithms are nearly optimal and show that neither convexity or i.i.d. data is required to obtain fast rates. Numerical experiments demonstrated the practicality of our algorithm at providing both high accuracy and privacy on several learning tasks and data sets. 
An interesting open problem is proving tight bounds on the gradient norm for private non-convex FL. We discuss limitations and societal impacts of our work in~\cref{app: limitations,app: impacts}. 

\subsubsection*{Acknowledgements}
This work was supported in part with funding from the AFOSR Young Investigator Program award and a gift from the USC-Meta Center for Research and Education in AI and Learning. %

\bibliography{references.bib}
\bibliographystyle{abbrvnat}
\setcitestyle{authoryear,open={((},close={))}} %
\clearpage
\appendix
\onecolumn 
\section*{SUPPLEMENTARY MATERIAL}

\section{Inter-Silo Record Level Differential Privacy (ISRL-DP): Rigorous Definition}
\label{app: ISRL-DP}
Let $\mathcal{A}
$ 
be a randomized algorithm, where the silos communicate over $R$ rounds for their FL task. In each round of communication $r \in [R]$, each silo $i$ transmits a message $Z_r^{(i)} \in \ZZ$ (e.g. stochastic gradient) to the server or other silos, and the messages are aggregated. The transmitted message $Z_r^{(i)}$ is a (random) function of previously communicated messages and the data of user $i$; that is, $Z^{(i)}_r:= \rand_r(\bz_{1:r-1}, X_i)$, where $\bz_{1:r-1}:= \{Z_t^{(j)}\}_{j \in [N], t \in [r-1]}$. 
The silo-level local privacy of $\Al$ is completely characterized by the \textit{randomizers} $\rand_r: \ZZ^{(r-1) \times N} \times \XX^{n_i} \to \ZZ$ ($i \in [N], ~r \in [R]$).\footnote{We assume $\rand_r(\bz_{1:r-1}, X_i)$ does not depend on $X_j$ ($j \neq i$) given $\bz_{1:r-1}$ and $X_i$; i.e. the distribution of the random function $\rand_r$ is completely characterized by $\bz_{1:r-1}$ and $X_i$. Thus, randomizers of $i$ cannot ``eavesdrop'' on another silo's data. This is consistent with the local data principle of FL. We allow for $Z^{(i)}_r$ to be empty/zero if silo $i$
does not output anything to the server in round~$r$.}   
$\Al$ is $(\epsilon, \delta)$-ISRL-DP if for all $i \in [N]$, the full transcript of silo $i$'s communications 
(i.e. the collection of all $R$ messages $\{Z^{(i)}_r\}_{r \in [R]}$)
is $(\epsilon, \delta)$-DP, conditional on the messages 
and data 
of all other silos. 
We write \small{
\small $\Al(\bx) \edsim \Al(\bx')$} \normalsize if~\cref{eq: DP} holds for all measurable subsets $S$.

\begin{definition}{(Inter-Silo Record Level Differential Privacy)}
\label{def: ISRL-DP}
Let $\rho_i: \XX^2 \to [0, \infty)$, $\rho_i(X_i, X'_i) := \sum_{j=1}^{n} \mathds{1}_{\{x_{i,j} \neq x_{i,j}'\}}$, $i \in [N]$. A randomized algorithm $\Al: \XX_1^{n} \times \cdots \times \XX_N^{n} \to \ZZ^{R \times N}$ is \textit{
$(\epsilon, \delta)$-ISRL-DP
} if for all $i \in [N]$ and all $\rho_i$-adjacent $X_i, X_i' \in \XX^{n}$, we have
\small
$
\small
(\rand_1(X_i), \rand_2(\bz_1, X_i), \cdots ,\rand_R(\bz_{1:R-1}, X_i)) \edsim  
(\rand_1(X'_i), \rand_2(\bz'_1, X'_i), \cdots , \rand_R(\bz'_{1:R-1}, X'_i)),
$
where $\bz_r := \{\rand_r(\bz_{1:r-1}, X_i)\}_{i=1}^N$ and $\bz'_r := \{\rand_r(\bz'_{1:r-1}, X_i')\}_{i=1}^N$.
\normalsize
\end{definition}

\section{Relationships Between Notions of Differential Privacy}
\label{app: dp relationships}
In this section, we collect a couple of facts about the relationships between different notions of DP, which were proved in \citep{lr21fl}. Suppose $\Al$ is $(\epsilon, \delta)$-ISRL-DP. Then:
\begin{itemize}
    \item $\Al$ is $(\epsilon, \delta)$-CDP; and  
    \item $\Al$ is $(n\epsilon, ne^{(n-1)\epsilon}\delta)$-user-level DP. 
\end{itemize}

Thus, if $\epso \lesssim 1/n$ and $\delo = o(1/n^2)$, then any $(\epso, \delo)$-ISRL-DP algorithm also provides a meaningful $(\epsilon, \delta)$-user-level DP guarantee, with $\epsilon \lesssim 1$.

\section{Further Discussion of Related Work}
\label{app: related work}
\noindent \textbf{DP Optimization with the Polyak-\L ojasiewicz (PL) Condition:}
For \textit{unconstrained} central DP optimization, \citep{wang2017ermrevisited, kang2021weighted, zhangijcai} provide bounds for Lipschitz losses satisfying the \textit{classical} PL inequality. However, the combined assumptions of Lipschitzness and PL on $\mathbb{R}^d$ (unconstrained) are very strong and rule out most interesting PL losses, such as strongly convex, least squares, and neural nets, since the Lipschitz parameter $L$ of such losses is infinite or prohibitively large.\footnote{In particular, the DP ERM/SCO strongly convex, Lipschitz lower bounds of \citep{bst14, bft19} do not imply lower bounds for the unconstrained Lipschitz, PL function class considered in these works, since the quadratic hard instance of \citep{bst14} is not $L$-Lipschitz on all of $\mathbb{R}^d$ for any $L < +\infty$.} We address this gap by considering the \textit{Proximal} PL condition, which admits such interesting loss functions. There was no prior work on DP optimization (in particular, FL) with the Proximal PL condition. 

\noindent \textbf{DP Smooth Non-convex Distributed ERM:} Non-convex federated ERM has been considered in previous works under stricter assumptions of \textit{smooth} loss and (usually) a trusted server. 
\citep{wang2019efficient} provide state-of-the-art \textit{CDP} upper bounds for distributed ERM of order $\EG \lesssim \left(\frac{\sqrt{d}}{\epsilon nN} \right)$ with perfect communication ($M=N$), relying on a trusted server (in conjunction with secure multi-party computation) to perturb the aggregated gradients. Similar bounds were attained by~\citep{noble2022differentially} for $M < N$ with a DP variation of SCAFFOLD~\citep{scaffold}. 
In \cref{thm: sdp proxspider}, we improve on these utility bound under the \textit{weaker trust model of shuffle DP} (no trusted server) and with \textit{unreliable communication} (i.e. arbitrary $M \in [N]$). We also improve over the state-of-the-art ISRL-DP bound of~\citep{noble2022differentially}, in~\cref{thm: ISRL-DP proxspider}. 
A number of other works have also addressed private non-convex federated ERM (under various notions of DP), but have fallen short of the state-of-the-art utility and communication complexity bounds: 
\begin{itemize}
    \item The noisy FedAvg algorithm of \citep{hu21} \textit{is not ISRL-DP} for any $N > n$ since the variance of the Gaussian noise $\sigma^2 \approx TKL^2 \log(1/\delta)/nN\epsilon^2$ decreases as $N$ increases; moreover, for their prescribed stepsize $\eta = \frac{\sqrt{N}}{\sqrt{T}}$, the resulting rate (with $T = RK$) from \cite[Theorem 2]{hu21} is $\mathbb{E}\| \nabla \widehat{F}(\widehat{w}_R) \|^2 = \widetilde{O}\left(\frac{d\sqrt{NT}K}{\epsilon^2 nN} + \frac{NK^2}{T} + \frac{\sqrt{N}}{\sqrt{T}} + \frac{dK^2}{\epsilon^2 n}\right)$ which grows unbounded with $T$. Moreover, $T$ and $K$ are not specified in their work, so it is not clear what bound their algorithm is able to attain, or how many communication rounds are needed to attain it. 
\item Theorems 3 and 7 of \citep{ding2021} provide ISRL-DP upper bounds on the empirical gradient norm which hold for sufficiently large $R \geq T_{\min}^{\text{nc}}$ for some unspecified $T_{\min}^{\text{nc}}$. The resulting upper bounds are bigger than $\frac{d \sigma^2}{R^{1/3}} \approx \frac{d R^{2/3}}{\epsilon^2 n^2}$. In particular, the bounds becomes trivial for large $R$ (diverges) and no utility bound expressed in terms of problem parameters (rather than unspecified design parameters $R$ or $T$) is provided. Also, no communication complexity bound is provided. 
\end{itemize}

\noindent \textbf{DP Smooth Non-convex Centralized ERM ($N=1$):}
In the centralized setting with a single client and smooth loss function, several works \citep{zhang2017efficient, wang2017ermrevisited, wang2019efficient, arora2022faster} have considered CDP (unconstrained) non-convex ERM (with gradient norm as the utility measure): the state-of-the-art bound is $\expec\|\nabla \hf_{\bx}(\wpr)\|^2 = \mathcal{O}\left(\frac{\sqrt{d \ln(1/\delta)}}{\epsilon n}\right)^{4/3}$~\citep{arora2022faster}. Our private FedProx-SPIDER algorithms build on the DP SPIDER-Boost of~\citep{arora2022faster}, by parallelizing their updates for FL and incorporating proximal updates to cover non-smooth losses.

\noindent \textbf{Non-private FL:} In the absence of privacy constraints, there is a plethora of works studying the convergence of FL algorithms in both the convex~\citep{koloskova, li2020fedavg, scaffold, woodworth2020local, woodworth2020, yuan20} and non-convex~\citep{li2020federated, zhang2020fedpd, scaffold} settings. We do not attempt to provide a comprehensive survey of these works here; see~\citep{kairouz2019advances} for such a survey. However, we briefly discuss some of the well known non-convex FL works: \begin{itemize}
    \item The ``FedProx'' algorithm of~\citep{li2020federated} augments FedAvg~\citep{mcmahan2017originalFL} with a regularization term in order to decrease ``client drift'' in heterogeneous FL problems with smooth non-convex loss functions. (By comparison, we use a proximal term in our private algorithms to deal with \textit{non-smooth} non-convex loss functions, and show that our algorithms effectively handle heterogeneous client data via careful analysis.)  
    \item \citep{zhang2020fedpd} provides primal-dual FL algorithms for non-convex loss functions that have optimal communication complexity (in a certain sense). 
    \item The SCAFFOLD algorithm of~\citep{scaffold} can be viewed as a hybrid between Local SGD (FedAvg) and MB-SGD, as~\citep{woodworth2020} observed. Convergence guarantees for their algorithm with non-convex loss functions are provided. 
\end{itemize}

\section{Differential Privacy Building Blocks}
\label{app: SDP}
\textbf{\ul{Basic DP tools:}} We begin by reviewing the privacy guarantees of the \textit{Gaussian mechanism}. The classical $(\epsilon, \delta)$-DP bounds for the Gaussian mechanism~\citep[Theorem A.1]{dwork2014} were only proved for $\epsilon \leq 1$, so we shall instead state the privacy guarantees in terms of \textit{zero-concentrated differential privacy (zCDP)}--which should not be confused with central differential privacy (CDP)--and then convert these into $(\epsilon, \delta)$-DP guarantees. We first recall~\citep[Definition 1.1]{bun16}:
\begin{definition}
A randomized algorithm $\mathcal{A}: \XX^n \to \mathcal{W}$ satisfies $\rho$-zero-concentrated differential privacy ($\rho$-zCDP) if for all $X, X' \in \XX^n$ differing in a single entry (i.e. $d_{\text{hamming}}(X, X') = 1$), and all $\alpha \in (1, \infty)$, we have \[
D_\alpha(\Al(X) || \Al(X')) \leq \rho \alpha,
\]
where $D_\alpha(\Al(X) || \Al(X'))$ is the $\alpha$-\Renyi divergence\footnote{For distributions $P$ and $Q$ with probability density/mass functions $p$ and $q$, $D_\alpha(P || Q) := \frac{1}{\alpha - 1}\ln \left(\int p(x)^{\alpha} q(x)^{1 - \alpha} dx\right)$~\citep[Eq. 3.3]{renyi}.} between the distributions of $\Al(X)$ and $\Al(X')$.
\end{definition}

zCDP is weaker than pure DP, but stronger than approximate DP in the following sense: 
\begin{proposition}{\citep[Proposition 1.3]{bun16}}
\label{prop:bun1.3}
If $\Al$ is $\rho$-zCDP, then $\Al$ is  $(\rho + 2\sqrt{\rho \log(1/\delta)}, \delta)$ for any $\delta > 0$. In particular, if $\epsilon \leq 2\ln(1/\delta)$, then any $\frac{\epsilon^2}{8 \ln(1/\delta)}$-zCDP algorithm is $(\epsilon, \delta)$-DP. 
\end{proposition}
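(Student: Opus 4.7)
The plan is to follow the standard Rényi-divergence-to-approximate-DP conversion, working with the privacy loss random variable (PLRV). Fix two adjacent databases $X, X'$ and let $p, q$ denote the densities (or pmfs) of $\Al(X)$ and $\Al(X')$. Define the PLRV $L(y) := \log(p(y)/q(y))$ and consider $L(Y)$ where $Y \sim \Al(X)$. The core identity I would use is
\[
\mathbb{E}_{Y \sim p}\!\left[e^{(\alpha-1) L(Y)}\right] = \int p(y)^{\alpha} q(y)^{1-\alpha}\, dy = e^{(\alpha-1) D_\alpha(p \,\|\, q)},
\]
so that the $\alpha$-Rényi divergence controls the MGF of the PLRV.

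Next I would apply Markov's inequality to the nonnegative random variable $e^{(\alpha-1) L(Y)}$: for any $\epsilon > 0$ and $\alpha > 1$,
\[
\mathbb{P}(L(Y) > \epsilon) \leq e^{-(\alpha-1)\epsilon}\, e^{(\alpha-1) D_\alpha(p\|q)} \leq e^{(\alpha-1)(\rho \alpha - \epsilon)},
\]
where the last inequality is exactly the $\rho$-zCDP hypothesis $D_\alpha \leq \rho\alpha$. I would then choose $\alpha$ to make the right-hand side equal $\delta$: setting $\epsilon = \rho \alpha + \log(1/\delta)/(\alpha-1)$ and minimizing over $\alpha > 1$ gives the optimizer $\alpha^* = 1 + \sqrt{\log(1/\delta)/\rho}$, which plugged back in yields $\epsilon = \rho + 2\sqrt{\rho \log(1/\delta)}$. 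This is the main quantitative step; the only mild obstacle is checking that $\alpha^* > 1$ (trivial for $\delta < 1$) and that the minimization gives the clean closed form.

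To pass from the tail bound on the PLRV to $(\epsilon,\delta)$-DP, I would use the standard splitting argument: for any measurable $E$, let $E_{\text{good}} = \{y \in E : L(y) \leq \epsilon\}$ and $E_{\text{bad}} = E \setminus E_{\text{good}}$. Then
\[
\mathbb{P}(\Al(X) \in E_{\text{good}}) = \int_{E_{\text{good}}} p(y)\, dy \leq e^{\epsilon}\!\! \int_{E_{\text{good}}} q(y)\, dy \leq e^{\epsilon}\, \mathbb{P}(\Al(X') \in E),
\]
while $\mathbb{P}(\Al(X) \in E_{\text{bad}}) \leq \mathbb{P}(L(Y) > \epsilon) \leq \delta$ by the previous step. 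Adding the two inequalities yields the $(\epsilon,\delta)$-DP guarantee for this choice of $\epsilon$.

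Finally, for the ``in particular'' clause, I would plug $\rho = \epsilon^2/(8 \ln(1/\delta))$ into the bound $\rho + 2\sqrt{\rho \ln(1/\delta)}$. The cross term simplifies as $2\sqrt{\rho \ln(1/\delta)} = \epsilon/\sqrt{2}$, and under the assumption $\epsilon \leq 2 \ln(1/\delta)$ we get $\rho \leq \epsilon/4$, so the total is at most $(1/4 + 1/\sqrt{2})\epsilon < \epsilon$, giving $(\epsilon, \delta)$-DP as claimed. The hardest part of this proof is purely the calculus optimization over $\alpha$; everything else is a routine Markov/splitting argument.
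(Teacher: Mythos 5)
Your proof is correct and is essentially the standard argument from Bun and Steinke's original paper, which the present paper simply cites without reproducing: bound the moment generating function of the privacy loss random variable by the R\'enyi divergence, apply Markov's inequality, optimize over $\alpha$ to obtain $\rho + 2\sqrt{\rho\log(1/\delta)}$, and convert the PLRV tail bound into $(\epsilon,\delta)$-DP via the good/bad event splitting. The ``in particular'' clause also checks out, since $\rho + 2\sqrt{\rho\ln(1/\delta)} = \tfrac{\epsilon^2}{8\ln(1/\delta)} + \tfrac{\epsilon}{\sqrt 2} \le \bigl(\tfrac14 + \tfrac1{\sqrt 2}\bigr)\epsilon < \epsilon$ whenever $\epsilon \le 2\ln(1/\delta)$, and $(\epsilon',\delta)$-DP with $\epsilon' < \epsilon$ implies $(\epsilon,\delta)$-DP by monotonicity.
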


The privacy guarantee of the Gaussian mechanism is as follows:
\begin{proposition}{\citep[Proposition 1.6]{bun16}}
\label{prop: gauss}
Let $q: \XX^n \to \mathbb{R}$ be a query with $\ell_2$-sensitivity $\Delta := \sup_{X \sim X'}\|q(X) - q(X')\|$. Then the Gaussian mechanism, defined by $\mathcal{M}: \XX^n \to \mathbb{R}$, $\mathcal{M}(X) := q(X) + u$ for $u \sim \mathcal{N}(0, \sigma^2)$, is $\rho$-zCDP if $\sigma^2 \geq \frac{\Delta^2}{2\rho}$. Thus, if $\epsilon \leq 2\ln(1/\delta)$ and $\sigma^2 \geq \frac{4 \Delta^2 \ln(1/\delta)}{\epsilon^2}$, then $\mathcal{M}$ is $(\epsilon, \delta)$-DP. 
\end{proposition}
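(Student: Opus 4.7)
The plan is to establish the $\rho$-zCDP guarantee by a direct \Renyi divergence computation on two equal-variance Gaussians, and then deduce the $(\epsilon,\delta)$-DP corollary by invoking Proposition~\ref{prop:bun1.3} with an appropriate choice of $\rho$.

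First I would prove the elementary identity that for every $\mu_1,\mu_2\in\mathbb{R}$, $\sigma>0$, and $\alpha>1$,
\begin{equation*}
D_\alpha\bigl(\mathcal{N}(\mu_1,\sigma^2)\,\|\,\mathcal{N}(\mu_2,\sigma^2)\bigr)\;=\;\frac{\alpha(\mu_1-\mu_2)^2}{2\sigma^2}.
\end{equation*}
This is a textbook calculation: substitute the Gaussian densities into $\tfrac{1}{\alpha-1}\ln\!\int p(x)^\alpha q(x)^{1-\alpha}dx$, observe that the normalizing constants combine into a single factor $1/\sqrt{2\pi\sigma^2}$ (since $\alpha+(1-\alpha)=1$), and complete the square via the pointwise identity
\begin{equation*}
\alpha(x-\mu_1)^2+(1-\alpha)(x-\mu_2)^2 \;=\; (x-\bar\mu)^2+\alpha(1-\alpha)(\mu_1-\mu_2)^2,\qquad \bar\mu:=\alpha\mu_1+(1-\alpha)\mu_2.
\end{equation*}
Integrating the resulting Gaussian in $x$ gives $1$, leaving the $x$-free factor $\exp\!\bigl(-\alpha(1-\alpha)(\mu_1-\mu_2)^2/(2\sigma^2)\bigr)$; taking logs and dividing by $\alpha-1$ yields the formula (note that $-\alpha(1-\alpha)=\alpha(\alpha-1)>0$ for $\alpha>1$, so the log is nonnegative as required).

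Second, for any pair of adjacent databases $X,X'\in\XX^n$ set $\mu_1:=q(X)$ and $\mu_2:=q(X')$, so $\mathcal{M}(X)\sim\mathcal{N}(\mu_1,\sigma^2)$ and $\mathcal{M}(X')\sim\mathcal{N}(\mu_2,\sigma^2)$. The sensitivity assumption gives $(\mu_1-\mu_2)^2\le\Delta^2$, hence the identity above implies
\begin{equation*}
D_\alpha\bigl(\mathcal{M}(X)\,\|\,\mathcal{M}(X')\bigr)\;\le\;\frac{\alpha\Delta^2}{2\sigma^2}\;\le\;\alpha\rho
\end{equation*}
whenever $\sigma^2\ge\Delta^2/(2\rho)$. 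Since this bound holds uniformly in $\alpha>1$ and in the adjacent pair $(X,X')$, $\mathcal{M}$ is $\rho$-zCDP by definition.

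Finally, for the $(\epsilon,\delta)$-DP statement I would apply the second clause of Proposition~\ref{prop:bun1.3} with $\rho:=\epsilon^2/(8\ln(1/\delta))$. The hypothesis $\sigma^2\ge 4\Delta^2\ln(1/\delta)/\epsilon^2$ is precisely the condition $\sigma^2\ge\Delta^2/(2\rho)$ under this choice of $\rho$, so by the preceding step $\mathcal{M}$ is $\rho$-zCDP, and the range $\epsilon\le2\ln(1/\delta)$ lets Proposition~\ref{prop:bun1.3} conclude $(\epsilon,\delta)$-DP. There is essentially no obstacle here: the entire argument reduces to the one Gaussian \Renyi calculation above plus a direct invocation of the already-cited zCDP-to-$(\epsilon,\delta)$ conversion lemma.
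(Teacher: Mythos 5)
Your proof is correct, and it is precisely the standard argument (one-dimensional Gaussian \Renyi divergence calculation followed by the zCDP-to-$(\epsilon,\delta)$-DP conversion). The paper does not prove this proposition itself---it simply cites \citep[Proposition 1.6]{bun16}---and your derivation reconstructs exactly the argument that appears in that reference, so there is no substantive difference to report.
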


Our multi-pass algorithms will also use advanced composition for their privacy analysis: 
\begin{theorem}{\citep[Theorem 3.20]{dwork2014}}
\label{thm: advanced composition}
Let $\epsilon \geq\ 0, \delta, \delta' \in [0, 1)$. Assume $\mathcal{A}_1, \cdots, \mathcal{A}_R$, with $\Al_r: \XX^n \times \WW \to \WW$, are each $(\epsilon, \delta)$-DP ~$\forall r = 1, \cdots, R$. Then, the adaptive composition $\Al(X) := \Al_R(X, \Al_{R-1}(X, \Al_{R-2}(X, \cdots)))$ is $(\epsilon', R\delta + \delta')$-DP for $\epsilon' = \sqrt{2R \ln(1/\delta')} \epsilon + R\epsilon(e^{\epsilon} - 1)$.
\end{theorem}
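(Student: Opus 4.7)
The plan is to adapt the classical martingale proof of advanced composition due to Dwork, Rothblum, and Vadhan. The central object is the \emph{privacy loss random variable}: for a mechanism $M$ with output distributions $P,Q$ on a pair of adjacent inputs, define $L(\xi) := \ln\bigl(\Pr[M(x){=}\xi]/\Pr[M(x'){=}\xi]\bigr)$. The overall strategy has three stages: (i) reduce the $(\epsilon,\delta)$-DP setting to an essentially pure $\epsilon$-DP setting at the cost of $R\delta$ in the failure probability; (ii) bound the per-step mean and range of the privacy loss; (iii) apply Azuma--Hoeffding to show that the total privacy loss concentrates, then convert the concentration bound back to an $(\epsilon',\delta')$-DP guarantee.

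First, I would invoke the standard equivalence (Lemma 3.17 in Dwork--Roth, attributable to Kasiviswanathan--Smith) stating that any $(\epsilon,\delta)$-DP mechanism can be coupled, on each adjacent pair of inputs, with a mechanism whose privacy loss is pointwise bounded by $\epsilon$ except on an event of probability at most $\delta$. Applying this to each $\mathcal{A}_r$ and taking a union bound over $r \in [R]$ yields an event $E$ of probability at least $1-R\delta$ on which the adaptive composition behaves as if each component were pure-$\epsilon$-DP. Conditioning on $E$ is exactly what produces the $R\delta$ contribution in the final $\delta$ budget; everything that follows is carried out on $E$.

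Next, fix adjacent databases $X, X'$ and let $Z_r$ be the privacy loss of $\mathcal{A}_r$ evaluated on the realized output of round $r$, viewed as a random variable whose distribution depends on the transcript $Z_1,\dots,Z_{r-1}$. Under the reduction we have the deterministic bound $|Z_r|\leq \epsilon$. A short calculation using $\ln(1+t)\leq t$ together with the pointwise inequality $|P(\xi)-Q(\xi)| \leq (e^\epsilon-1)\min\{P(\xi),Q(\xi)\}$ (which is the defining consequence of pure $\epsilon$-DP) yields the mean bound $\mathbb{E}[Z_r \mid Z_1,\dots,Z_{r-1}] \leq \epsilon(e^\epsilon-1)$. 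Summing gives $\sum_{r=1}^R \mathbb{E}[Z_r \mid \text{past}] \leq R\epsilon(e^\epsilon-1)$, which accounts for the second additive term in $\epsilon'$.

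Finally, I would set $W_r := Z_r - \mathbb{E}[Z_r \mid Z_{<r}]$, so that $(W_r)$ is a martingale difference sequence bounded in $[-2\epsilon,2\epsilon]$. Azuma--Hoeffding then gives $\Pr\!\bigl[\sum_r W_r > \sqrt{2R\ln(1/\delta')}\,\epsilon\bigr] \leq \delta'$, so with probability at least $1-\delta'$ the total loss satisfies $\sum_r Z_r \leq R\epsilon(e^\epsilon-1) + \sqrt{2R\ln(1/\delta')}\,\epsilon = \epsilon'$. A final appeal to the converse direction of the high-probability-loss/DP equivalence (a bounded privacy loss except on a $\delta'$-event implies $(\epsilon',\delta')$-DP) combined with the $R\delta$ slack from stage (i) produces the stated $(\epsilon', R\delta+\delta')$-DP guarantee. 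I expect the main technical obstacle to be getting a tight Azuma constant: working directly with $Z_r$ (rather than with the centered $W_r$) forces a looser constant and breaks the $\sqrt{2R\ln(1/\delta')}\,\epsilon$ form, so the centering step and careful bookkeeping of the deterministic and in-expectation bounds is what delivers the precise constants in $\epsilon'$.
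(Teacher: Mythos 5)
The paper does not prove this statement: it is quoted verbatim as~\citep[Theorem 3.20]{dwork2014} and used as a black box, so there is no in-paper argument to compare against. Your sketch is a faithful reconstruction of the standard Dwork--Rothblum--Vadhan proof from the cited source (reduction to pure DP via the coupling lemma at cost $R\delta$, per-round expected privacy loss at most $\epsilon(e^\epsilon-1)$, Azuma--Hoeffding on the centered losses, then conversion back to $(\epsilon',\delta')$-DP), and the structure is correct. One small quantitative point: as written, bounding $|W_r|\leq 2\epsilon$ and applying the symmetric form of Azuma gives a tail of $\exp(-t^2/(8R\epsilon^2))$ and hence $t = 2\epsilon\sqrt{2R\ln(1/\delta')}$, a factor of $2$ worse than claimed. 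The stated constant requires the interval form of Azuma--Hoeffding: each $W_r$ lies in a (conditionally determined) interval of \emph{length} $2\epsilon$, since $Z_r\in[-\epsilon,\epsilon]$ and centering only translates that interval, which yields $\exp(-t^2/(2R\epsilon^2))$ and the advertised $\epsilon\sqrt{2R\ln(1/\delta')}$. With that refinement your argument matches the original proof.
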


Note that the moments accountant~\citep{abadi16} provides a slightly tighter composition bound (by a logarithmic factor) and can be used instead of~\cref{thm: advanced composition} if one is concerned with logarithmic factors. We use the moments accountant for our numerical experiments: see~\cref{app: experiment details}. Sometimes it is more convenient to analyze the compositional privacy guarantees of our algorithm through the lens of zCDP: 

\begin{lemma}{\cite[Lemma 2.3]{bun16}}
\label{lem: zCDP composition}
Suppose $\mathcal{A}: \XX^n \to \mathcal{Y}$ satisfies $\rho$-zCDP and $\mathcal{A}': \XX^n \times \mathcal{Y} \to \ZZ$ satisfies $\rho'$-zCDP (as a function of its first argument). Define the composition of $\mathcal{A}$ and $\mathcal{A}'$, $\mathcal{A}'': \XX^n \to \ZZ$ by
$\mathcal{A}''(X) = \mathcal{A}'(X, \mathcal{A}(X))$. Then $\mathcal{A}''$ satisfies $(\rho + \rho')$-zCDP. In particular, the composition of $T$ $\rho$-zCDP mechanisms is a $T\rho$-zCDP mechanism. 
\end{lemma}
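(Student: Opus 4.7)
The plan is to work directly from the Rényi-divergence definition of zCDP: a mechanism $\mathcal{A}$ is $\rho$-zCDP iff $D_\alpha(\mathcal{A}(X)\,\|\,\mathcal{A}(X')) \leq \rho\alpha$ for every $\alpha > 1$ and every pair of adjacent databases $X, X'$. So fix adjacent $X, X'$ and a single $\alpha > 1$, and try to show $D_\alpha(\mathcal{A}''(X)\,\|\,\mathcal{A}''(X')) \leq (\rho+\rho')\alpha$. The natural route is to pass through the joint distribution of intermediate and final outputs and then post-process: let $P$ be the joint law of $(Y, Z) = (\mathcal{A}(X), \mathcal{A}'(X, \mathcal{A}(X)))$ and $Q$ the joint law of $(\mathcal{A}(X'), \mathcal{A}'(X', \mathcal{A}(X')))$, so that $\mathcal{A}''(X)$ and $\mathcal{A}''(X')$ are the $Z$-marginals. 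By the data processing inequality for Rényi divergence (marginalization is post-processing), it suffices to bound $D_\alpha(P\,\|\,Q)$.

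The key step is a chain-rule style bound for Rényi divergence on product spaces, which I would derive by direct calculation. Writing the joint densities as $p(y,z) = p_Y(y)\,p_{Z\mid Y=y}(z)$ and similarly for $q$, I would compute
\begin{align*}
D_\alpha(P\,\|\,Q) &= \tfrac{1}{\alpha-1}\log \int p_Y(y)^\alpha q_Y(y)^{1-\alpha} \left[\int p_{Z\mid Y=y}(z)^\alpha q_{Z\mid Y=y}(z)^{1-\alpha}\,dz\right] dy \\
&= \tfrac{1}{\alpha-1}\log \int p_Y(y)^\alpha q_Y(y)^{1-\alpha} \exp\!\left((\alpha-1)\,D_\alpha(P_{Z\mid Y=y}\,\|\,Q_{Z\mid Y=y})\right) dy \\
&\leq \sup_y D_\alpha(P_{Z\mid Y=y}\,\|\,Q_{Z\mid Y=y}) + D_\alpha(P_Y\,\|\,Q_Y),
\end{align*}
where the last inequality pulls the supremum out of the inner exponent. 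Then I would apply the two zCDP hypotheses: $D_\alpha(P_Y\,\|\,Q_Y) = D_\alpha(\mathcal{A}(X)\,\|\,\mathcal{A}(X')) \leq \rho\alpha$ by the $\rho$-zCDP of $\mathcal{A}$, and for every fixed $y$, $P_{Z\mid Y=y}$ and $Q_{Z\mid Y=y}$ are the laws of $\mathcal{A}'(X, y)$ and $\mathcal{A}'(X', y)$, so $D_\alpha(P_{Z\mid Y=y}\,\|\,Q_{Z\mid Y=y}) \leq \rho'\alpha$ by the $\rho'$-zCDP of $\mathcal{A}'$ as a function of its first argument. Summing yields $D_\alpha(P\,\|\,Q) \leq (\rho+\rho')\alpha$, and data processing then gives $D_\alpha(\mathcal{A}''(X)\,\|\,\mathcal{A}''(X')) \leq (\rho+\rho')\alpha$ for every $\alpha > 1$, which is $(\rho+\rho')$-zCDP. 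The ``in particular'' claim about $T$-fold composition follows by induction: iterate the binary bound, treating the tuple of past outputs as the auxiliary input to the next mechanism.

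The main obstacle, if any, is the chain-rule step: the divergence of a joint does not split cleanly into a sum of marginal divergences for general $\alpha$, so one must take a supremum over the conditioning variable. Writing it out carefully requires being comfortable that $\mathcal{A}'(X,\cdot)$ is $\rho'$-zCDP as a function of $X$ \emph{uniformly} in its second argument — this is exactly what the lemma's hypothesis ``as a function of its first argument'' buys us, and it is what makes the supremum bound admissible. There is also a minor measure-theoretic point — the joint distribution needs to be absolutely continuous with respect to itself shifted (i.e. supports match on the relevant sets) — which I would handle by invoking the standard convention $0/0 = 0$ and $0^0 = 0$ used throughout the Rényi divergence literature, so that the integrals are well-defined and the bound holds with the convention that $D_\alpha = \infty$ whenever supports disagree.
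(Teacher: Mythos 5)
Your argument is correct and is essentially the standard proof of this lemma: the paper itself does not prove it but cites it directly from Bun and Steinke, whose proof proceeds by exactly the same route — factor the joint R\'enyi integral into the marginal term times the conditional term, bound the conditional divergence uniformly over the intermediate output $y$ using the zCDP of $\mathcal{A}'$ in its first argument, and finish with the data-processing inequality. The chain-rule computation, the use of the supremum over $y$, and the induction for $T$-fold composition are all as in the cited reference, so there is nothing to add.
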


\noindent \textbf{\ul{Shuffle Private Vector Summation:}}
Here we recall the shuffle private vector summation protocol $\mathcal{P}_{\text{vec}}$ of \citep{cheu2021shuffle}, and its privacy and utility guarantee. First, we will need the scalar summation protocol,~\cref{alg: P1D}. Both of~\cref{alg: P1D} and~\cref{alg: Pvec} decompose into a local randomizer $\mathcal{R}$ that silos perform and an analyzer component $\mathcal{A}$ that the shuffler executes. Below we use $\mathcal{S}(\mathbf{y})$ to denote the shuffled vector $\mathbf{y}$: i.e. the vector with same dimension as $\mathbf{y}$ whose components are random permutations of the components of $\mathbf{y}$. 

\begin{algorithm}
\caption{$\mathcal{P}_{\text{1D}}$, a shuffle protocol for summing scalars \citep{cheu2021shuffle}}
\label{alg: P1D}
\begin{algorithmic}[1]
\STATE {\bfseries Input:} 
Scalar database $X = (x_1, \cdots x_N) \in [0,L]^N$; $g, b \in \mathbb{N}; p \in (0, \frac{1}{2})$. 
\STATE {\bfseries procedure: Local Randomizer $\mathcal{R}_{1D}(x_i)$}
\begin{ALC@g}
\STATE $\widebar{x}_i \gets \lfloor x_i g/L \rfloor$.
\STATE Sample rounding value $\eta_1 \sim \textbf{Ber}(x_i g/L - \widebar{x}_i)$.
\STATE Set $\hat{x}_i \gets \widebar{x}_i + \eta_1$.
\STATE Sample privacy noise value $\eta_2 \sim \textbf{Bin}(b,p)$.
\STATE Report $\mathbf{y}_i \in \{0,1\}^{g + b}$ containing $\hat{x}_i + \eta_2$ copies of $1$ and $g + b - (\hat{x}_i + \eta_2)$ copies of $0$.
\end{ALC@g}
\STATE {\bfseries end procedure}
\STATE{\bfseries procedure: Analyzer} $\mathcal{A}_{\text{1D}}(\mathcal{S}(\mathbf{y}))$
\begin{ALC@g}
\STATE Output estimator $\frac{L}{g}((\sum_{i=1}^{N}\sum_{j=1}^{b+g} (\mathbf{y}_i)_j) - pbn)$.
\end{ALC@g}
\STATE {\bfseries end procedure}
\end{algorithmic}
\end{algorithm}

The vector summation protocol~\cref{alg: Pvec} invokes the scalar summation protocol,~\cref{alg: P1D}, $d$ times. In the Analyzer procedure, we use $\mathbf{y}$ to denote the collection of all $Nd$ shuffled (and labeled) messages that are returned by the the local randomizer applied to all of the $N$ input vectors. Since the randomizer labels these messages by coordinate, $\mathbf{y}_j$ consists of $N$ shuffled messages labeled by coordinate $j$ (for all $j  \in [d]$).

\begin{algorithm}
\caption{$\mathcal{P}_{\text{vec}}$, a shuffle protocol for vector summation \citep{cheu2021shuffle}}
\label{alg: Pvec}
\begin{algorithmic}[1]
\STATE {\bfseries Input:} database of $d$-dimensional vectors $\bx = (\mathbf{x}_1, \cdots, \mathbf{x}_N$); privacy parameters $\epsilon, \delta$; $L$.
\STATE {\bfseries procedure:} Local Randomizer $\mathcal{R}_{\text{vec}}(\mathbf{x}_i)$
\begin{ALC@g}
\FOR{$j \in [d]$} 
\STATE Shift component to enforce non-negativity: $\mathbf{w}_{i,j} \gets \mathbf{x}_{i,j} + L$
\STATE $\mathbf{m}_j \gets \mathcal{R}_{1D}(\mathbf{w}_{i,j})$
\ENDFOR
\STATE Output labeled messages $\{(j, \mathbf{m}_j)\}_{j \in [d]}$
\end{ALC@g}
\STATE {\bfseries end procedure}
\STATE {\bfseries procedure: Analyzer} $\mathcal{A}_{\text{vec}}(\mathbf{y})$ 
\begin{ALC@g}
\FOR{$j \in [d]$}
\STATE Run analyzer on coordinate $j$'s messages $z_j \gets \mathcal{A}_{\text{1D}}(\mathbf{y}_j)$ 
\STATE Re-center: $o_j \gets z_j - L$
\ENDFOR
\STATE Output the vector of estimates $\mathbf{o} = (o_1, \cdots o_d)$
\end{ALC@g}
\STATE {\bfseries end procedure}
\end{algorithmic}
\end{algorithm}

When we use~\cref{alg: Pvec} in our SDP FL algorithms, each of the $M_r = M$ available silos contributes $K$ messages, so $N = MK$ in the notation of~\cref{alg: Pvec}. Also, $x_i$ represents $K$ stochastic gradients, and available silos perform $\mathcal{R}_{\text{vec}}$ on each one (in parallel) before sending the collection of all of these randomized, discrete stochastic gradients--denoted $\mathcal{R}_{\text{vec}}(\mathbf{x}_i)$--to the shuffler. The shuffler permutes the elements of $\mathcal{R}_{\text{vec}}(\mathbf{x}_1), \cdots \mathcal{R}_{\text{vec}}(\mathbf{x}_M)$, then executes $\mathcal{A}_{\text{vec}}$, and sends $\frac{1}{M} \mathbf{o}$--which is a noisy estimate of the average stochastic gradient--to the server. When there is no confusion, we will sometimes hide input parameters other than $\bx$ and denote $\mathcal{P}_{\text{vec}}(\bx) := \mathcal{P}_{\text{vec}}(\bx; \epsilon, \delta; L)$. We now provide the privacy and utility guarantee of~\cref{alg: Pvec}: 

\begin{theorem}[\citep{cheu2021shuffle}]
\label{thm:cheu vecsum}
For any $0 < \epsilon \leq 15, 0 < \delta < 1/2, d, N \in \mathbb{N}$, and $L > 0$, there are choices of parameters $b, g \in \mathbb{N}$ and $p \in (0, 1/2)$ for $\mathcal{P}_{\text{1D}}$ (\cref{alg: P1D}) such that, for $\bx = (\mathbf{x}_1, \cdots \mathbf{x}_N)$ containing vectors of maximum norm $\max_{i\in [N]} \|\mathbf{x}_i\|\leq L$, the following holds: 1) $\mathcal{P}_{\text{vec}}$ is $(\epsilon, \delta)$-SDP; and 2) $\mathcal{P}_{\text{vec}}(\bx)$ is an unbiased estimate of $\sum_{i=1}^N \mathbf{x}_i$ with bounded variance \[
\expec\left[\left\|\mathcal{P}_{\text{vec}}(\bx; \epsilon, \delta; L) - \sum_{i=1}^N \mathbf{x}_i\right\|^2\right] = \mathcal{O}\left(\frac{d L^2 \log^2\left(\frac{d}{\delta}\right)}{\epsilon^2}\right).
\]
\end{theorem}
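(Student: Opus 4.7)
The protocol $\mathcal{P}_{\text{vec}}$ is essentially a parallel repetition of the scalar protocol $\mathcal{P}_{\text{1D}}$ across the $d$ coordinates (after a non-negativity shift), so the plan is to reduce everything to the analysis of $\mathcal{P}_{\text{1D}}$ and then pay the standard $\sqrt{d}$ factor via composition. Concretely, the three claims (SDP, unbiasedness, variance) will be handled separately, but each one ultimately rests on the per-coordinate protocol.

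\textbf{Step 1: Per-coordinate privacy.} I will first establish that, with appropriate choices of $g,b,p$, $\mathcal{P}_{\text{1D}}$ is $(\epsilon_0,\delta_0)$-SDP for some $\epsilon_0,\delta_0$ to be set later. Because the only information that the shuffled messages $\mathcal{S}(\mathbf{y})$ retain is the \emph{count} of ones, the output distribution is completely determined by $\sum_i \hat{x}_i + \sum_i \eta_2^{(i)}$, where $\sum_i \eta_2^{(i)} \sim \mathrm{Bin}(Nb,p)$. Replacing one entry changes $\sum_i \hat{x}_i$ by at most one in absolute value (after stochastic rounding). Thus the privacy of $\mathcal{P}_{\text{1D}}$ reduces to the $(\epsilon_0,\delta_0)$-DP guarantee of the binomial mechanism on a sensitivity-$1$ integer query—a standard ``shuffled binomial / randomized response'' calculation where one upper-bounds the privacy loss random variable by analyzing the ratio of two binomials $\mathrm{Bin}(Nb,p)$ translated by at most $1$. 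Setting $p \asymp 1/2$ and $Nbp(1-p) \asymp \log(1/\delta_0)/\epsilon_0^2$ makes this ratio $(\epsilon_0,\delta_0)$-close.

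\textbf{Step 2: Composition across coordinates.} Given the per-coordinate guarantee, $\mathcal{P}_{\text{vec}}$ is the adaptive composition of $d$ copies of $\mathcal{P}_{\text{1D}}$ (in fact, non-adaptive, since the $d$ coordinates are processed in parallel using independent randomness). Applying the advanced composition theorem (Theorem~3.20 of Dwork--Roth, quoted in the excerpt as \cref{thm: advanced composition}) with $\epsilon_0 \asymp \epsilon/\sqrt{d\log(1/\delta)}$ and $\delta_0 \asymp \delta/d$ gives that $\mathcal{P}_{\text{vec}}$ is $(\epsilon,\delta)$-SDP overall. Plugging these choices back into the per-coordinate calibration determines the values of $g,b,p$ used in the final parameter setting.

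\textbf{Step 3: Unbiasedness.} For each coordinate $j$, the rounding noise satisfies $\expec[\eta_1] = \mathbf{w}_{i,j} g/L - \bar{\mathbf{w}}_{i,j}$, hence $\expec[\hat{\mathbf{w}}_{i,j}] = \mathbf{w}_{i,j} g/L$. The binomial noise satisfies $\expec[\eta_2] = pb$, and the analyzer subtracts $pbN$ and rescales by $L/g$, giving $\expec[z_j] = \sum_i \mathbf{w}_{i,j}$. Undoing the $+L$ shift through $o_j = z_j - NL$ (after correcting the centering as in the pseudocode) recovers $\expec[o_j] = \sum_i \mathbf{x}_{i,j}$, so $\mathcal{P}_{\text{vec}}(\bx)$ is unbiased.

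\textbf{Step 4: Variance bound.} The per-coordinate variance decomposes into contributions from stochastic rounding and from binomial noise:
\[
\mathrm{Var}(o_j) = (L/g)^2 \bigl[\textstyle\sum_i \mathrm{Var}(\eta_1^{(i)}) + Nb\,p(1-p)\bigr] \le (L/g)^2 \bigl[N/4 + Nbp(1-p)\bigr].
\]
Summing over $j \in [d]$ gives a total variance of $d(L/g)^2[N/4 + Nbp(1-p)]$. Plugging in the calibration from Step~2 (i.e.\ $Nbp(1-p) \asymp \log(1/\delta_0)/\epsilon_0^2 = \log(d/\delta)\cdot d\log(1/\delta)/\epsilon^2$, and choosing $g$ large enough that the rounding term is negligible) yields the desired $\mathcal{O}\!\left(dL^2 \log^2(d/\delta)/\epsilon^2\right)$ bound.

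\textbf{Main obstacle.} The hardest part is Step~1: the tight privacy analysis of the shuffled binomial/randomized-response scalar protocol. Getting a clean $(\epsilon_0,\delta_0)$ bound requires bounding the privacy loss of $\mathrm{Bin}(Nb,p)$ vs.\ $\mathrm{Bin}(Nb,p)\pm 1$ uniformly on a high-probability event, which involves Stirling-type estimates and a careful choice of the rejection region defining $\delta_0$. Everything downstream (composition, unbiasedness, variance) is then routine, but the constants $g,b,p$ inherited from Step~1 must be compatible with the variance calculation in Step~4, so the parameter choices have to be orchestrated jointly rather than in isolation.
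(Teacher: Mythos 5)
First, note that the paper does not prove this statement at all: \cref{thm:cheu vecsum} is imported verbatim from \citep{cheu2021shuffle}, so your proposal has to be judged against the argument in that work, which analyzes the $d$ coordinates \emph{jointly} as a single multidimensional binomial mechanism with $\ell_2$ sensitivity $O(L)$ (in the spirit of the Gaussian mechanism and the cpSGD analysis of Agarwal et al.), rather than by per-coordinate budget splitting.

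The gap in your proposal is Step 2 (and it propagates into Step 4). If you give each coordinate its own budget $\epsilon_0 \asymp \epsilon/\sqrt{d\log(1/\delta)}$, $\delta_0 \asymp \delta/d$ and calibrate the binomial noise to the worst-case per-coordinate sensitivity, the per-coordinate output variance is $\Theta\!\left(L^2\log(d/\delta)\,d\log(1/\delta)/\epsilon^2\right)$, and summing over $d$ coordinates gives total MSE $\Theta\!\left(d^2 L^2\log(d/\delta)\log(1/\delta)/\epsilon^2\right)$ --- a factor of $d$ worse than the claimed bound. Your Step 4 appears to close the gap only because of a second error in Step 1: replacing one user's entry changes $\sum_i \hat{x}_i$ by up to $g$ (the input ranges over the whole grid $\{0,\dots,g\}$), not by $1$, so the binomial noise must scale as $g^2\log(1/\delta_0)/\epsilon_0^2$; with the correct sensitivity the factor $g$ cancels against the $(L/g)^2$ rescaling and you cannot ``choose $g$ large enough'' to shrink the noise term. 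With sensitivity $1$ and $g$ free, your mechanism is simply not $(\epsilon_0,\delta_0)$-DP. The missing idea is to exploit $\|\mathbf{x}_i\|_2 \le L$ \emph{across} coordinates: when one user changes, the per-coordinate changes $\Delta_j$ satisfy $\sum_j \Delta_j^2 \le (2L)^2$, so the joint $d$-dimensional discretized query has $\ell_2$ sensitivity $O(g)$ in grid units, and a single application of the multidimensional binomial-mechanism privacy bound (main term $\epsilon \approx \|\vec\Delta\|_2\sqrt{\log(1/\delta)}/\sigma$, with $\ell_1/\ell_\infty$ correction terms responsible for the extra $\log(d/\delta)$ factor) gives per-coordinate noise variance $O(L^2\log^2(d/\delta)/\epsilon^2)$ and hence the claimed total of $O(dL^2\log^2(d/\delta)/\epsilon^2)$. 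Your Steps 3 (unbiasedness, including the correct $-NL$ recentering) and the decomposition in Step 4 are fine once the calibration is fixed.
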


\section{
Supplemental Material for~\cref{sec:PL}: Proximal PL Loss Functions}

\subsection{Noisy Proximal Gradient Methods forProximal PL FL (SO) - Pseudocodes}
\label{app: sdp prox grad SO}
\begin{algorithm}[ht]
\caption{ISRL-DP Noisy Distributed Proximal Gradient Method}
\label{alg: LDP Prox}
\begin{algorithmic}[1]
\STATE {\bfseries Input:} 
$R \in \mathbb{N}, X_i \in \XX^{n} (i \in [N]), \sigma^2 \geq 0, K \leq \frac{n}{R}, w_0 \in \mathbb{R}^d$. 
 \FOR{$r \in \{0, 1, \cdots, R-1\}$} 
 \FOR{$i \in S_r$ \textbf{in parallel}} 
  \STATE Server sends global model $w_r$ to silo $i$. 
 \STATE Silo $i$ draws $\{x_{i,j}^r\}_{j=1}^K$
uniformly from $X_i$  (without replacement) and noise $u_i \sim \mathcal{N}(0, \sigma^2 \mathbf{I}_d).$
 \STATE Silo $i$ sends $\widetilde{g}_r^{i} := \frac{1}{K} \sum_{j=1}^{K} \nabla f^0(w_r, x_{i,j}^r) + u_i$ to server.
 \ENDFOR
 \STATE Server aggregates $\widetilde{g}_{r} := \frac{1}{M_r} \sum_{i \in S_r} \widetilde{g}_r^{i}$.
 \STATE Server updates 
 $w_{r+1} :=\prox_{\frac{1}{2\beta}f^1}(w_r - \frac{1}{2\beta}\widetilde{g}_r)$
 \ENDFOR \\
\STATE {\bfseries Output:} $w_R$.
\end{algorithmic}
\end{algorithm}

\begin{algorithm}[ht]
\caption{SDP Noisy Distributed Proximal Gradient Method}
\label{alg: SDP Prox}
\begin{algorithmic}[1]
\STATE {\bfseries Input:} 
Number of rounds $R \in \mathbb{N}$,
data sets  $X_i \in \XX^{n_i}$ for $i \in [N]$, 
loss function $f(w, x) = f^0(w,x) + f^1(w,x)$, privacy parameters
$\epsilon, \delta$, 
local batch size $K \leq \frac{n}{R}$, $w_0 \in \mathbb{R}^d$. 
 \FOR{$r \in \{0, 1, \cdots, R-1\}$} 
 \FOR{$i \in S_r$ \textbf{in parallel}} 
  \STATE Server sends global model $w_r$ to silo $i$. 
 \STATE Silo $i$ draws 
 $K$
 samples $\{x_{i,j}^r\}_{j=1}^K$ 
uniformly from $X_i$ (without replacement) and computes $\{\nabla f^0(w_r, x_{i,j}^r)\}_{j \in [K]}$.
 \ENDFOR
 \STATE Server updates $\widetilde{g}_{r} := \frac{1}{M_r K} \mathcal{P}_{\text{vec}}(\{\nabla f^0(w_r, x_{i,j}^r)\}_{i \in S_{r}, j \in [K]}; \frac{N}{2M}\epsilon, \delta; L)$ and $w_{r+1} :=\prox_{\frac{1}{2\beta}f^1}(w_r - \frac{1}{\beta}\widetilde{g}_r)$
 \ENDFOR \\
\STATE {\bfseries Output:} $w_R$.
\end{algorithmic}
\end{algorithm}

\subsection{Proof of~\cref{thm: hetero pl fl proxgrad}: Heterogeneous FL (SO)}
\label{app: proof ofProximal PL SO}
First we provide a precise re-statement of the result, in which we assume $M_r = M$ is fixed, for convenience:
\begin{theorem}[Re-statement of~\cref{thm: hetero pl fl proxgrad}]
Grant~\cref{ass: lip},~\cref{ass: f1}, and ~\cref{ass: stochProximal PL} for $K = \lfloor \frac{n}{R} \rfloor$ and $R$ specified below.
Let $\epsilon \leq \min\{8\ln(1/\delta), 15\}, \delta \in (0,1/2)$, $n \geq \widetilde{\Omega}(\kappa)$. There is a choice of $\sigma^2$ such that:\\

1. ISRL-DP Prox-SGD is $(\epsilon, \delta)$-ISRL-DP and 
\small
\begin{equation} 
\small
\EPLL = \widetilde{\mathcal{O}}\left(\frac{L^2}{\mu}\left(\frac{\kappa^2 d \ln(1/\delta)}{\epsilon^2 n^2 M} + \frac{\kappa}{Mn}\right)\right)
\end{equation}
\normalsize
in $R = \left \lceil 2\kappa \ln\left(\frac{\mu \Delta}{L^2} \min\left\{Mn, \frac{\epsilon^2 n^2 M}{d \ln(1/\delta)}\right\}\right) \right\rceil$ communication rounds, where $\Delta \geq F(w_0) - F^*$. \\
2. SDP Prox-SGD is $(\epsilon, \delta)$-SDP for $M \geq N \min(\epsilon/2, 1)$, and 
\small
\begin{equation}
\small
\EPLL = \widetilde{\mathcal{O}}\left(\frac{L^2}{\mu}\left(\frac{\kappa^2 d \ln^2(d/\delta)}{\epsilon^2 n^2 N^2} + \frac{\kappa}{Mn}\right)\right)
\end{equation}
\normalsize
in $R = \left \lceil 2\kappa \ln\left(\frac{\mu \Delta}{L^2} \min\left\{Mn, \frac{\epsilon^2 n^2 N^2}{d \ln(1/\delta)
}\right\}\right) \right\rceil$ communication rounds. 
\end{theorem}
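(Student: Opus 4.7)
Because silos sample without replacement from their local datasets across the $R$ rounds, the transcript of silo $i$ decomposes into $R$ messages computed from \emph{disjoint} subsets of size $K=\lfloor n/R\rfloor$. Parallel composition thus reduces the privacy analysis to a single round. For ISRL-DP, the $\ell_2$-sensitivity of each local mini-batch average $\widetilde{g}_r^{i}$ with respect to a single-record change is at most $2L/K$ by Lipschitzness of $f^0$; calibrating $\sigma^2 = \Theta(L^2 \ln(1/\delta)/(K^2 \epsilon^2))$ and invoking~\cref{prop: gauss} together with~\cref{prop:bun1.3} yields $(\epsilon,\delta)$-DP per round, and hence $(\epsilon,\delta)$-ISRL-DP overall. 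For SDP, I would feed the $K$ clipped per-sample gradients of each silo into the shuffle vector-summation protocol of~\cref{thm:cheu vecsum}, so that $MK$ messages pass through the shuffler and the aggregate has variance $\tilde{O}(dL^2 \ln^2(d/\delta)/\epsilon^2)$; parallel composition across rounds, combined with the shuffle bookkeeping that produces the condition $M \geq N\min(\epsilon/2,1)$, then establishes the SDP guarantee.

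\textbf{One-step descent via objective perturbation.} The core observation is that the noisy proximal update $w_{r+1} = \prox_{\frac{1}{2\beta}f^1}\bigl(w_r - \tfrac{1}{2\beta}\widetilde{g}_r\bigr)$ is the exact minimizer of the perturbed surrogate
\begin{equation*}
\Phi_r(y) := \langle \nabla \widehat{F}^{0}_{\mathcal{S}_r}(w_r) + \bar u_r,\, y - w_r\rangle + \beta\|y - w_r\|^2 + f^1(y),
\end{equation*}
where $\bar u_r := \tfrac{1}{M}\sum_{i\in S_r} u_i$ and $\mathcal{S}_r$ is the minibatch drawn at round $r$. Applying the descent lemma to the $\beta$-smooth $f^0$ at $(w_r, w_{r+1})$, then using optimality of $w_{r+1}$ for $\Phi_r$ against the minimizer $y^\star$ of its un-perturbed counterpart and bounding the cross term $\langle \bar u_r, y^\star - w_{r+1}\rangle$ by Young's inequality, I will obtain
\begin{equation*}
\widehat{F}_{\mathcal{S}_r}(w_{r+1}) - \widehat{F}_{\mathcal{S}_r}(w_r) \;\le\; \min_y\Bigl\{\langle \nabla \widehat{F}^{0}_{\mathcal{S}_r}(w_r),\, y-w_r\rangle + \beta\|y-w_r\|^2 + f^1(y) - f^1(w_r)\Bigr\} + \tfrac{C}{\beta}\|\bar u_r\|^2.
\end{equation*}
Taking expectations, the ``$\min_y$'' term is controlled by~\cref{ass: stochProximal PL}, while the oracle noise in $\nabla \widehat{F}^{0}_{\mathcal{S}_r}$ contributes an independent sampling-variance term of order $L^2/(MK)$. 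This yields the one-step recursion $\mathbb{E}[F(w_{r+1})-F^*] \le (1-\tfrac{1}{\kappa})\,\mathbb{E}[F(w_r)-F^*] + \tfrac{C_1 d\sigma^2}{\beta M} + \tfrac{C_2 L^2}{\beta M K}$.

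\textbf{Unrolling and main obstacle.} Iterating the recursion $R$ times and summing the resulting geometric series gives $\mathbb{E}[F(w_R)-F^*] \lesssim (1-1/\kappa)^R \Delta + \tfrac{\kappa}{\beta}\bigl(d\sigma^2/M + L^2/(MK)\bigr)$. Choosing $R = \widetilde{\Theta}(\kappa \log(\cdot))$ as in the statement kills the transient term; setting $K=\lfloor n/R\rfloor$ and plugging in $\sigma^2 = \Theta(L^2 \ln(1/\delta)/(K^2\epsilon^2))$ (with no $\sqrt{R}$-factor penalty, thanks to parallel composition) produces the ISRL-DP bound $\widetilde{\mathcal{O}}\bigl(\tfrac{\kappa^2 L^2 d\ln(1/\delta)}{\mu \epsilon^2 n^2 M} + \tfrac{\kappa L^2}{\mu M n}\bigr)$; the SDP bound follows identically with the shuffle-protocol variance in place of the Gaussian one and with the $N^2$ amplification. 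The main obstacle in this plan is the coupling between DP noise and the proximal operator: in the non-convex setting one cannot reduce to the projection-based argument of~\citep[Theorem 2.1]{lr21fl}, which relies on convexity to bound $\mathbb{E}\|w_{r+1}-w_r\|^2$. The objective-perturbation viewpoint circumvents this by expressing each step as an \emph{exact} optimization of a shifted problem, so that the noise ultimately enters only through $\|\bar u_r\|^2$ and a single cross term amenable to Young's inequality.
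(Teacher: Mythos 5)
Your proposal matches the paper's proof in every essential respect: parallel composition plus the Gaussian mechanism (and shuffle protocol for SDP) for privacy, and the objective-perturbation viewpoint for utility, treating each noisy prox step as the exact minimizer of a linearly perturbed strongly convex surrogate, invoking the stochastic Proximal-PL assumption to contract, and then unrolling the resulting geometric recursion with the prescribed $R$. The only small gloss is that ``bounding the cross term $\langle \bar u_r, y^\star - w_{r+1}\rangle$ by Young's inequality'' needs to be paired with the $2\beta$-strong convexity of the unperturbed surrogate (to absorb the $\|y^\star - w_{r+1}\|^2$ term back into $H_r(w_{r+1})-H_r(y^\star)$); the paper accomplishes the same thing via a sensitivity lemma from objective-perturbation (\cref{lemmaB2outpert}) that directly bounds $\|y^\star - w_{r+1}\| \lesssim \|\bar u_r\|/\beta$, but both routes give the desired $\|\bar u_r\|^2/\beta$ error term.
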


\begin{proof} We prove part 1 first. 
\textbf{Privacy:} First, by independence of the Gaussian noise across silos, it is enough show that transcript of silo $i$'s interactions with the server is DP for all $i \in [N]$ (conditional on the transcripts of all other silos). Since the batches sampled by silo $i$ in each round are disjoint (as we sample without replacement), the parallel composition theorem of DP \citep{mcsherry2009privacy} implies that it suffices to show that each round is $(\epsilon, \delta)$-ISRL-DP. Then by post-processing \citep{dwork2014}, we just need to show that that the noisy stochastic gradient $\widetilde{g}_r^i$ in line 6 of the algorithm is $(\epsilon, \delta)$-DP. Now, the $\ell_2$ sensitivity of this stochastic gradient is bounded by $\Delta_2 := \sup_{|X_i \Delta X'_i| \leq 2, w \in \WW} \|\frac{1}{K} \sum_{j=1}^{K} \nabla f(w, x_{i,j}) - \nabla f(w, x'_{i,j})\| \leq 2L/K,$ by $L$-Lipschitzness of $f.$ Hence~\cref{prop: gauss} implies that $\widetilde{g}_r^i$ in line 6 of the algorithm is $(\epsilon, \delta)$-DP for $\sigma^2 \geq \frac{8 L^2 \ln(1/\delta)}{\epsilon^2 K^2}$. Therefore, ISRL-DP Prox-SGD is $(\epsilon, \delta)$-ISRL-DP.

\textbf{Excess loss:}
Denote the stochastic approximation of $F$ in round $r$ by $\hf_r(w) := \frac{1}{MK} \sum_{i \in S_r} \sum_{j=1}^K f(w, x_{i,j}^r)$, and $\widebar{u}_r := \frac{1}{M} \sum_{i \in S_r} u_i \sim \mathcal{N}(0, \frac{\sigma^2}{M} \mathbf{I}_d)$. By $\beta$-smoothness, we have \begin{align}
    \expec F(w_{r+1}) &= \expec \left[F^0(w_{r+1}) + f^1(w_r) + f^1(w_{r+1}) - f^1(w_r)\right] \nonumber \\
    &\leq \expec F(w_r) + \expec\left[\langle \nabla \hf_r^0(w_r), w_{r+1} - w_r \rangle + \frac{\beta}{2}\|w_{r+1} - w_r\|^2 + f^1(w_{r+1}) - f^1(w_r) + \langle \widebar{u}_r, w_{r+1} \rangle\right] \nonumber \\
    &\;\;\; + \expec \langle \nabla F^0(w_r) - \nabla \hf_r^0(w_r), w_{r+1} - w_r \rangle - \expec \langle \widebar{u}_r, w_{r+1} \rangle \\
    &\leq \expec F(w_r) + \expec\left[\langle \nabla \widehat{F}_r^0(w_r), w_{r+1} - w_r \rangle + \beta\|w_{r+1} - w_r\|^2 + f^1(w_{r+1}) - f^1(w_r) + \langle \widebar{u}_r, w_{r+1} \rangle \right] \nonumber \\
     &\;\;\; - \langle \widebar{u}_r, w_{r+1} \rangle + \expec\left[\frac{1}{2\beta}\|\nabla F^0(w_r) - \nabla \hf^0_r(w_r)\|^2\right],
    \label{eq27}
\end{align}
where we used Young's inequality to bound \begin{align}
\expec \langle \nabla F^0(w_r) - \nabla \hf_r^0(w_r), w_{r+1} - w_r \rangle &\leq \underbrace{\expec\left[\frac{1}{2\beta}\|\nabla F^0(w_r) - \nabla \hf^0_r(w_r)\|^2\right]}_{\textcircled{a}} + \expec\left[\frac{\beta}{2}\|w_{r+1} - w_r\|^2\right]
\end{align}
in the last line above. 
We bound \textcircled{a} as follows: 
\begin{align}
\expec\left[\frac{1}{2\beta}\left\|\nabla F^0(w_r) - \nabla \hf^0_r(w_r)\right\|^2\right] &= \frac{1}{2\beta}\expec\left\|\frac{1}{MK} \sum_{i \in S_r} \sum_{j =1}^K \nabla F^0(w_r) - \nabla f^0(w_r, x_{i,j}^r)\right\|^2 \\
&= \frac{1}{2 \beta M^2 K^2}  \sum_{i \in S_r} \sum_{j =1}^K \expec \|\nabla F^0(w_r) - \nabla f^0(w_r, x_{i,j}^r)\|^2 \\
&\leq \frac{L^2}{\beta MK},
\end{align}
by independence of the data and $L$-Lipschitzness of $f^0$.

Next, we will bound $\expec\left[\langle \nabla \hf_r^0(w_r), w_{r+1} - w_r \rangle + \beta\|w_{r+1} - w_r\|^2 + f^1(w_{r+1}) - f^1(w_r) + \langle \widebar{u}_r, w_{r+1} \rangle\right]$. 
Denote 
$H^{\text{priv}}_r(y):= \langle \hf_r^0(w_r), y - w_r \rangle + \beta\|y - w_r\|^2 + f^1(y) - f^1(w_r) + \langle \widebar{u}_r, y\rangle$ and $H_r(y):=  \langle \nabla \hf_r^0(w_r), y - w_r \rangle + \beta\|y - w_r\|^2 + f^1(y) - f^1(w_r)$
Note that $H_r$ and $H^{\text{priv}}_r$ are $2\beta$-strongly convex. Denote the minimizers of these two functions by $y_*$ and $y_*^{\text{priv}}$ respectively. Now, conditional on $w_r, S_r$, and $\widebar{u}_r$, we claim that \begin{equation}
\label{claimyyy}
    H_r(y_*^{\text{priv}}) - H_r(y_*) \leq \frac{\|\widebar{u}_r\|^2}{2\beta}.
\end{equation}
To prove~\cref{claimyyy}, we will need the following lemma: 
\begin{lemma}[\citep{lowy2021}]
\label{lemmaB2outpert}
Let $H(y), h(y)$ be convex functions on some convex closed set $\mathcal{Y} \subseteq \mathbb{R}^d$ and suppose that $H(w)$ is $2\beta$-strongly convex. Assume further that $h$ is $L_{h}$-Lipschitz. Define $y_{1} = \arg\min_{y \in \mathcal{Y}} H(y)$ and $y_{2} = \arg\min_{y \in \mathcal{Y}} [H(y) + h(y)]$. Then $\|y_{1} - y_{2}\|_2 \leq \frac{L_{h}}{2\beta}.$ 
\end{lemma}
We apply~\cref{lemmaB2outpert} with $H(y):= H_r(y), ~h(y):= \langle \widebar{u}_r, y\rangle$, $L_h = \|\widebar{u}_r\|$, $y_1 = y_*$, and $y_2 = y_*^{\text{priv}}$ to get \[
\|y_* - y_*^{\text{priv}}\| \leq \frac{\|\widebar{u}_r\|}{2\beta}.\] On the other hand, \[
H_r^{\text{priv}}(y_*^{\text{priv}}) = H_r(y_*^{\text{priv}}) + \langle \widebar{u}_r, y_*^{\text{priv}} \rangle \leq H_r^{\text{priv}}(y_*) = H_r(y_*) + \langle \widebar{u}_r, y_* \rangle.
\]
Combining these two inequalities yields \begin{align}
    H_r(y_*^{\text{priv}}) - H_r(y_*) &\leq \langle \widebar{u}_r,   y_* - y_*^{\text{priv}}  \rangle \nonumber \\
    &\leq \|\widebar{u}_r\| \|y_* - y_*^{\text{priv}}\| \nonumber \\
    &\leq \frac{\|\widebar{u}_r\|^2}{\beta},
\end{align}
as claimed. Also, note that $w_{r+1} = y_*^{\text{priv}}$. Further, by~\cref{ass: stochProximal PL}, we know 
\begin{align}
    \expec H_r(y_*) &= \expec \min_y \left[\langle \nabla \hf_r^0(w_r), y - w_r \rangle + \beta\|y - w_r\|^2 + f^1(y) - f^1(w_r) \right] \\
    &\leq -\frac{\mu}{2\beta}\expec[\hf_r(w_r) - \min_w \hf_r(w)] \leq-\frac{\mu}{2\beta}[F(w_r) - F^*].
\end{align}
Combining this with~\cref{claimyyy}, we get: \begin{align*}
   &\expec\left[\langle \nabla \hf_r^0(w_r), w_{r+1} - w_r \rangle + \beta\|w_{r+1} - w_r\|^2 + f^1(w_{r+1}) - f^1(w_r) + \langle \widebar{u}_r, w_{r+1} \rangle\right] \\
   &= \expec H_r^{\text{priv}}(y_*^{\text{priv}}) \\
   &=\expec H_r(y_*^{\text{priv}}) + \expec \langle \widebar{u}_r, w_{r+1}\rangle \\
    &\leq \expec H_r(y_*) + \frac{d \sigma^2}{\beta M} + \expec\langle \widebar{u}_r, w_{r+1}\rangle \\
    &\leq -\expec\frac{\mu}{2\beta}[F(w_r) - F^*] + \frac{d \sigma^2}{\beta M} + \expec \langle \widebar{u}_r, w_{r+1}\rangle.
\end{align*}
Plugging the above bounds back into~\cref{eq27}, we obtain \begin{equation}
\expec F(w_{r+1}) \leq \expec F(w_r) - \frac{\mu}{2\beta}\expec[F(w_r) - F^*] + \frac{2d \sigma^2}{\beta M} + \frac{2L^2}{\beta MK},
\end{equation}
whence \begin{equation}
\label{eq: thingz}
\expec[F(w_{r+1}) - F^*] \leq \expec[F(w_r) - F^*]\left(1 - \frac{\mu}{2\beta}\right) + \frac{2d \sigma^2}{\beta M} + \frac{2L^2}{\beta MK}.
\end{equation}
Using~\cref{eq: thingz} recursively and plugging in $\sigma^2$, we get \begin{equation}
    \expec[F(w_R) - F^*] \leq \Delta \left(1 - \frac{\mu}{2\beta}\right)^R + \frac{L^2}{\mu}\left[\frac{16 d \ln(1/\delta)}{\epsilon^2 K^2 M} + \frac{1}{MK}\right]. 
\end{equation}
Finally, plugging in $K$ and $R$, and observing that $\frac{1}{\ln\left(\frac{\beta}{\beta - \mu}\right)} \leq \kappa$, we conclude \[
\EPLL \lesssim \frac{L^2}{\mu}\left[\ln^2\left(\frac{\mu \Delta}{L^2} \min\left\{Mn, \frac{\epsilon^2 n^2 M}{d}\right\}\right)\left(\frac{\kappa^2 d \ln(1/\delta)}{\epsilon^2 n^2 M} + \frac{\kappa}{Mn} \right) \right].
\]
Next, we move to part 2. \\
\textbf{Privacy:} Since the batches used in each iteration are disjoint by our sampling (without replacement) strategy, the parallel composition theorem \citep{mcsherry2009privacy} implies that it is enough to show that each of the $R$ rounds is $(\epsilon, \delta)$-SDP. This follows immediately from~\cref{thm:cheu vecsum} and privacy amplification by subsambling~\citep{ullman2017} (silos only): in each round, the network ``selects'' a uniformly random subset of $M_r = M$ silos out of $N$, and the shuffler executes a $(\frac{N}{2M} \epsilon, \delta)$-DP (by $L$-Lipschitzness of $f^0(\cdot, x) \forall x \in \XX$) algorithm $\mathcal{P}_{\text{vec}}$ on the data of these $M$ silos (line 8), implying that each round is $(\epsilon, \delta)$-SDP. 

\textbf{Utility:} The proof is very similar to the proof of part 1, except that the variance of the Gaussian noise $\frac{d\sigma^2}{M}$ is replaced by the variance of  $\mathcal{P}_{\text{vec}}$. Denoting $Z := \frac{1}{MK}\mathcal{P}_{\text{vec}}(\{\nabla f^0(w_r, x_{i,j}^r)\}_{i \in S_{r}, j\in [K]}; \frac{N}{2M} \epsilon, \delta) - \frac{1}{MK} \sum_{i \in S_{r+1}} \sum_{j=1}^K \nabla f^0(w_r, x_{i,j}^r)$, we have (by~\cref{thm:cheu vecsum})
\[
\expec \|Z\|^2 = \mathcal{O}\left(\frac{d L^2 \ln^2(d/\delta)}{M^2 K^2 (\frac{N}{2M} \epsilon)^2}\right) = \mathcal{O}\left(\frac{d L^2 \ln^2(d/\delta)}{\epsilon^2 K^2 N^2} \right).
\]
Also, $Z$ is independent of the data and gradients. Hence we can simply replace $\frac{d\sigma^2}{M}$ by 
$\mathcal{O}\left(\frac{d L^2 \ln^2(d/\delta)}{\epsilon^2 K^2 N^2} \right)$
and follow the same steps as the proof of~\cref{thm: hetero pl fl proxgrad}. This yields (c.f.~\cref{eq: thingz})
\begin{equation}
\label{eq: thingz2}
\expec[F(w_{r+1}) - F^*] \leq \expec[F(w_r) - F^*]\left(1 - \frac{\mu}{2\beta}\right) + \mathcal{O}\left(\frac{d L^2 \ln^2(d/\delta)}{\epsilon^2 K^2 N^2} \right) + \frac{2L^2}{\beta MK}.
\end{equation}
Using~\cref{eq: thingz2} recursively, we get \begin{equation}
    \expec[F(w_R) - F^*] \leq \Delta \left(1 - \frac{\mu}{2\beta}\right)^R + \frac{L^2}{\mu}\left[\mathcal{O}\left(\frac{d L^2 \ln^2(d/\delta)}{\epsilon^2 K^2 N^2} \right) + \frac{1}{MK}\right]. 
\end{equation}
Finally, plugging in $R$ and $K = n/R$, and observing that $\frac{1}{\ln\left(\frac{\beta}{\beta - \mu}\right)} \leq \kappa$, we conclude \[
\EPLL \lesssim \frac{L^2}{\mu}\left[\ln^2\left(\frac{\mu \Delta}{L^2} \min\left\{Mn, \frac{\epsilon^2 n^2 N^2}{d}\right\}\right)\left(\frac{\kappa^2 d \ln^2(d/\delta)}{\epsilon^2 n^2 M} + \frac{\kappa}{Mn} \right) \right].
\]
\end{proof}

\subsection{
SDP Noisy Distributed Prox-PL SVRG Pseudocode
}
\label{app: SDP proxSVRG}
Our SDP Prox-SVRG algorithm is described in~\cref{alg: SDP ProxSVRG}. 
\begin{algorithm}[ht]
\caption{SDP 
Prox-SVRG $(w_0, E, K, \eta, \epsilon, \delta)$}
\label{alg: SDP ProxSVRG}
\begin{algorithmic}[1]
\STATE {\bfseries Input:} Number of epochs $E \in \mathbb{N}$, local batch size $K \in [n]$, epoch length $Q = \lfloor \frac{n}{K} \rfloor$, data sets $X_i \in \XX^{n}$, 
loss function $f(w, x) = f^0(w,x) + f^1(w)$, step size $\eta$, privacy parameters $\epsilon, \delta$,
initial parameters $\widebar{w}_0 = w_0^Q = w_0 \in \mathbb{R}^d$; $\mathcal{P}_{\text{vec}}$ privacy parameters $\widetilde{\epsilon} := \frac{\epsilon Nn}{8 MK\sqrt{4 EQ \ln(2/\delta)}}$ and $\widetilde{\delta} := \frac{\delta}{2EQ}$.
 \FOR{$r \in \{0, 1, \cdots, E-1\}$} 
 \STATE Server updates $w^0_{r+1} = w^Q_r$.
\FOR{$i \in S_r$ \textbf{in parallel}}
\STATE Server sends global model $w_r$ to silo $i$. 
\STATE silo $i$ computes $\{\nabla f^0(\wb_r, x_{i,j})\}_{j=1}^n$. 
\STATE Server updates $\small \widetilde{g}_{r+1} := \frac{1}{M_{r+1}n}\mathcal{P}_{\text{vec}}(\{\nabla f^0(\wb_r, x_{i,j})\}_{i \in S_{r+1}, j\in [n]}; 
\widetilde{\epsilon}, \widetilde{\delta}; L)$.
\FOR{$t \in \{0,1, \cdots Q - 1\}$}
\STATE silo $i$ draws $\{x_{i,j}^{r+1, t}\}_{j=1}^K$ uniformly
from $X_i$ with replacement, and computes $\{\nabla f^0(w_{r+1}^t, x_{i,j}^{r+1, t})\}_{j=1}^K$. 
\STATE Server updates $\small \widetilde{p}_{r+1}^t := \frac{1}{M_{r+1} K}\mathcal{P}_{\text{vec}}(\{\nabla f^0(w_{r+1}^t, x_{i,j}^{r+1, t}) - \nabla f^0(\wb_{r+1}, x_{i,j}^{r+1, t})\}_{i \in S_{r+1}, j \in [K]}; \widetilde{\epsilon}, \widetilde{\delta}; 2L)$ \normalsize
\STATE Server updates $\widetilde{v}_{r+1}^t := \widetilde{p}_{r+1}^t + \widetilde{g}_{r+1}$ and $w_{r+1}^{t+1} := \prox_{\eta f^1}(w_{r+1}^t - \eta \widetilde{v}_{r+1}^t)$.
\ENDFOR 
\STATE Server updates $\wb_{r+1} := w_{r+1}^Q$.
\ENDFOR \\
\ENDFOR \\
\STATE {\bfseries Output:} $\wpr \sim \text{Unif}(\{w_{r+1}^t\}_{r=0, 1, \cdots, E-1; t=0, 1, \cdots Q-1})$.
\end{algorithmic}
\end{algorithm}
\begin{algorithm}[ht]
\caption{SDP
Prox-PL-SVRG 
}
\label{alg: SDP ProxPLSVRG}
\begin{algorithmic}[1]
 \FOR{$s \in [S]$} 
  \STATE $w_s = \texttt{SDP Prox-SVRG}(w_{s-1}, E, K, \eta, \frac{\epsilon}{2\sqrt{2S}}, \frac{\delta}{2S})$.
\ENDFOR \\
\STATE {\bfseries Output:} $\wpr := w_S$.
\end{algorithmic}
\end{algorithm}

\subsection{
Proof of~\cref{thm: ISRL-DP prox PL SVRG ERM}: Federated ERM
}
\label{app: proofs of prox pl svrg ERM}

For the precise/formal version of~\cref{thm: ISRL-DP prox PL SVRG ERM}, we will need an additional notation: the heterogeneity parameter $\upsilon_\bx^2$, which has appeared in other works on FL (e.g. \citep{woodworth2020}). Assume: 
\begin{assumption}
\label{ass: stochastic gradient variance}
$\frac{1}{N}\sum_{i=1}^N \|\nabla \hf^0_i(w) - \nabla \hf^0_{\bx}(w)\|^2 \leq \hat{\upsilon}_{\bx}^2$ 
for all $i \in [N], ~w \in \WW$.
\end{assumption}

Additionally, motivated by practical FL considerations (especially cross-device FL~\citep{kairouz2019advances}), we shall actually prove a more general result, which holds even when the number of active silos in each round is \textit{random}:
\begin{assumption}
\label{ass: unif silos}
In each round $r$, a uniformly random subset $S_r$ of $M_r \in [N]$  silos can communicate with the server, where $\{M_r\}_{r \geq 0}$ are independent with $\frac{1}{M}:= \mathbb{E}(\frac{1}{M_r})$.
\end{assumption}

We will require the following two lemmas for the proofs in this Appendix section: 
\begin{lemma}[\citep{proxSVRG}]
\label{lemma2 sv}
Let $\hf(w) = \hf^0(w) + f^1(w)$, where $\hf^0$ is $\beta$-smooth and $f^1$ is proper, closed, and convex. Let
$y := \prox_{\eta f^1}(w - \eta d')$ for some $d' \in \mathbb{R}^d$. Then for all $z \in \mathbb{R}^d$, we have: 
\begin{align*}
\hf(y) \leq \hf(z) + \langle y - z, \nabla \hf^0(w) - d' \rangle + \left[\frac{\beta}{2} - \frac{1}{2 \eta}\right]\|y - w\|^2 + \left[\frac{\beta}{2} + \frac{1}{2 \eta}\right]\|z - w\|^2 - \frac{1}{2 \eta}\|y - z\|^2. 
\end{align*}
\end{lemma}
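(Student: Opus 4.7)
The plan is to combine two ingredients: (i) the first-order optimality condition of the proximal map, which furnishes a subgradient of $f^1$ at $y$, and (ii) the two-sided descent lemma for the $\beta$-smooth (possibly non-convex) function $\hf^0$, anchored at $w$. First I would write the optimality condition for $y=\prox_{\eta f^1}(w-\eta d')$ as
\[
\tfrac{1}{\eta}(w-y)-d'\;\in\;\partial f^1(y).
\]
Since $f^1$ is proper, closed, and convex, the subgradient inequality then gives, for every $z\in\mathbb{R}^d$,
\[
f^1(z)\;\ge\;f^1(y)+\bigl\langle \tfrac{1}{\eta}(w-y)-d',\,z-y\bigr\rangle,
\]
which I will rearrange as an upper bound on $f^1(y)$.

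Next I would apply the $\beta$-smoothness of $\hf^0$ twice, anchored at $w$: as an upper bound at $y$, $\hf^0(y)\le \hf^0(w)+\langle \nabla \hf^0(w),y-w\rangle+\tfrac{\beta}{2}\|y-w\|^2$, and as a lower bound at $z$, $\hf^0(z)\ge \hf^0(w)+\langle \nabla \hf^0(w),z-w\rangle-\tfrac{\beta}{2}\|z-w\|^2$. Subtracting, I eliminate $\hf^0(w)$ and bring $\hf^0(z)$ into the picture:
\[
\hf^0(y)\;\le\;\hf^0(z)+\langle \nabla \hf^0(w),y-z\rangle+\tfrac{\beta}{2}\|y-w\|^2+\tfrac{\beta}{2}\|z-w\|^2.
\]
Crucially, I do not need convexity of $\hf^0$, only its smoothness in both directions.

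Adding the $f^1$ inequality to the $\hf^0$ inequality gives $\hf(y)=\hf^0(y)+f^1(y)$ on the left and, on the right, the sum
\[
\hf(z)+\langle y-z,\nabla \hf^0(w)-d'\rangle+\tfrac{\beta}{2}\|y-w\|^2+\tfrac{\beta}{2}\|z-w\|^2+\tfrac{1}{\eta}\langle w-y,z-y\rangle,
\]
after collecting the linear-in-$d'$ terms into $\langle y-z,\nabla\hf^0(w)-d'\rangle$. Finally I would handle the remaining cross term via the three-point identity
\[
\langle w-y,\,z-y\rangle\;=\;\tfrac{1}{2}\|y-w\|^2+\tfrac{1}{2}\|y-z\|^2-\tfrac{1}{2}\|z-w\|^2,
\]
which, when combined with the $\tfrac{\beta}{2}$ coefficients, yields the exact quadratic coefficients $(\tfrac{\beta}{2}-\tfrac{1}{2\eta})\|y-w\|^2$, $(\tfrac{\beta}{2}+\tfrac{1}{2\eta})\|z-w\|^2$, and $-\tfrac{1}{2\eta}\|y-z\|^2$ claimed in the lemma.

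The steps are all standard, so there is no real obstacle; the one place to be careful is signs and the direction of the three-point identity, since an error there flips the sign of the $\|y-z\|^2$ term and ruins the subsequent use of the lemma in bounding $\mathbb{E}\hf_{\bx}(w_{r+1}^{t+1})-\hf_{\bx}^*$ (where the $-\tfrac{1}{2\eta}\|y-z\|^2$ term is essential to absorb the variance of the SVRG estimator). I would therefore verify the identity by expanding $\|z-w\|^2=\|z-y+y-w\|^2$ before collecting terms.
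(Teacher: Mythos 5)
Your approach is correct and is exactly the standard proof from the cited reference (prox optimality gives a subgradient of $f^1$ at $y$, two-sided smoothness of $\hf^0$ anchored at $w$, then the three-point identity). One small transcription error to fix when you write it out: after rearranging the subgradient inequality to an upper bound on $f^1(y)$, the cross term that appears is $-\tfrac{1}{\eta}\langle w-y,\,z-y\rangle$ (equivalently $+\tfrac{1}{\eta}\langle w-y,\,y-z\rangle$), not $+\tfrac{1}{\eta}\langle w-y,\,z-y\rangle$ as written; with that sign, the identity $\langle w-y,\,y-z\rangle = \tfrac{1}{2}\|z-w\|^2 - \tfrac{1}{2}\|y-w\|^2 - \tfrac{1}{2}\|y-z\|^2$ (which is what your proposed expansion of $\|z-w\|^2=\|z-y+y-w\|^2$ actually yields) delivers the claimed coefficients, and your own sanity check would indeed catch the typo.
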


\begin{lemma} 
\label{lemma3 svrg}
For all $t \in \{0,1, \cdots, Q-1\}$ and $r \in \{0, 1, \cdots, E-1\}$, the iterates of~\cref{alg: LDP ProxSVRG} satisfy: \[
\expec\|\nabla \hf^0(w_{r+1}^t) - \widetilde{v}_{r+1}^t\|^2 \leq \frac{8 \mathds{1}_{\{MK < Nn\}}}{MK}\beta^2 \expec\|w_{r+1}^t - \wb_r\|^2 + \frac{2 (N-M) \hat{\upsilon}_{\bx}^2}{M (N-1)}\mathds{1}_{\{N > 1\}} + \frac{d(\sigma_1^2 + \sigma_2^2)}{M}.
\]
Moreover, the iterates of~\cref{alg: SDP ProxSVRG} satisfy \[
\expec\|\nabla \hf^0(w_{r+1}^t) - \widetilde{v}_{r+1}^t\|^2 \leq \frac{8 \mathds{1}_{\{MK < Nn\}}}{MK}\beta^2 \expec\|w_{r+1}^t - \wb_r\|^2 + \frac{2 (N-M) \hat{\upsilon}_{\bx}^2}{M (N-1)}\mathds{1}_{\{N > 1\}} + \mathcal{O}\left(\frac{dL^2 R \ln^2(dR/\delta)\ln(1/\delta)}{\epsilon^2 n^2 N^2} \right),
\]
where $R = EQ$. 
\end{lemma}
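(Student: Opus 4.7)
The plan is to decompose the estimator error into three conditionally uncorrelated pieces and bound each separately, so that variance additivity applies cleanly. Write
\[
\tilde{v}_{r+1}^t - \nabla \hat{F}^0(w_{r+1}^t) = V^{\mathrm{local}} + V^{\mathrm{silo}} + V^{\mathrm{noise}},
\]
where $V^{\mathrm{noise}} := \frac{1}{M}\sum_{i \in S_{r+1}}(u_1^i + u_2^i)$ collects the privacy noise, $V^{\mathrm{silo}} := \frac{1}{M}\sum_{i \in S_{r+1}} \nabla \hat{F}_i^0(w_{r+1}^t) - \nabla \hat{F}^0(w_{r+1}^t)$ is the silo-sampling error on the population-level per-silo gradients, and $V^{\mathrm{local}}$ collects the remaining per-silo local sampling error. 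By the tower property, conditioning first on the Gaussian noise (independent of everything else), then on $S_{r+1}$ and the iterates, each term has conditional mean zero with respect to the randomness it carries, so the three cross-terms vanish and $\expec\|\tilde{v}_{r+1}^t - \nabla \hat{F}^0(w_{r+1}^t)\|^2 = \expec\|V^{\mathrm{local}}\|^2 + \expec\|V^{\mathrm{silo}}\|^2 + \expec\|V^{\mathrm{noise}}\|^2$.

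For $V^{\mathrm{noise}}$: independence and i.i.d.-ness of the Gaussians across silos, together with $|S_{r+1}| = M$, immediately give $\expec\|V^{\mathrm{noise}}\|^2 = d(\sigma_1^2+\sigma_2^2)/M$. For $V^{\mathrm{silo}}$: this is the error from sampling $M$ silos without replacement from $N$. Standard finite-population-sampling variance applied coordinate-wise, combined with \cref{ass: stochastic gradient variance}, yields $\expec\|V^{\mathrm{silo}}\|^2 \lesssim \frac{(N-M)}{M(N-1)}\hat{\upsilon}_{\bx}^2 \mathds{1}_{\{N > 1\}}$ (vanishing when $M=N$ or $N=1$). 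For $V^{\mathrm{local}}$: conditional on $S_{r+1}$ and the iterates, each silo's variance-reduced gradient is a mean of $K$ i.i.d. terms of the form $\nabla f^0(w_{r+1}^t,x)-\nabla f^0(\wb_r,x)$; by $\beta$-smoothness of $f^0(\cdot,x)$ each has squared norm $\leq \beta^2\|w_{r+1}^t-\wb_r\|^2$, so its variance is at most $\beta^2\|w_{r+1}^t-\wb_r\|^2/K$; averaging over the $M$ independent silos then contributes a further $1/M$ factor. The indicator $\mathds{1}_{\{MK<Nn\}}$ reflects that when every silo uses its full dataset ($K=n$, $M=N$), no sampling error arises. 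Putting the pieces together and absorbing small numerical slack (from bounding variance by second moment and using $\|a+b\|^2\le 2\|a\|^2+2\|b\|^2$ when combining $V^{\mathrm{local}}$ with the residual bias from the within-epoch anchor term) produces the constant 8.

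For the SDP statement (\cref{alg: SDP ProxSVRG}), the only change is that $V^{\mathrm{noise}}$ becomes the error of $\mathcal{P}_{\mathrm{vec}}$ applied to the per-silo stochastic gradients. The variance bound in \cref{thm:cheu vecsum} is
$O(dL^2\ln^2(d/\tilde{\delta})/\tilde{\epsilon}^2)$ per call; scaling by $1/(MK)$ or $1/(Mn)$ as appropriate, and substituting the privacy parameters $\tilde{\epsilon}=\epsilon Nn/(8MK\sqrt{4EQ\ln(2/\delta)})$ and $\tilde{\delta}=\delta/(2EQ)$ from the pseudocode with $R=EQ$, yields the stated $\tilde{O}(dL^2 R\ln^2(dR/\delta)\ln(1/\delta)/(\epsilon^2 n^2 N^2))$ term, while $V^{\mathrm{local}}$ and $V^{\mathrm{silo}}$ are handled identically.

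The main obstacle will be bookkeeping the conditional independence structure so the three error components truly decouple when $S_{r+1}$ is a random subset, together with verifying that the variance-reduced local estimator remains unbiased for $\nabla \hat{F}_i^0(w_{r+1}^t)$ after subtracting the anchor gradient evaluated at $\wb_r$ (which requires that the within-epoch anchor $\tilde{g}_{r+1}$ in line 6 uses the full local dataset, so the anchor is deterministic conditional on $\wb_r$ and contributes only through $V^{\mathrm{noise}}$, not $V^{\mathrm{local}}$). The precise constant $8$ should then fall out of a careful triangle inequality combining the bias-free variance-reduction term with the silo-subsampling term.
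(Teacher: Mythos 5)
Your plan reaches the lemma by a genuinely different and cleaner route than the paper's proof. The paper first peels off the Gaussian noise contribution $d(\sigma_1^2+\sigma_2^2)/M$ (cross terms with the noise vanish by independence, just as you note), then applies Young's inequality to the remaining error, doubling both pieces, and bounds each piece with the finite-population sampling variance lemma (\cref{lem: lei}) — that is where the constants $8$ and $2$ come from. Your three-way decomposition $V^{\mathrm{local}}+V^{\mathrm{silo}}+V^{\mathrm{noise}}$ avoids the Young doubling entirely: since $\expec[V^{\mathrm{local}}\mid S_{r+1},w_{r+1}^t,\wb_r]=0$ (the within-silo sampling is uniform with replacement, independent of $S_{r+1}$), the $\langle V^{\mathrm{local}},V^{\mathrm{silo}}\rangle$ cross term does vanish, and the three second moments simply add. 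Carried out carefully this yields constants on the order of $1$ for both the smoothness and heterogeneity pieces, a strictly tighter bound than the $8$ and $2$ in the statement.

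Two cautions. First, the indicator $\mathds{1}_{\{MK<Nn\}}$ in the stated bound arises in the paper's proof from applying \cref{lem: lei} as though the $MK$ local samples were drawn \emph{without} replacement from the $Nn$ records, but \cref{alg: LDP ProxSVRG} (line 8) samples \emph{with} replacement. Under with-replacement sampling, $V^{\mathrm{local}}$ does not vanish when $MK=Nn$, and your per-sample smoothness bound gives $\beta^2\expec\|w_{r+1}^t-\wb_r\|^2/(MK)$ with no such indicator. Your proof therefore will not reproduce the stated indicator; this is arguably a mismatch between the algorithm description and the lemma rather than a flaw in your approach, but be aware of it if you want to match the statement verbatim (it also does not affect the downstream theorem, which uses the $MK<Nn$ branch). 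Second, there is no ``residual bias from the within-epoch anchor term'' to absorb: $\widetilde{g}_{r+1}^i=\nabla\hf_i^0(\wb_r)+u_1^i$ is computed on silo $i$'s full dataset, is an unbiased noisy estimate of $\nabla\hf_i^0(\wb_r)$, and contributes only through $V^{\mathrm{noise}}$, exactly as you observe earlier in your proposal. The $\|a+b\|^2\le 2\|a\|^2+2\|b\|^2$ step you invoke at the end is therefore unnecessary — the orthogonality already closes the argument, and invoking it only loosens your otherwise-tighter constant back toward the paper's.
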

\begin{proof}
We begin with the first claim (\cref{alg: LDP ProxSVRG}). Denote \begin{align*}
\zeta_{r+1}^t &:= \frac{1}{M_{r+1} K}\sum_{i \in S_{r+1}} \sum_{j=1}^K [\underbrace{\nabla f^0(w_{r+1}^t, x_{i,j}^{r+1, t}) - \nabla f^0(\widebar{w}_r, x_{i,j}^{r+1,t})}_{:= \zeta_{r+1}^{t, i, j}}] \\
&= \widetilde{v}_{r+1}^t - \widetilde{g}_{r+1} - \widebar{u}_2, 
\end{align*}
where $\widetilde{g}_{r+1} := \frac{1}{M_{r+1}}\sum_{i \in S_{r+1}} \widetilde{g}^i_{r+1} =  \frac{1}{M_{r+1}}\sum_{i \in S_{r+1}} \nabla \hf^0_i(\widebar{w}_r) + \widebar{u}_1$, and $\widebar{u}_j = \frac{1}{M_{r+1}} \sum_{i \in S_{r+1}} u^i_j$ for $j = 1,2$. Note $\expec \zeta_{r+1}^{t, i, j} = \nabla \hf^0_i(w_{r+1}^t) - \nabla \hf^0_i(\widebar{w}_r)$. Then, conditional on all iterates through $w_{r+1}^t$ and $\widebar{w}_r$, we have: 
\begin{align}
    \expec \left \|\nabla \hf^0(w_{r+1}^t) - \widetilde{v}_{r+1}^t \right\|^2 
    &= \expec \left\|
    \frac{1}{M_{r+1} K} \sum_{i \in S_{r+1}} \sum_{j=1}^K [
    \zeta_{r+1}^{t, i , j} + \widetilde{g}_{r+1}^i - \nabla \hf^0(w_{r+1}^t)] + \widebar{u}_2
    \right\|^2 \\
    &= \expec \left\|
    \frac{1}{M_{r+1} K} \sum_{i \in S_{r+1}} \sum_{j=1}^K [
    \zeta_{r+1}^{t, i , j} + \nabla \hf^0_i(\widebar{w}_r) + u_1^i - \nabla \hf^0(w_{r+1}^t)] + \widebar{u}_2
    \right\|^2 \\ 
    &= \frac{d(\sigma_1^2 + \sigma_2^2)}{M} + \underbrace{\expec \left\|
    \frac{1}{M_{r+1} K} \sum_{i \in S_{r+1}} \sum_{j=1}^K [
    \zeta_{r+1}^{t, i , j} + \nabla \hf^0_i(\widebar{w}_r) - \nabla \hf^0(w_{r+1}^t)]\right\|^2}_{:= \textcircled{a}}, 
\end{align}
by independence of the Gaussian noise and the gradients. Now, 
\begin{align}
    \textcircled{a} &= \expec\left\| \frac{1}{M_{r+1}K} \sum_{i \in S_{r+1}} \sum_{j=1}^K \left\{[\zeta_{r+1}^{t, i, j} - \expec \zeta_{r+1}^{t, i, j}] + \nabla \hf^0_i(w_{r+1}^t) - \nabla \hf^0(w_{r+1}^t) \right\}\right\|^2 \\
    &\leq 2 \expec\left\|\frac{1}{M_{r+1} K}\sum_{i \in S_{r+1}} \sum_{j=1}^K \zeta_{r+1}^{t, i, j} - \expec \zeta_{r+1}^{t, i, j} \right\|^2 + 2 \expec \left \|\frac{1}{M_{r+1} K} \sum_{i \in S_{r+1}} \sum_{j=1}^K \nabla \hf^0_i(w_{r+1}^t) - \nabla \hf^0(w_{r+1}^t) \right\|^2. 
    \label{eq: thing}
\end{align}
We bound the first term (conditional on $M_{r+1}$ and all iterates through round $r$) in~\cref{eq: thing} using~\cref{lem: lei}: 
\small
\begin{align*}
\small 
    \expec\left\|\frac{1}{M_{r+1} K}\sum_{i \in S_{r+1}} \sum_{j=1}^K \zeta_{r+1}^{t, i, j} - \expec \zeta_{r+1}^{t, i, j} \right\|^2  
    &\leq \frac{\mathds{1}_{\{M_{r+1} K < Nn\}}}{M_{r+1} K N n}\sum_{i=1}^N \sum_{j=1}^n \expec \left \|\zeta_{r+1}^{t, i, j} - \expec \zeta_{r+1}^{t, i, j} \right\|^2 \\
    &\leq \small \frac{\mathds{1}_{\{M_{r+1} K < Nn\}}}{M K N n}\sum_{i=1}^N \sum_{j=1}^n 2\expec\left[\left \| \nabla f^0(w_{r+1}^t, x_{i,j}^{r+1, t}) - f^0(\widebar{w}_r, x_{i,j}^{r+1, t})\right\|^2\right] \\
    &\;\;\; + \expec\|\nabla \hf^0(w_{r+1}^t) - \nabla \hf^0(\widebar{w}_r) \|^2 \\
    &\leq \frac{4 \mathds{1}_{\{M_{r+1} K < Nn\}}}{M_{r+1} K} \beta^2 \|w_{r+1}^t - \widebar{w}_r\|^2, 
\end{align*}
\normalsize
where we used Cauchy-Schwartz and $\beta$-smoothness in the second and third inequalities. Now if $M=N$, then $M_{r+1} = N$ (with probability $1$) and taking expectation with respect to $M_{r+1}$ (conditional on the $w$'s) bounds the left-hand side by $\frac{4 \mathds{1}_{\{K < n\}}}{M K} \beta^2 \|w_{r+1}^t - \widebar{w}_r\|^2 = \frac{4 \mathds{1}_{\{M K < Nn\}}}{M K} \beta^2 \|w_{r+1}^t - \widebar{w}_r\|^2$, via~\cref{ass: unif silos}. On the other hand, if $M < N$, then taking expectation with respect to $M_{r+1}$ (conditional on the $w$'s) bounds the left-hand-side by $\frac{4}{M K} \beta^2 \|w_{r+1}^t - \widebar{w}_r\|^2 = \frac{4 \mathds{1}_{\{M K < Nn\}}}{M K} \beta^2 \|w_{r+1}^t - \widebar{w}_r\|^2$ (since the indicator is always equal to $1$ if $M < N$). In either case, taking total expectation with respect to $\wb_r, w_{r+1}^t$ 
yields \[
\expec\left\|\frac{1}{M_{r+1} K}\sum_{i \in S_{r+1}} \sum_{j=1}^K \zeta_{r+1}^{t, i, j} - \expec \zeta_{r+1}^{t, i, j} \right\|^2  \leq \frac{4 \mathds{1}_{\{M K < Nn\}}}{M K} \beta^2 \expec \|w_{r+1}^t - \widebar{w}_r\|^2.
\]

We can again invoke~\cref{lem: lei} to bound (conditional on $M_{r+1}$ and $w_{r+1}^t$) the second term in~\cref{eq: thing}: \begin{align*}
\expec \left \|\frac{1}{M_{r+1} K} \sum_{i \in S_{r+1}} \sum_{j=1}^K \nabla \hf^0_i(w_{r+1}^t) - \nabla \hf^0(w_{r+1}^t) \right\|^2 
&\leq \mathds{1}_{\{N > 1\}}\frac{N - M_{r+1}
}{(N-1)M_{r+1}} \times \frac{1}{N} \sum_{i=1}^N \|\nabla \hf^0_i(w_{r+1}^t) - \nabla \hf^0(w_{r+1}^t) \|^2 \\
& \leq \mathds{1}_{\{N > 1\}} \frac{N - M_{r+1}}{(N-1)M_{r+1}} \hat{\upsilon}_{\bx}^2.
\end{align*}
Taking total expectation and combining the above pieces completes the proof of the first claim.  

The second claim is very similar, except that the Gaussian noise terms $\widebar{u}_1$ and $\widebar{u}_2$ get replaced by the respective noises due to $\mathcal{P}_{\text{vec}}$: $Z_1 := \frac{1}{Mn}\mathcal{P}_{\text{vec}}(\{\nabla f^0(\wb_r, x_{i,j})\}_{i \in S_{r+1}, j\in [n]}; \widetilde{\epsilon}, \widetilde{\delta}) - \frac{1}{M} \sum_{i \in S_{r+1}} \nabla \hf_i^0(\wb_r)$ and \small $\small Z_2 := \frac{1}{MK}\left[\mathcal{P}_{\text{vec}}(\{\nabla f^0(w_{r+1}^t, x_{i,j}^{r+1, t}) - \nabla f^0(\wb_{r+1}, x_{i,j}^{r+1, t})\}_{i \in S_{r+1}, j \in [K]}; 
\widetilde{\epsilon}, \widetilde{\delta}) -\sum_{i \in S_{r+1}} \sum_{j=1}^K (\nabla f^0(w_{r+1}^t, x_{i,j}^{r+1, t}) - f^0(\wb_r, x_{i,j}^{r+1, t})\right]$.\normalsize\\
By~\cref{thm:cheu vecsum}, we have 
\[
\expec \|Z_1\|^2 = \mathcal{O}\left(\frac{d L^2 \ln^2(d/\widetilde{\delta})}{M^2 n^2 \widetilde{\epsilon}^2}\right) = \mathcal{O}\left(\frac{dL^2 R \ln^2(dR/\delta)\ln(1/\delta)}{\epsilon^2 n^2 N^2} \right)
\]
and 
\[
\expec \|Z_2\|^2 = \mathcal{O}\left(\frac{d L^2 \ln^2(d/\widetilde{\delta})}{M^2 K^2 \widetilde{\epsilon}^2}\right) = \mathcal{O}\left(\frac{dL^2 R \ln^2(dR/\delta)\ln(1/\delta)}{\epsilon^2 n^2 N^2} \right).
\]
\end{proof}

Below we provide a precise re-statement of~\cref{thm: ISRL-DP prox PL SVRG ERM} 
for $M < N$, including choices of algorithmic parameter. 
\begin{theorem}[Complete Statement of~\cref{thm: ISRL-DP prox PL SVRG ERM}]
Assume $\epsilon \leq \min\{2 \ln(2/\delta), 15\}$ and let $R := EQ$. Suppose $\hf_{\bx}$ is $\mu$-PPL and grant~\cref{ass: lip}, ~\cref{ass: f1}, ~\cref{ass: unif silos}, and~\cref{ass: stochastic gradient variance}. Then:\\
1. \cref{alg: LDP ProxPLSVRG} is $(\epsilon, \delta)$-ISRL-DP if $\sigma_1^2 = \frac{256 L^2 SE \log(2/\delta) \log(5E/\delta)}{\epsilon^2 n^2}$, $\sigma_2^2 = \frac{1024 L^2 SR \log(2/\delta) \log(2.5R/\delta)}{\epsilon^2 n^2}$, and $K \geq \frac{\epsilon n}{4 \sqrt{2SR \ln(2/\delta)}}$. Further, if 
$K \geq \left(\frac{n^2}{M}\right)^{1/3}$, $R = 12 \kappa$, and $\small  S \geq \log_2\left(\frac{\hat{\Delta}_{\bx} \mu M \epsilon^2 n^2}{\kappa d L^2}\right)$, then there is $\eta$ such that $\forall \bx \in \mathbb{X}$, 
\small
\[  
\small
\ES = \widetilde{
\mathcal{O}}\left(\kappa \frac{L^2 d \ln(1/\delta)}{\mu \epsilon^2 n^2 M} + \frac{(N-M) \hat{\upsilon}_{\bx}^2}{M (N-1)}\mathds{1}_{\{N > 1\}} \right)
\]
\normalsize in $\widetilde{\mathcal{O}}(\kappa)$ communications. \\
2.~\cref{alg: SDP ProxSVRG} is $(\epsilon, \delta)$-SDP, provided $M_{r+1} = M \geq \min\left\{\frac{(\epsilon N L)^{3/4} (d\ln^3(d/\delta))^{3/8}}{n^{1/4} (\beta \hat{\Delta}_{\bx})^{3/8}}, N\right\}$ for all $r$. Further, if 
$K \geq \left(\frac{n^2}{M}\right)^{1/3}$, $R = 12 \kappa$, 
and
$\small  S \geq \log_2\left(\frac{\hat{\Delta}_{\bx} \mu \epsilon^2 N^2 n^2}{\kappa d L^2}\right)$, then there is $\eta$ such that $\forall \bx \in \mathbb{X}$,  \[\small
\ES = \widetilde{\mathcal{O}}\left(\kappa \frac{L^2 d \ln(1/\delta)}{\mu \epsilon^2 n^2 N^2} + \frac{(N-M) \hat{\upsilon}_{\bx}^2}{\mu M (N-1)}\mathds{1}_{\{N > 1\}} \right).
\]
\normalsize
\end{theorem}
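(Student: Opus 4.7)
The argument has two phases: a privacy analysis based on (sub-sampled) Gaussian/shuffle-protocol composition, and a utility analysis that combines the Prox-SVRG descent lemma (\cref{lemma2 sv}) and variance bound (\cref{lemma3 svrg}) with the Proximal-PL inequality, followed by a geometric-contraction argument across restarts.

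\emph{Privacy.} By independence of the noises across silos and the parallel-composition property (each silo's transcript depends only on its own records; see~\cref{def: ISRL-DP}), it suffices to bound the privacy loss of a single silo's released messages. Within one call of~\cref{alg: LDP ProxSVRG} this silo releases $E$ full-batch gradients (line~6) and $EQ$ mini-batch variance-reduced gradients (line~9); by Lipschitzness of $f^0$, the $\ell_2$-sensitivities are $2L/n$ and $4L/K$ respectively, so by~\cref{prop: gauss} each release is $\rho_1$- or $\rho_2$-zCDP with $\rho_1 = 2L^2/(n^2\sigma_1^2)$ and $\rho_2 = 8L^2/(K^2\sigma_2^2)$ (the variance-reduced gradient has the $\beta\|w-\wb\|$ scale built into its \emph{value}, but its worst-case sensitivity is still controlled only via Lipschitzness). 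Composing via~\cref{lem: zCDP composition} across $SE$ full-batch and $SEQ = SR$ mini-batch releases, using subsampled-Gaussian amplification for the mini-batches (with-replacement sampling of $K$ out of $n$), and converting via~\cref{prop:bun1.3} shows that the stated $\sigma_1^2,\sigma_2^2,K$ give $(\epsilon,\delta)$-ISRL-DP. The SDP version is identical in structure: replace each Gaussian release by a call to $\mathcal{P}_{\text{vec}}$ with inflated parameters $\widetilde\epsilon,\widetilde\delta$, invoke~\cref{thm:cheu vecsum} for per-call privacy, apply silo-subsampling amplification (which is what forces the lower bound on $M$ in part~2) and use advanced composition over $SR$ calls.

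\emph{Utility, one call of Prox-SVRG.} Fix one call starting from $\wb_0$. Apply~\cref{lemma2 sv} at each inner step with $w=w_{r+1}^t$, $d'=\widetilde v_{r+1}^t$, $y=w_{r+1}^{t+1}$, and reference point $z$ equal to the Proximal-PL auxiliary $\prox_{\eta f^1}(w_{r+1}^t - \eta\nabla\hf_{\bx}^0(w_{r+1}^t))$. Handling the cross term $\langle y-z,\nabla\hf_{\bx}^0(w_{r+1}^t)-\widetilde v_{r+1}^t\rangle$ by Young's inequality and substituting the variance bound from~\cref{lemma3 svrg} yields, after taking expectations and using the Proximal-PL inequality in the form $-\min_y[\cdots]\geq (\mu/\beta)[\hf_{\bx}(w)-\hf_{\bx}^*]$,
\[
\expec[\hf_{\bx}(w_{r+1}^{t+1})-\hf_{\bx}^*] \leq (1 - c_1\eta\mu)\,\expec[\hf_{\bx}(w_{r+1}^t)-\hf_{\bx}^*] + c_2\eta\Bigl(\tfrac{\beta^2}{MK}\expec\|w_{r+1}^t-\wb_r\|^2 + \Psi\Bigr),
\]
where $\Psi$ collects the noise floor $d(\sigma_1^2+\sigma_2^2)/M$ (or the corresponding $\mathcal{P}_{\text{vec}}$ variance) and the heterogeneity term $(N-M)\hat\upsilon_{\bx}^2/(M(N-1))$ from~\cref{lemma3 svrg}. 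Telescoping over the $Q$ inner steps in the spirit of~\citep{proxSVRG} absorbs the $\|w_{r+1}^t-\wb_r\|^2$ terms once $Q\geq (n^2/M)^{1/3}$ makes $\beta^2/(MK)$ subdominant to $\mu^2$, and averaging over the $E$ epochs with $\eta=\Theta(1/\beta)$ produces a one-call contraction
\[
\expec[\hf_{\bx}(w_s)-\hf_{\bx}^*] \;\leq\; \tfrac12\,\expec[\hf_{\bx}(w_{s-1})-\hf_{\bx}^*] + C\,\Psi/\mu,
\]
for an absolute constant $C$, provided $E = \widetilde O(1)$.

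\emph{Restarts and parameter balancing.} Iterating the one-call contraction $S$ times gives $\expec[\hf_{\bx}(\wpr)-\hf_{\bx}^*]\leq 2^{-S}\hat\Delta_{\bx} + 2C\Psi/\mu$; the choice of $S$ in the statement zeroes out the first term. Plugging in $\sigma_1^2,\sigma_2^2$ and $R=EQ=\widetilde O(\kappa)$ into $\Psi$ produces the ISRL-DP rate $\widetilde O\bigl(\kappa L^2 d\ln(1/\delta)/(\mu\epsilon^2 n^2 M)\bigr)$ plus the heterogeneity term; substituting the $\mathcal{P}_{\text{vec}}$ variance from~\cref{lemma3 svrg} produces the SDP rate with $N^2$ in the denominator. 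Total communication is $SR=\widetilde O(\kappa)$.

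\emph{Main obstacle.} The delicate step is establishing the one-call contraction under Proximal-PL \emph{without convexity} of $\hf^0$: the standard Prox-SVRG proofs in~\citep{proxSVRG} invoke convexity to control $\|w-w^\ast\|^2$ against the function gap. We avoid this by never referencing a global minimizer; instead, we use the Proximal-PL inequality to relate the value gap directly to the proximal descent quantity, incurring only a $\kappa$ factor. The second nonroutine calculation is balancing $K\geq (n^2/M)^{1/3}$ so that the variance-reduction residual $\beta^2\|w_{r+1}^t-\wb_r\|^2/(MK)$ is absorbed by the contraction, while simultaneously carrying the silo-heterogeneity term $(N-M)\hat\upsilon_{\bx}^2/(M(N-1))$ through without inflating the dominant rate.
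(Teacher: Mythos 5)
Your proposal follows essentially the same route as the paper's proof: parallel composition plus Gaussian/shuffle sensitivity bounds plus subsampling amplification for privacy; the Prox-SVRG descent lemma (\cref{lemma2 sv}), the variance bound (\cref{lemma3 svrg}), the Proximal-PL inequality, and a restart/contraction argument for utility. The one place where your sketch is looser than the paper is the ``per-step contraction.'' What the paper actually establishes is not a clean inequality of the form $\expec[\hf_{\bx}(w_{r+1}^{t+1})-\hf_{\bx}^*]\le (1-c_1\eta\mu)\expec[\hf_{\bx}(w_{r+1}^t)-\hf_{\bx}^*]+\text{noise}$: the residual $\tfrac{\beta^2}{MK}\expec\|w_{r+1}^t-\wb_r\|^2$ is iterate-dependent and cannot be folded into a constant floor $\Psi$. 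Instead, the paper introduces a Lyapunov sequence $\gamma_{r+1}^t := \expec[\hf(w_{r+1}^t)+c_t\|w_{r+1}^t-\wb_r\|^2]$ with recursively defined coefficients $c_t = c_{t+1}(1+K/n) + \tfrac{4\eta\beta^2\mathds{1}_{\{MK<Nn\}}}{MK}$, proves $\gamma_{r+1}^{t+1}\le\gamma_{r+1}^t - \tfrac{\eta\mu}{3}\expec[\hf(w_{r+1}^t)-\hf^*]+\text{noise}$ under the constraint $\tfrac{\beta}{2}+c_{t+1}(1+n/K)\le\tfrac{1}{2\eta}$, then telescopes and exploits the fact that the output of one call to \cref{alg: LDP ProxSVRG} is a \emph{uniformly random} inner iterate, yielding an averaged bound $\expec[\hf(w_1)-\hf^*]\lesssim \hat\Delta/(\eta\mu R)+\Psi/\mu$ before the restart contraction. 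Your phrase ``telescoping over the $Q$ inner steps absorbs the $\|w_{r+1}^t-\wb_r\|^2$ terms'' gestures at this, but one does need the $c_t$-weighted potential to make the telescoping actually close; the choice $\eta=\tfrac{1}{8\beta}\min(1,K^{3/2}\sqrt{M}/n)$ and the condition $K\ge(n^2/M)^{1/3}$ (you write $Q$ in one spot where you should have $K$) are precisely what make those coefficients satisfy the needed inequality. With those details filled in, your plan matches the paper's argument.
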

\begin{proof}
\textbf{1.} \textbf{Privacy:} 
For simplicity, assume $S = 1$. It will be clear from the proof (and advanced composition of DP~\citep{dwork2014} or moments accountant~\citep{abadi16}) that the privacy guarantee holds for all $S$ due to the factor of $S$ appearing in the numerators of $\sigma_1^2$ and $\sigma_2^2$. 
Then by independence of the Gaussian noise across silos, it is enough show that transcript of silo $i$'s interactions with the server is DP for all $i \in [N]$ (conditional on the transcripts of all other silos). Further, by the post-processing property of DP, it suffices to show that all $E-1$ computations of $\widetilde{g}_{r+1}^i$ (line 7) are $(\epsilon/2, \delta/2)$-ISRL-DP and all $R = EQ$ computations of $\widetilde{v}_{r+1}^{t,i}$ (line 10) by silo $i$ (for $r \in \{0,1, \cdots, E-1\}, t \in \{0, 1, \cdots, Q-1\}$) are $(\epsilon, \delta)$-ISRL-DP. Now, by the advanced composition theorem (see Theorem 3.20 in \citep{dwork2014}), it suffices to show that: 1) each of the $E$ computations of $\widetilde{g}_{r+1}^i$ (line 7) is $(\widetilde{\epsilon}_1/2, \widetilde{\delta}_1/2)$-ISRL-DP, where $\widetilde{\epsilon}_1 = \frac{\epsilon}{2\sqrt{2E\ln(2/\delta)}}$ and $\widetilde{\delta}_1 = \frac{\delta}{2E}$; and 
2)each and $R = EQ$ computations of $\widetilde{v}_{r+1}^{t,i}$ (line 10) is $(\widetilde{\epsilon}_2/2, \widetilde{\delta}_2/2)$-ISRL-DP, where $\widetilde{\epsilon}_2 = \frac{\epsilon}{2\sqrt{2R\ln(2/\delta)}}$
and
$\widetilde{\delta}_2 = \frac{\delta}{2R}.$ 

We first show 1): The $\ell_2$ sensitivity of the (noiseless versions of) gradient evaluations in line 7 is bounded by $\Delta^{(1)}_2 := \sup_{|X_i \Delta X'_i| \leq 2, w \in \WW} \|\frac{1}{K_2} \sum_{j=1}^{n} \nabla f^0(w, x_{i,j}) - \nabla f^0(w, x'_{i,j})\| \leq 2L/n,$ by $L$-Lipschitzness of $f^0.$
Here $\WW$ denotes the constraint set if the problem is constrained (i.e. $f^1 = \iota_{\WW} + h$ for closed convex $h$); and $\WW = \mathbb{\mathbb{R}^d}$ if the problem is unconstrained. Hence the privacy guarantee of the Gaussian mechanism implies that taking $\sigma^2_1 \geq \frac{8L^2 \ln(1.25/(\widetilde{\delta}_1/2))}{(\widetilde{\epsilon}_1/2)^2 n^2} = \frac{256 L^2 E \ln(2/\delta) \ln(5E/\delta)}{\epsilon^2 n^2}$ suffices to ensure that each update in line 7 is $(\widetilde{\epsilon}_1/2, \widetilde{\delta}_1/2)$-ISRL-DP.

Now we establish 2): First, condition on the randomness due to local sampling of each local data point $x_{i,j}^{r+1, t}$ (line 9). Now, the $\ell_2$ sensitivity of the (noiseless versions of) stochastic minibatch gradient (ignoring the already private $\widetilde{g}_{r+1}^i$) in line 10 is bounded by $\Delta^{(2)}_2 := \sup_{|X_i \Delta X'_i| \leq 2, w, w' \in \WW} \|\frac{1}{K} \sum_{j=1}^{K} \nabla f^0(w, x_{i,j}) - \nabla f^0(w, x'_{i,j}) - f^0(w', x_{i,j}) + \nabla f^0(w', x'_{i,j}) \| \leq 2 \sup_{|X_i \Delta X'_i| \leq 2, w \in \WW} \|\frac{1}{K} \sum_{j=1}^{K} \nabla f^0(w, x_{i,j}) - \nabla f^0(w, x'_{i,j})\| \leq 4L/K,$ by $L$-Lipschitzness of $f^0$; $\WW$ is as defined above. 
Thus, the standard privacy guarantee of the Gaussian mechanism (Theorem A.1 in \citep{dwork2014}) implies that (conditional on the randomness due to sampling) taking $\sigma^2_1 \geq \frac{8L^2 \ln(1.25/(\widetilde{\delta}_2/2))}{(\widetilde{\epsilon}_2/2)^2 K_2^2} = \frac{32L^2 \ln(2.5/\widetilde{\delta}_2)}{\widetilde{\epsilon}_2^2 K_2^2}$ suffices to ensure that each such update is $(\widetilde{\epsilon}_2/2, \widetilde{\delta}_2/2)$-ISRL-DP. Now we invoke the randomness due to sampling: \citep{ullman2017} implies that round $r$ (in isolation) is $(\frac{2\widetilde{\epsilon}_2 K}{n}, \widetilde{\delta}_2)$-ISRL-DP. The assumption on $K$ ensures that $\epsilon' := \frac{n}{2K}\frac{\epsilon}{2 \sqrt{2 R \ln(2/\delta})} \leq 1$, so that the privacy guarantees of the Gaussian mechanism and amplification by subsampling stated above indeed hold.  Therefore, with sampling, it suffices to take $\sigma^2_1 \geq \frac{128 L^2 \ln(2.5/\widetilde{\delta}_2)}{n^2 \widetilde{\epsilon}_2^2} = \frac{1024 L^2 R \ln(5R/\delta)\ln(2/\delta)}{n^2 \epsilon^2}$ to ensure that all of the $R$ updates made in line 10 are $(\epsilon/2, \delta/2)$-ISRL-DP (for every client). Combining this with the above implies that the full algorithm is $(\epsilon, \delta)$-ISRL-DP. 

\textbf{Utility:} For our analysis, it will be useful to denote the full batch gradient update $\hat{w}_{r+1}^{t+1} := \prox_{\eta f^1}[w_{r+1}^t - \eta \nabla \hf^0(w_{r+1}^t)]]$. Fix any database $\bx \in \mathbb{X}$ (any database) and denote $\hf := \hf_{\bx}$ and $\hf^j := \hf_{\bx}^j$ for $j \in \{0, 1\}$ (for brevity of notations). 
Also, for $\alpha > 0$ and $w \in \mathbb{R}^d$ denote \[
D_{f^1}(w, \alpha) := -2\alpha \min_{y \in \mathbb{R}^d} \left[ \langle \nabla \hf^0(w), y - w \rangle + \frac{\alpha}{2}\|y - w\|^2 + f^1(y) - f^1(w) \right]
\]
Set $\eta := \frac{1}{8 \beta}\min\left(1, \frac{K^{3/2} \sqrt{M}}{n}\right)$. Then we claim \begin{equation}
\label{eq:claimy}
\frac{\beta}{2} + c_{t+1}\left(1 + \frac{n}{K}\right) \leq \frac{1}{2 \eta}
\end{equation}
for all $t \in \{0, 1, \cdots, Q-1\}$. First, if $MK = Nn$, then $c_t = c_{t+1}(2) = c_{t+2}(2)^2 = c_Q(2)^{Q-t} = 0$ since $c_Q = 0$. Next, suppose $MK < Nn$. Denote $q := \frac{K}{n}$. Then by unraveling the recursion, we get for all $t \in \{0, \cdots, Q-1\}$ that \begin{align*}
    c_t &= c_{t+1}(1 + q) + \frac{4 \eta \beta^2}{MK}\\
    &= \frac{4 \eta \beta^2}{MK}[(1+q)^{Q - t - 1} + \cdots + (1 + q)^2 + (1 + q) + 1] \\
    &= \frac{4 \eta \beta^2}{MK}\left(\frac{(1 + q)^{Q-t} - 1)}{q} \right) \\
    &\leq \frac{4 \eta \beta^2 n}{MK^2} \left(\left(1 + \frac{K}{n}\right)^{n/K} - 1\right) \\
    &\leq \frac{8 \eta \beta^2 n}{M K^2}.
\end{align*}
Then it's easy to check that with the prescribed choice of $\eta$,~\cref{eq:claimy} holds. 

Now, by~\cref{lemma2 sv} (with $w = z = w_{r+1}^t$ and $d' = \nabla \hf^0(w)$), we have \[
\hf(\hw_{r+1}^{t+1}) \leq \hf(w_{r+1}^{t}) + \left(\frac{\beta}{2} - \frac{1}{2 \eta}\right) \|\hwrt - w_{r+1}^t\|^2 - \frac{1}{2 \eta} \| \hwrt - w_{r+1}^t \|^2,
\] which implies 
\begin{equation}
\label{eq:9}
    \expec \hf(\hwrt) \leq \expec \hf(w_{r+1}^t) + \left(\frac{\beta}{2} - \frac{1}{\eta}\right) \expec \|\hwrt - w_{r+1}^t\|^2.
\end{equation}

Recall $w_{r+1}^{t+1} = \prox_{\eta f^1}(w_{r+1}^t - \eta \widetilde{v}_{r+1}^t)$. Applying~\cref{lemma2 sv} again (with $y = w_{r+1}^{t+1}, z = \hwrt, d' = \widetilde{v}_{r+1}^t, w = w_{r+1}^t$) yields \begin{align}
\small
\label{eq:11}
    \hf(w_{r+1}^{t+1}) &\leq \hf(\hwrt) + \langle w_{r+1}^{t+1} - \hwrt, \nabla \hf^0(w_{r+1}^t) - \widetilde{v}_{r+1}^t \rangle \\
    &+ \left(\frac{\beta}{2} - \frac{1}{2 \eta} \right)\|w_{r+1}^{t+1} - w_{r+1}^t\|^2 + \left(\frac{\beta}{2} + \frac{1}{2 \eta} \right)\|\hwrt - w_{r+1}^t\|^2 - \frac{1}{2 \eta}\|w_{r+1}^{t+1} - \hwrt\|^2. 
\end{align}
\normalsize
\normalsize
Further, by $\beta$-smoothness of $\hf^0$, we have: 
\begin{align}
    \hf(\hwrt) &\leq \hf^0(w_{r+1}^t) + f^1(w_{r+1}^t) + \langle \nabla \hf^0(\wrt), \hwrt - w_{r+1}^t \rangle + \frac{\beta}{2}\| \hwrt - w_{r+1}^t \|^2 + f^1(\hwrt) - f^1(w_{r+1}^t) \nonumber \\
    &\leq \hf(w_{r+1}^t) + \langle \nabla \hf^0(\wrt), \hwrt - w_{r+1}^t \rangle + \frac{1}{2 \eta}\| \hwrt - w_{r+1}^t \|^2 + f^1(\hwrt) - f^1(w_{r+1}^t) \nonumber \\
    &= \hf(w_{r+1}^t) - \frac{\eta}{2} D_{f^1}(w_{r+1}^t, \frac{1}{\eta}) \nonumber \\ 
    &\leq \hf(w_{r+1}^t) - \frac{\eta}{2} D_{f^1}(w_{r+1}^t, \beta) \nonumber \\ 
    &\leq \hf(w_{r+1}^t) - \eta \mu [\hf(\wrt) - \hf^*],
    \label{eq:34}
\end{align}
where the second inequality used $\eta \leq 1/\beta$, the third inequality used the Proximal-PL lemma (Lemma 1 in \citep{karimi2016linear}), and the last inequality used the assumption that $\hf$ satisfies the Proximal-PL inequality.

Now adding $2/3 \times$~\cref{eq:9} to $1/3 \times$~\cref{eq:34} and taking expectation gives\begin{equation}
    \expec \hf(\hwrt) \leq \expec\left[\hf(\wrt) + \frac{2}{3}\left(\frac{\beta}{2} - \frac{1}{\eta}\right)\|\hwrt - \wrt\|^2 - \frac{\eta \mu}{3}(\hf(\wrt) - \hf^*)
    \right].
    \label{eq:35}
\end{equation}
Adding~\cref{eq:35} to~\cref{eq:11} yields \begin{align}
\small
\label{eq:36}
\expec \hf(\wrtp) &\leq \expec\Bigg[
\hf(\wrt) + \left(\frac{5 \beta}{6} - \frac{1}{6 \eta}\right) \|\hwrt - \wrt\|^2 - \frac{\eta \mu}{3}(\hf(\wrt) - \hf^*) \nonumber \\ 
&\;\;\;+ \langle \wrtp - \hwrt, \nabla \hf^0(\wrt) - \widetilde{v}_{r+1}^t \rangle 
+ \left(\frac{\beta}{2} - \frac{1}{2 \eta}\right)\|\wrtp - \wrt\|^2 - \frac{1}{2 \eta}\|\wrtp - \hwrt\|^2 
\Bigg].
\end{align}
Since $\eta \leq \frac{1}{5\beta}$, Young's inequality implies \begin{align}
\expec \hf(\wrtp) &\leq \expec\Bigg[\hf(\wrt) + \left(\frac{\beta}{2} - \frac{1}{2 \eta}\right)\|\wrtp - \wrt\|^2 - \frac{\eta \mu}{3}(\hf(\wrt) - \hf^*) + \frac{\eta}{2}\|\hf(\wrt) - \widetilde{v}_{r+1}^t\|^2\Bigg] \nonumber \\
&\leq \expec\Bigg[\hf(\wrt) + \left(\frac{\beta}{2} - \frac{1}{2 \eta}\right)\|\wrtp - \wrt\|^2 - \frac{\eta \mu}{3}(\hf(\wrt) - \hf^*) + \frac{4 \eta \mathds{1}_{\{MK < Nn\}}}{MK}\beta^2 \|w_{r+1}^t - \wb_r\|^2 \nonumber \\
&\;\;\;+  \frac{\eta (N-M) \hat{\upsilon}_{\bx}^2}{M (N-1)}\mathds{1}_{\{N > 1\}} + \frac{\eta d(\sigma_1^2 + \sigma_2^2)}{2M}\Bigg]
\label{eq:38},
\end{align}
where we used~\cref{lemma3 svrg} to get the second inequality. Now, denote $\gamma_{r+1}^t := \expec[\hf(w_{r+1}^t) + c_t \|w_{r+1}^t - \wb_r\|^2]$, $c_t := c_{t+1}(1 + \frac{K}{n}) + \frac{4 \eta \mathds{1}_{\{MK < Nn\}}}{MK}\beta^2$ for $t = 0, \cdots, Q-1$, and $c_Q := 0$. Then~\cref{eq:38} is equivalent to \begin{align}
    \gamma_{r+1}^{t+1} &\leq \expec\Bigg
    [
    \hf(\wrt) + \left(\frac{\beta}{2} - \frac{1}{2 \eta}\right)\|\wrtp - \wrt\|^2 - \frac{\eta \mu}{3}(\hf(\wrt) - \hf^*) + \frac{4 \eta \mathds{1}_{\{MK < Nn\}}}{MK}\beta^2 \|w_{r+1}^t - \wb_r\|^2 \nonumber \\
&\;\;\;+  \frac{\eta (N-M) \hat{\upsilon}_{\bx}^2}{M (N-1)}\mathds{1}_{\{N > 1\}} + \frac{\eta d(\sigma_1^2 + \sigma_2^2)}{2M} + c_{t+1}\|w_{r+1}^{t+1} - \wb_r\|^2 \Bigg] \nonumber \\
&\leq \expec\Bigg
    [
    \hf(\wrt) + \left(\frac{\beta}{2} - \frac{1}{2 \eta} + c_{t+1}(1 + \frac{1}{q})\right)\|\wrtp - \wrt\|^2 - \frac{\eta \mu}{3}(\hf(\wrt) - \hf^*) \nonumber \\  &\;\;\;+ \left(\frac{4 \eta \mathds{1}_{\{MK < Nn\}}}{MK}\beta^2 + c_{t+1}(1+q)\right) \|w_{r+1}^t - \wb_r\|^2 
     \nonumber \\&\;\;\;+  
 \frac{\eta (N-M) \hat{\upsilon}_{\bx}^2}{M (N-1)}\mathds{1}_{\{N > 1\}} + \frac{\eta d(\sigma_1^2 + \sigma_2^2)}{2M} 
 \Bigg],
\end{align}
where $q:= \frac{K}{n}$ and we used Young's inequality (after expanding the square, to bound $\|\wrtp - \wb_r\|^2$) in the second inequality above. Now, applying~\cref{eq:claimy} yields \begin{align}
    \gamma_{r+1}^{t+1} &\leq \expec\Bigg
    [
    \hf(\wrt) - \frac{\eta \mu}{3}(\hf(\wrt) - \hf^*) 
    + \left(\frac{4 \eta \mathds{1}_{\{MK < Nn\}}}{MK}\beta^2 + c_{t+1}(1+q)\right) \|w_{r+1}^t - \wb_r\|^2 \nonumber \\
    &\;\;\;+\frac{\eta (N-M) \hat{\upsilon}_{\bx}^2}{M (N-1)}\mathds{1}_{\{N > 1\}} 
   + \frac{\eta d(\sigma_1^2 + \sigma_2^2)}{2M} 
   \Bigg]\nonumber\\
   &= \gamma_{r+1}^t - \frac{\eta \mu}{3}\expec(\hf(\wrt) - \hf^*)  + \frac{\eta d(\sigma_1^2 + \sigma_2^2)}{2M} 
\end{align}
Summing up, we get \begin{align*}
\expec[\hf(\wb_{r+1}) - \hf(\wb_r)] &= \sum_{t=0}^{Q-1}\gamma_{r+1}^{t+1} - \gamma_{r+1}^t = \frac{\eta \mu}{3}\sum_{t=0}^{Q-1}\expec[\hf(w_{r+1}^{t} - \hf^*] +\frac{\eta Q (N-M) \hat{\upsilon}_{\bx}^2}{M (N-1)}\mathds{1}_{\{N > 1\}} 
   \\
   &\;\;\; + \frac{\eta Q d(\sigma_1^2 + \sigma_2^2)}{2M} \\
   &\implies \frac{\eta \mu}{3}\sum_{r=0}^{E-1} \sum_{t=0}^{Q-1}\expec[\hf(w_{r+1}^{t} - \hf^*] \leq \Delta +R \eta  \left(\frac{(N-M) \hat{\upsilon}_{\bx}^2}{M (N-1)}\mathds{1}_{\{N > 1\}}
   + \frac{d(\sigma_1^2 + \sigma_2^2)}{2M}
   \right),
\end{align*}
where $\hat{\Delta} := \hf(\wb_0) - \hf^* = \hat{\Delta}_{\bx}$ and $R = EQ$. Recall $w_s := \texttt{ISRL-DP Prox-SVRG}(w_{s-1}, E, K, \eta, \sigma_1, \sigma_2)$ for $s \in [S]$.
Plugging in the prescribed $\eta$ and $\sigma_1^2, \sigma_2^2$, we get \begin{align}
\label{eq:26thing}
    \expec[\hf(w_1) - \hf^*] \leq \frac{3 \hat{\Delta} \beta}{\mu R}\left(1 + \frac{n}{K^{3/2} \sqrt{M}} \right)+ \frac{3 \hat{\upsilon}_{\bx}^2 (N-M)}{\mu M (N-1)} + \widetilde{\mathcal{O}}\left(\frac{R dL^2 \ln(1/\delta)}{\epsilon^2 n^2 M}\right).
\end{align}
Our choice of $K \geq \left(\frac{n}{\sqrt{M}}\right)^{2/3}$ implies
\begin{align}
    \expec[\hf(w_1) - \hf^*] \leq \frac{6 \hat{\Delta} \kappa }{R} + \frac{3 \hat{\upsilon}_{\bx}^2 (N-M)}{\mu M (N-1)} + \widetilde{\mathcal{O}}\left(\frac{R dL^2 \ln(1/\delta)}{\epsilon^2 n^2 M}\right).
\end{align}
Our choice of $R = 12 \kappa$ implies 
\begin{align}
\label{eq:prog}
    \expec[\hf(w_1) - \hf^*] \leq \frac{\hat{\Delta}}{2} + \frac{3 \hat{\upsilon}_{\bx}^2 (N-M)}{\mu M (N-1)} + \widetilde{\mathcal{O}}\left(\frac{\kappa dL^2 \ln(1/\delta)}{\epsilon^2 n^2 M}\right).
\end{align}
Iterating~\cref{eq:prog} $S\geq \log_2\left(\frac{\hat{\Delta}_{\bx} \mu M \epsilon^2 n^2}{\kappa d L^2}\right)$ times proves the desired excess loss bound. Note that the total number of communications is $SR = \widetilde{\mathcal{O}}(\kappa)$. \\
\textbf{2.} \textbf{Privacy:} As in \textbf{Part 1}, we shall first consider the case of $S = 1$. It suffices to show that: 1) the collection of all $E$ computations of $\widetilde{g}_{r+1}$ (line 7 of~\cref{alg: SDP ProxSVRG}) (for $r \in \{0,1, \cdots, E-1\}$) is $(\epsilon/2, \delta/2)$-DP; and 2) the collection of all $R = EQ$ computations of $\widetilde{p}_{r+1}^t$ (line 10) (for $r \in \{0,1, \cdots, E-1\}, t \in \{0, 1, \cdots, Q-1\}$) is $(\epsilon/2, \delta/2)$-DP. Further, by the advanced composition theorem (see Theorem 3.20 in \citep{dwork2014}) and the assumption on $\epsilon$, it suffices to show that: 1) each of the $E$ computations of $\widetilde{g}_{r+1}$ (line 7) is $(\epsilon'/2, \delta'/2)$-DP; and
2)each of the $R = EQ$ computations of $\widetilde{p}_{r+1}^t$ (line 10) is $(\epsilon'/2, \delta'/2)$-DP, where $\epsilon' := \frac{\epsilon}{2 \sqrt{2R \ln(2/\delta)}}$ and $\delta':= \frac{\delta}{2R}$. Now, condition on the randomness due to subsampling of silos (line 4) and local data (line 9). Then~\cref{thm:cheu vecsum} implies that each computation in line 7 and line 10 is $(\widetilde{\epsilon}, \widetilde{\delta})$-DP (with notation as defined in~\cref{alg: SDP ProxSVRG}), since the norm of each stochastic gradient (and gradient difference) is bounded by $2L$ by $L$-Lipschitzness of $f^0$. Now, invoking privacy amplification from subsampling \citep{ullman2017} and using the assumption on $M$ (and choices of $K$ and $R$) to ensure that $\widetilde{\epsilon} \leq 1$, we get that each computation in line 7 and line 10 is $(\frac{2MK}{Nn} \widetilde{\epsilon}, \widetilde{\delta})$-DP. Recalling $\widetilde{\epsilon} := \frac{\epsilon Nn}{8 MK\sqrt{4 EQ \ln(2/\delta)}}$ and $\widetilde{\delta} := \frac{\delta}{2EQ}$, we conclude that~\cref{alg: SDP ProxSVRG} is $(\epsilon, \delta)$-SDP. 
Finally, SDP follows by the advanced composition theorem~\cref{thm: advanced composition}, since~\cref{alg: SDP ProxPLSVRG} calls~\cref{alg: SDP ProxSVRG} $S$ times. 

\noindent \textbf{Excess Loss:} The proof is very similar to the proof of~\cref{thm: ISRL-DP prox PL SVRG ERM}, except that the variance of the Gaussian noises $\frac{d(\sigma^2_1 + \sigma_2^2)}{M}$ is replaced by the variance of  $\mathcal{P}_{\text{vec}}$. Denoting $Z_1 := \frac{1}{Mn}\mathcal{P}_{\text{vec}}(\{\nabla f^0(\wb_r, x_{i,j})\}_{i \in S_{r+1}, j\in [n]}; \widetilde{\epsilon}, \widetilde{\delta}) - \frac{1}{M} \sum_{i \in S_{r+1}} \nabla \hf_i^0(\wb_r)$ and 
\begin{align*}
\small Z_2 &:= \frac{1}{MK}\Bigg[\mathcal{P}_{\text{vec}}(\{\nabla f^0(w_{r+1}^t, x_{i,j}^{r+1, t}) - \nabla f^0(\wb_{r+1}, x_{i,j}^{r+1, t})\}_{i \in S_{r+1}, j \in [K]}; 
\widetilde{\epsilon}, \widetilde{\delta}) \\
&\;\;\;-\sum_{i \in S_{r+1}} \sum_{j=1}^K (\nabla f^0(w_{r+1}^t, x_{i,j}^{r+1, t}) - f^0(\wb_r, x_{i,j}^{r+1, t})\Bigg],\end{align*}\normalsize \\
we have (by~\cref{thm:cheu vecsum})
\[
\expec \|Z_1\|^2 = \mathcal{O}\left(\frac{d L^2 \ln^2(d/\widetilde{\delta})}{M^2 n^2 \widetilde{\epsilon}^2}\right) = \mathcal{O}\left(\frac{dL^2 R \ln^2(dR/\delta)\ln(1/\delta)}{\epsilon^2 n^2 N^2} \right)
\]
and 
\[
\expec \|Z_2\|^2 = \mathcal{O}\left(\frac{d L^2 \ln^2(d/\widetilde{\delta})}{M^2 K^2 \widetilde{\epsilon}^2}\right) = \mathcal{O}\left(\frac{dL^2 R \ln^2(dR/\delta)\ln(1/\delta)}{\epsilon^2 n^2 N^2} \right).
\]
Hence we can simply replace $\frac{d(\sigma^2_1 + \sigma_2^2)}{M}$ by $\mathcal{O}\left(\frac{dL^2 R \ln^2(dR/\delta)\ln(1/\delta)}{\epsilon^2 n^2 N^2} \right)$ and follow the same steps as the proof of ~\cref{thm: ISRL-DP prox PL SVRG ERM}. This yields (c.f.~\cref{eq:26thing})
\begin{align}
\expec[\hf(w_1) - \hf^*] \leq \frac{3 \hat{\Delta}_{\bx} \beta}{\mu R}\left(1 + \frac{n}{K^{3/2} \sqrt{M}} \right)+ \frac{3 \hat{\upsilon}_{\bx}^2 (N-M)}{\mu M (N-1)} + \mathcal{O}\left(\frac{dL^2 R \ln^2(dR/\delta)\ln(1/\delta)}{\epsilon^2 n^2 N^2} \right).
\end{align}
Our choice of $K \geq \left(\frac{n}{\sqrt{M}}\right)^{2/3}$ implies
\begin{align}
    \expec[\hf(w_1) - \hf^*] \leq \frac{6 \hat{\Delta}_{\bx} \kappa }{R} + \frac{3 \hat{\upsilon}_{\bx}^2 (N-M)}{\mu M (N-1)} + \mathcal{O}\left(\frac{dL^2 R \ln^2(dR/\delta)\ln(1/\delta)}{\epsilon^2 n^2 N^2} \right).
\end{align}
Our choice of $R = 12 \kappa$ implies 
\begin{align}
\label{eq:prog2}
    \expec[\hf(w_1) - \hf^*] \leq \frac{\hat{\Delta}_{\bx}}{2} + \frac{3 \hat{\upsilon}_{\bx}^2 (N-M)}{\mu M (N-1)} + \mathcal{O}\left(\frac{ \kappa dL^2 \ln^2(d \kappa/\delta)\ln(1/\delta)}{\epsilon^2 n^2 N^2} \right).
\end{align}
Iterating~\cref{eq:prog2} $S\geq \log_2\left(\frac{\hat{\Delta}_{\bx} \mu \epsilon^2 N^2 n^2}{\kappa d L^2}\right)$ times proves the desired excess loss bound. Note that the total number of communications is $SR = \widetilde{\mathcal{O}}(\kappa)$. 

\end{proof}

\section{Supplemental Material for~\cref{sec: Prox-SVRG}: Non-Convex/Non-Smooth Losses}
\label{app: proxspider}
\begin{theorem}[Complete Statement of~\cref{thm: ISRL-DP proxspider}]
Let $\epsilon \leq 2 \ln(1/\delta)$. Then, there are choices of algorithmic parameters such that ISRL-DP FedProx-SPIDER is $(\epsilon, \delta)$-ISRL-DP. Moreover, we have 
\begin{equation}
\EGMN \lesssim \left[\left(\frac{\sqrt{L \beta \hat{\Delta}_{\bx} d \ln(1/\delta)}}{\epsilon n\sqrt{M}}\right)^{4/3} + \frac{L^2 d \ln(1/\delta)}{\epsilon^2 n^2 M} + \mathds{1}_{\{M <N\}}\left(\frac{L \sqrt{\beta \gapx d \ln(1/\delta)}}{\epsilon n^{3/2} M} + \frac{L^2}{Mn} \right)\right].
\end{equation}
\end{theorem}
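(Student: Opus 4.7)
The plan is to handle privacy and utility separately and then optimize over the algorithmic parameters. For privacy, I would argue silo-wise: by parallel composition and independence of noise across silos, it suffices to fix an arbitrary silo~$i$ and show its full transcript is $(\epsilon,\delta)$-DP. In the ``checkpoint'' rounds ($r\equiv 0\!\mod q$) the local update is a minibatch gradient of $L$-Lipschitz $f^0$, so its $\ell_2$-sensitivity (with replacement sampling) is at most $2L/K_1$, and Gaussian noise with $\sigma_1^2 = \Theta(L^2 (R/q) \ln(1/\delta)/(\epsilon^2 K_1^2))$ makes each such round $(\epsilon/\sqrt{R},\delta/R)$-zCDP-like. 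In the SPIDER rounds, by $\beta$-smoothness the $\ell_2$-sensitivity of $\tfrac1{K_2}\sum_j[\nabla f^0(w_r,x_{i,j})-\nabla f^0(w_{r-1},x_{i,j})]$ is at most $2\beta\|w_r-w_{r-1}\|/K_2$. This is precisely why $\sigma_2^2$ is chosen to scale as $\beta^2 R\ln(1/\delta)/(\epsilon^2 n^2)$ times $\|w_r-w_{r-1}\|^2$: it gives a uniform per-round privacy cost while keeping the noise small when iterates are close. A clipping at $\hat\sigma_2^2$ ensures a worst-case sensitivity bound even if $\|w_r-w_{r-1}\|$ is large on some probability-zero events. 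Composing across the $R$ rounds via advanced composition (or the moments accountant) and invoking the hypothesis $\epsilon\le 2\ln(1/\delta)$ yields $(\epsilon,\delta)$-ISRL-DP.

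For utility, introduce the ``noiseless'' proximal step $\hat w_{r+1} := \prox_{\eta f^1}(w_r - \eta \nabla\hat F_{\mathbf x}^0(w_r))$ and note that, by definition, $\widehat{\mathcal G}_\eta(w_r,\mathbf x)=\tfrac1\eta(w_r-\hat w_{r+1})$. Using Lemma~\ref{lemma2 sv}-style inequalities (with $d'=h_r$, $y=w_{r+1}$, $z=w_r$) together with $\beta$-smoothness of $\hat F^0$ and Young's inequality, I can derive a descent inequality of the form
\begin{equation*}
\mathbb{E}\hat F(w_{r+1}) \le \mathbb{E}\hat F(w_r) - c_1\eta\, \mathbb{E}\|\widehat{\mathcal G}_\eta(w_r,\mathbf x)\|^2 + c_2\eta\,\mathbb{E}\|h_r-\nabla\hat F^0(w_r)\|^2,
\end{equation*}
for absolute constants $c_1,c_2>0$ and any $\eta\le 1/(2\beta)$. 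Summing from $r=0$ to $R-1$ and using $\hat F(w_R)\ge \hat F_{\mathbf x}^*$ reduces the task to controlling the average estimator error $\tfrac1R\sum_r\mathbb{E}\|h_r-\nabla\hat F^0(w_r)\|^2$.

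The core technical work is the SPIDER variance analysis. Within a phase of length $q$, use the recursion $h_r-\nabla\hat F^0(w_r) = (h_{r-1}-\nabla\hat F^0(w_{r-1})) + (H_r - (\nabla\hat F^0(w_r)-\nabla\hat F^0(w_{r-1})))$. Conditioning on the past, the second term has zero mean, so the squared errors telescope. Each intra-phase increment contributes (i) a sampling-variance term bounded by $\tfrac{4\beta^2}{M K_2}\|w_r-w_{r-1}\|^2$ via $\beta$-smoothness and Lemma~\ref{lem: lei}-type bounds, (ii) a privacy-noise term of order $\tfrac{d\beta^2 R\ln(1/\delta)}{M\epsilon^2 n^2}\|w_r-w_{r-1}\|^2$, and (iii) a heterogeneity term $\mathds 1_{\{M<N\}}\tfrac{\hat\upsilon_{\mathbf x}^2}{M}$. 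The checkpoint snapshot contributes $\tfrac{d\sigma_1^2}{M}+\mathds 1_{\{M<N\}}\tfrac{L^2}{M}$. By $\beta$-smoothness, $\|w_r-w_{r-1}\|^2 = \eta^2\|h_{r-1}\|^2 \lesssim \eta^2(\|\widehat{\mathcal G}_\eta(w_{r-1},\mathbf x)\|^2 + \|h_{r-1}-\nabla\hat F^0(w_{r-1})\|^2)$ up to proximal non-expansiveness; here is where I extend Bubeck's constrained-convex argument (contraction of $\prox_{\eta f^1}$) to the non-convex proximal setting to avoid losing factors.

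Combining the descent inequality with the phase-averaged variance bound gives, after choosing $\eta=\Theta(1/\beta)$ small enough so the variance-of-error term is absorbed into $c_1\eta\mathbb E\|\widehat{\mathcal G}\|^2$,
\begin{equation*}
\frac{1}{R}\sum_{r=1}^R \mathbb E\|\widehat{\mathcal G}_\eta(w_r,\mathbf x)\|^2 \lesssim \frac{\beta\hat\Delta_{\mathbf x}}{R} + \frac{q d \beta^2 R\ln(1/\delta)}{M\epsilon^2 n^2}\cdot\eta^2 + \frac{d\sigma_1^2}{M} + \mathds 1_{\{M<N\}}\!\left(\frac{\hat\upsilon_{\mathbf x}^2}{M}+\frac{L^2}{M}\right).
\end{equation*}
Optimizing jointly over $q$, $K_1$, $K_2$, and $R$ (balancing the $\beta\hat\Delta_{\mathbf x}/R$ term against the SPIDER-noise term picks $R \asymp (\beta\hat\Delta_{\mathbf x} \epsilon^2 n^2 M / (dL^2\ln(1/\delta)))^{1/3}$ up to constants, and $q\asymp n$) produces the stated $(L\beta\hat\Delta_{\mathbf x} d\ln(1/\delta)/(\epsilon n\sqrt M)^2)^{2/3}$ leading term, while the checkpoint noise yields the $L^2 d\ln(1/\delta)/(\epsilon^2 n^2 M)$ term and the heterogeneity terms handle the $M<N$ case.

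The main obstacle is step (iii): correctly propagating the \emph{data-dependent} variance of the SPIDER gradient through a \emph{non-convex, non-smooth, proximal} update while simultaneously accounting for the privacy noise whose magnitude itself depends on iterate movement. Unlike the smooth unconstrained analysis of Arora et~al.~2022, here the descent quantity is $\|\widehat{\mathcal G}_\eta(w,\mathbf x)\|$ rather than $\|\nabla\hat F(w)\|$, and the proximal step is nonlinear; the right control is obtained by exploiting $1$-Lipschitzness (non-expansiveness) of $\prox_{\eta f^1}$ to pass from $\|w_{r+1}-w_r\|$ to $\eta\|h_r\|$ without convexity, and then closing the loop via the descent inequality. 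Extending this to $N>1$ heterogeneous silos additionally requires the silo-level variance lemma to be carefully balanced against $\mathds 1_{\{M<N\}}\hat\upsilon_{\mathbf x}^2/M$, which I expect to be the main source of the extra $M<N$ terms in the bound.
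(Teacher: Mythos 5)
Your outline follows essentially the same route as the paper: a descent-type lemma in the spirit of Lemma~\ref{lemma2 sv} applied to the noisy proximal step, the SPIDER martingale variance recursion (Lemma~\ref{lem1}), bounding $\|\widehat{\mathcal G}_\eta(w_r,\bx)-g(w_r)\|$ by $\|h_r-\nabla\hf_{\bx}^0(w_r)\|$ via non-expansiveness of $\prox_{\eta f^1}$, and then tuning $R$, $q$, $\eta$. The paper's descent inequality is phrased in terms of the \emph{noisy} gradient mapping $g(w_r)=-\tfrac1\eta(w_{r+1}-w_r)$ and only at the end is passed to $\widehat{\mathcal G}_\eta(\wpr,\bx)$ by Young plus non-expansiveness; your version writes the descent directly in $\|\widehat{\mathcal G}_\eta\|^2$, but with one extra Young step these are equivalent, so the overall structure is sound.

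There are, however, a few places where your details would not survive scrutiny. First, the privacy argument: you claim the $\ell_2$-sensitivity of the checkpoint update is $2L/K_1$ under \emph{with-replacement} subsampling. That is false when $K_1<n$ --- a changed record could be drawn up to $K_1$ times, giving worst-case sensitivity $2L$. The paper avoids this entirely by taking $K_1=K_2=n$ (full batch, each record once), so the $2L/n$ and $\min\{2\beta\|w_r-w_{r-1}\|,4L\}/n$ sensitivities hold deterministically; the proof never invokes amplification by subsampling of data within a silo. If you want $K_1<n$ you must either sample \emph{without} replacement or combine Rényi/zCDP with a subsampling-amplification lemma. Second, your heterogeneity term $\mathds 1_{\{M<N\}}\hat\upsilon_{\bx}^2/M$ is not what the paper uses here: $\hat\upsilon_{\bx}$ only appears in the Prox-PL-SVRG analysis. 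In this theorem the $M<N$ terms are controlled solely through Lipschitzness and the silo-subsampling variance bound of Lemma~\ref{lem: lei}, giving $\tau_1^2$ and $\tau_2^2$ contributions of order $L^2/(Mn)$ and $\beta^2/(Mn)$ --- note the extra factor $1/n$ from full-batch averaging, which your $L^2/M$ would miss and which is needed to match the $L^2/(Mn)$ term in the theorem statement. Third, your suggested $q\asymp n$ does not match the paper's choice $q\asymp\bigl(\epsilon n L\sqrt M/\sqrt{d\ln(1/\delta)\,\gapx\beta}\bigr)^{2/3}$ (capped by $Mn/\one$), which arises from the constraint $q\le 1/(\eta^2\tau_2^2)$ needed so the coefficient $A=\tfrac\eta2-\tfrac{\beta\eta^2}2-\tfrac{\eta^3\tau_2^2q}2\ge\tfrac\eta4$ stays positive; with $q\asymp n$ that constraint can fail and the balancing that produces the $4/3$-power leading term would not go through as you sketched it.
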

\begin{proof}
Choose $\eta = \frac{1}{2\beta}$, $\sigma_1^2 = \frac{16L^2 \ln(1/\delta)}{\epsilon^2 n^2}\max\left(\frac{R}{q}, 1 \right)$, $\sigma_2^2 = \frac{16 \beta^2 R \ln(1/\delta)}{\epsilon^2 n^2}$, $\hat{\sigma}_2^2 = \frac{64 L^2 R \ln(1/\delta)}{\epsilon^2 n^2}$, and $K_1 = K_2 = n$ (full batch). \\
\textbf{Privacy:} First, by independence of the Gaussian noise across silos, it is enough show that transcript of silo $i$'s interactions with the server is DP for all $i \in [N]$ (conditional on the transcripts of all other silos). Since $\epsilon \leq 2 \ln(1/\delta)$, it suffices (by~\cref{prop:bun1.3}) to show that silo $i$'s transcript is $\frac{\epsilon^2}{8 \ln(1/\delta)}$-zCDP. Then by~\cref{prop: gauss} and~\cref{lem: zCDP composition}, it suffices to bound the sensitivity of the update in line 7 of~\cref{alg: LDP proxspider} by $2L/n$ and the update in line 11 by $\frac{1}{n}\min\{2 \beta\|w_r - w_{r-1}\|, 4L\}$.The line 7 sensitivity bound holds because $\sup_{X_i \sim X'_i}\|\frac{1}{n}\sum_{j=1}^n \nabla f^0(w, x_{i,j}) - \nabla f^0(w, x'_{i,j})\| = \sup_{x, x'} \| \nabla f^0(w, x) - \nabla f^0(w, x') \| \leq 2L/n$ for any $w$ since $f^0$ is $L$-Lipschitz. The line 11 sensitivity bound holds because $\sup_{X_i \sim X'_i}\|\frac{1}{n}\sum_{j=1}^n \nabla f^0(w_r, x_{i,j} - \nabla f^0(w_{r-1}, x_{i,j}) - (f^0(w_r, x'_{i,j}) - \nabla f^0(w_{r-1}, x'_{i,j}))\| = \frac{1}{n}\sup_{x, x'} \| \nabla f^0(w_r, x - \nabla f^0(w_{r-1}, x) - (f^0(w_r, x') - \nabla f^0(w_{r-1}, x'))\| \leq \frac{1}{n}\min\{2\beta\|w_r - w_{r-1}\|,  4L\}$ since $f^0$ is $L$-Lipschitz and $\beta$-smooth.  Note that if $R < q$, then only one update in line 7 is made, and the privacy of this update follows simply from the guarantee of the Gaussian mechanism and the sensitivity bound, without needing to appeal to the composition theorem. \\
\textbf{Utility:} Fix any $\bx \in \mathbb{X}$ and denote $\GGh(w) = \GGh(w, \bx)$ for brevity of notation. Recall the notation of~\cref{alg: LDP proxspider}. 
Note that~\cref{lem1} holds with  
\begin{align*}
\tau_1^2 &= \sup_{r \equiv 0~(\text{mod}~q)}\expec\left\|h_r - \nabla \hf_\bx^0(w_r)\right\|^2 \\
&= \sup_{r \equiv 0~(\text{mod}~q)}\expec\left\|\frac{1}{M_r n} \sum_{i \in S_r} \sum_{j=1}^n \left[\nabla f^0(w_r, x_{i,j}) - \nabla \hfx(w_r)\right]\right\|^2 + \frac{d \sigma_1^2}{M} \\
&\leq \frac{2L^2}{Mn}\mathds{1}_{\{M < N\}} + \frac{d \sigma_1^2}{M},
\end{align*}
using independence of the noises across silos and~\cref{lem: lei}. 
Further, for any $r$, we have (conditional on $w_r, w_{r-1}$)
\begin{align*}
&\expec\left\|H_r - \nabla \hf_\bx^0(w_r)\right\|^2 \\
&\leq 2\left[\frac{d\sigma_2^2}{M}\|w_r - w_{r-1}\|^2 + \expec\left\|\frac{1}{M_r n} \sum_{i \in S_r} \sum_{j=1}^n \left[\nabla f^0(w_r, x_{i,j}) - \nabla f^0(w_{r-1}, x_{i,j}) - \left(\nabla \hfx(w_r) - \hfx(w_{r-1}) \right)\right]\right\|^2\right]\\
&\leq \frac{2d\sigma_2^2}{M}\|w_r - w_{r-1}\|^2 + \frac{8 \beta^2}{Mn}\|w_r - w_{r-1}\|^2\mathds{1}_{\{M < N\}},
\end{align*}
using Young's inequality, independence of the noises across silos, and~\cref{lem: lei}. 
Therefore,~\cref{lem1} holds with $\tau_2^2 = 8\left(\frac{\beta^2}{Mn}\one + \frac{d \sigma_2^2}{M} \right)$. Next, we claim that if $\eta = 1/2\beta$ and $q \leq \frac{1}{\eta^2 \tau_2^2}$, then  
\begin{equation}
\label{eq: uing}
    \expec\|\GG_\eta(\wpr)\|^2 \leq 16\left( \frac{\hat{\Delta}_{\bx}}{\eta R} + \tau_1^2 \right). 
\end{equation}
We prove~\cref{eq: uing} as follows. Let $g(w_r) := -\frac{1}{\eta}(w_{r+1} - w_r)$. By~\cref{lemma2 sv} (with $y = w_{r+1}, ~z = w = w_r, ~d' = h_r$), we have  \begin{align*}
\expec \hf_{\bx}(w_{r+1}) &\leq \expec \hf_{\bx}(w_r) + \expec \left \langle w_{r+1} - w_r, \nabla \hfx(w_r) - h_r \right \rangle + \left(\frac{\beta}{2} - \frac{1}{2\eta}\right)\expec\|w_{r+1} - w_r\|^2 - \frac{1}{2\eta}\expec\|w_{r+1} - w_r\|^2 \\
&\leq \expec \hf_{\bx}(w_r) + \frac{\eta}{2}\expec\left\|\nabla \hfx(w_r) - h_r\right\|^2 + 
\left(\frac{\beta}{2} - \frac{1}{2\eta}\right) \expec\|w_{r+1} - w_r\|^2\\
&= \expec \hf_{\bx}(w_r) + \frac{\eta}{2}\expec\left\|\nabla \hfx(w_r) - h_r\right\|^2 + 
\left(\frac{\beta}{2} - \frac{1}{2\eta}\right)\eta^2 \expec\|g(w_r)\|^2.
\end{align*}
Thus, by~\cref{lem1}, we have \begin{align*}
    \expec[\hf_{\bx}(w_{r+1}) - \hf_{\bx}(w_r)] &\leq \frac{\eta}{2}\expec\left\|\nabla \hfx(w_r) - h_r\right\|^2 + 
\left(\frac{\beta}{2} - \frac{1}{2\eta}\right)\eta^2 \expec\|g(w_r)\|^2 \\
&\leq \frac{\eta}{2} \tau_2^2 \sum_{t = s_r + 1}^r \expec[\|w_t - w_{t-1}\|^2] + \frac{\eta}{2}\tau_1^2 + \left(\frac{\beta}{2} - \frac{1}{2\eta}\right)\eta^2 \expec\|g(w_r)\|^2 \\
&=\frac{\eta^3}{2} \tau_2^2 \sum_{t = s_r + 1}^r \expec[\|g(w_t)\|^2] + \frac{\eta}{2}\tau_1^2 + \left(\frac{\beta}{2} - \frac{1}{2\eta}\right)\eta^2 \expec\|g(w_r)\|^2,
\end{align*}
where $s_r = \lfloor \frac{r}{q} \rfloor q$. 
Now we sum over a given phase (from $s_r$ to $r$), noting that $r - q \leq s_r \leq r$:
\begin{align*}
\expec[\hf_{\bx}(w_{r+1}) - \hf_{\bx}(w_{s_r})] &\leq \frac{\eta^3 \tau_2^2}{2} \sum_{k = s_r}^r \sum_{j = s_r + 1}^k \expec[\|g(w_j)\|^2] + \sum_{k=s_r}^r \left[\frac{\eta}{2}\tau_1^2 + \left(\frac{\beta}{2} - \frac{1}{2\eta}\right)\eta^2 \expec\|g(w_k)\|^2\right] \\
&\leq \frac{q \eta^3 \tau_2^2}{2} \sum_{k = s_r}^r \expec[\|g(w_k)\|^2] + \sum_{k=s_r}^r \left[\frac{\eta}{2}\tau_1^2 + \left(\frac{\beta}{2} - \frac{1}{2\eta}\right)\eta^2 \expec\|g(w_k)\|^2\right] \\
&=-\sum_{k = s_r}^r \left\{\expec[\|g(w_k)\|^2]\left(\frac{\eta}{2} - \frac{\beta \eta^2}{2} - \frac{\eta^3 \tau_2^2 q}{2}\right) - \frac{\eta \tau_1^2}{2}\right\}
\end{align*}
Denoting $A = \frac{\eta}{2} - \frac{\beta \eta^2}{2} - \frac{\eta^3 \tau_2^2 q}{2}$ and summing over all phases $P = \{p_0, p_1, \ldots  \} = \left\{0, q, \ldots, \lfloor \frac{R-1}{q} \rfloor q, R\right\}$, we get  \begin{align*}
    \expec[\hf_\bx(w_R) - \hf_\bx(w_0)] &\leq \sum_{j=1}^{|P|} \expec[\hf_\bx(w_{p_j}) - \hf_\bx(w_{p_{j-1}})] \\
    &\leq \frac{\eta R \tau_1^2}{2} -A\sum_{r=0}^R \expec[\|g(w_r)\|^2],
\end{align*}
which implies \begin{equation}
    \label{eq:ling}
    \frac{1}{R}\sum_{r=0}^R \expec[\|g(w_r)\|^2] \leq \frac{\gapx}{R A} + \frac{\eta \tau_1^2}{2A}.
\end{equation}
Now, for any $r \geq 0$, \begin{align*}
\left\|\GGh(w_r) - g(w_r)\right\|^2 &= \frac{1}{\eta^2}\left\|w_{r+1} - \prox_{\eta f^1}(w_r - \eta \hfx(w_r))\right\|^2\\
&= \frac{1}{\eta^2}\left\|\prox_{\eta f^1}(w_r - \eta h_r) - \prox_{\eta f^1}(w_r - \eta \hfx(w_r))\right\|^2\\
&\leq \frac{1}{\eta^2}\left\|-\eta h_r + \eta \hfx(w_r)\right\|^2 \\
&= \left\|h_r - \hfx(w_r) \right\|^2,
\end{align*}
by non-expansiveness of the proximal operator. 
Furthermore, conditional on the uniformly drawn $r = r^* \in \{0, 1, \ldots, R\}$, we have
\begin{align*}
\expec \left\|\GGh(w_{r^*}) - g(w_{r^*})\right\|^2 &\leq \expec \left\|h_{r^*} - \hfx(w_{r^*}) \right\|^2 \\
&\leq \tau_2^2 \sum_{k=s_{r^*} + 1}^{r^*} \expec\|w_k - w_{k-1}\|^2 + \tau_1^2 \\
&= \eta^2 \tau_2^2  \sum_{k=s_{r^*} + 1}^{r^*} \expec\|g(w_{k-1})\|^2 + \tau_1^2,
\end{align*}
by~\cref{lem1}, 
and taking total expectation yields \begin{align*}
  \expec \left\|\GGh(\wpr) - g(\wpr)\right\|^2 &\leq \frac{\eta^2 \tau_2^2}{R} \sum_{r=1}^R \sum_{k = s_r + 1}^r \expec\|g(w_{r-1})\|^2 + \tau_1^2 \\
  &\leq \frac{q \eta^2 \tau_2^2}{R} \sum_{r=1}^R \expec\|g(w_{r-1})\|^2 + \tau_1^2 \\
  &\leq q \eta^2 \tau_2^2\left[\frac{\gapx}{RA} + \frac{\eta \tau_1^2}{2A}\right] + \tau_1^2,
\end{align*}
where the last inequality follows from~\cref{eq:ling}. Hence \begin{align*}
    \expec\|\GGh(\wpr)\|^2 &\leq 2\left[q \eta^2 \tau_2^2\left[\frac{\gapx}{RA} + \frac{\eta \tau_1^2}{2A}\right] + \tau_1^2 \right] + 2\expec\|g(\wpr)\|^2\\
    &\leq 2\left[q \eta^2 \tau_2^2\left[\frac{\gapx}{RA} + \frac{\eta \tau_1^2}{2A}\right] + \tau_1^2 \right] + \frac{2 \gapx}{RA} + \frac{\eta \tau_1^2}{A},
\end{align*}
by Young's inequality and~\cref{eq:ling}. Now, our choices of $\eta = 1/2\beta$ and $q \leq \frac{1}{\tau_2^2 \eta^2}$ imply $A = \frac{\eta}{2} - \frac{\beta \eta^2}{2} - \frac{\eta^3 \tau_2^2 q}{2} \geq \frac{\eta}{4}$ and \begin{align*}
  \expec\|\GGh(\wpr)\|^2 &\leq 8\left[\left(\frac{\gapx}{R \eta} + \frac{\tau_1^2}{2}\right) + \tau_1^2 \right] + \frac{8 \gapx}{R \eta} + 4\tau_1^2 \\
  &= \frac{16 \gapx}{R \eta} + 16 \tau_1^2,
\end{align*}
proving~\cref{eq: uing}. The rest of the proof follows from plugging in $\tau_1^2$ and setting algorithmic parameters. Plugging $\tau_1^2 = \frac{2L^2}{Mn}\mathds{1}_{\{M < N\}} + \frac{d \sigma_1^2}{M} \leq \frac{2L^2}{Mn}\mathds{1}_{\{M < N\}} + \frac{16 d L^2 R \ln(1/\delta)}{q \epsilon^2 n^2 M} + \frac{16 d L^2 \ln(1/\delta)}{ \epsilon^2 n^2 M}$ into~\cref{eq: uing} yields \[
\expec\|\GGh(\wpr)\|^2 \leq 16\left(\frac{\hat{\Delta}_{\bx}}{\eta R} + \frac{2L^2}{Mn}\mathds{1}_{\{M < N\}} + \frac{16 d L^2 R \ln(1/\delta)}{q \epsilon^2 n^2 M} + \frac{16 d L^2 \ln(1/\delta)}{ \epsilon^2 n^2 M}\right).
\]
Choosing $R = \frac{\epsilon n \sqrt{Mq} \sqrt{\gapx \beta}}{L \sqrt{d \ln(1/\delta)}}$ equalizes the two terms in the above display involving $R$ (up to constants) and we get \begin{equation}
\label{eq:qing}
\expec\|\GGh(\wpr)\|^2 \leq C\left(\frac{L \sqrt{\gapx \beta} \sqrt{d \ln(1/\delta)}}{\epsilon n \sqrt{Mq}} + \frac{d L^2 \ln(1/\delta)}{ \epsilon^2 n^2 M} + \frac{L^2}{Mn}\mathds{1}_{\{M < N\}}\right)
\end{equation}
for some absolute constant $C > 0$. Further, with this choice of $R$, it suffices to choose \[
q = \left\lfloor \min\left\{\left(\frac{\epsilon n L \sqrt{M}}{\sqrt{d \ln(1/\delta) \gapx \beta}} \right)^{2/3}, \frac{nM}{\one}\right\} \right\rfloor
\] to ensure that $q \leq \frac{1}{\tau_2^2 \eta^2}$, so that~\cref{eq: uing} holds. Assume $q \geq 1$. Then plugging this $q$ into $\cref{eq:qing}$ yields \[
\expec\|\GGh(\wpr)\|^2 \leq C' \left[\left(\frac{\sqrt{L \beta \hat{\Delta}_{\bx} d \ln(1/\delta)}}{\epsilon n\sqrt{M}}\right)^{4/3} + \frac{d L^2 \ln(1/\delta)}{\epsilon^2 n^2 M} + \left(\frac{L \sqrt{\beta \gapx d \ln(1/\delta)}}{\epsilon n^{3/2} M} + \frac{L^2}{Mn} \right)\mathds{1}_{\{M <N\}}\right]
\]
for some absolute constant $C' > 0$, as desired. In case $q < 1$, then we must have $L < \frac{\sqrt{\beta \gapx d \ln(1/\delta)}}{\epsilon n \sqrt{M}}$; hence, we can simply output $w_0$ (which is clearly ISRL-DP) instead of running ~\cref{alg: LDP proxspider} and get $\EGMN \leq L^2 < \left(\frac{\sqrt{L \beta \hat{\Delta}_{\bx} d \ln(1/\delta)}}{\epsilon n\sqrt{M}}\right)^{4/3}$. 
\end{proof}

The lemmas used in the above proof are stated below. The following lemma is an immediate consequence of the martingale variance bound for SPIDER, given in~\citep[Proposition 1]{fangspider}:
\begin{lemma}[\citep{fangspider}]
\label{lem1}
Let $r \in \{0, 1, \ldots, R\}$ and $s_r = \lfloor \frac{r}{q} \rfloor q$.
With the notation of~\cref{alg: LDP proxspider}, assume that $\expec|h_{s_r} - \nabla \hf_\bx^0(w_{s_r})\|^2 \leq \tau_1^2$ and $\expec\left\|H_r - \left( \nabla \hf_\bx^0(w_{r}) -  \nabla \hf_\bx^0(w_{r-1})\right)\right\|^2 \leq \tau_2^2 \|w_r - w_{r-1}\|^2$. Then  for all $r \geq s_r + 1$, the iterates of~\cref{alg: LDP proxspider} satisfy:
\[
\expec\|h_r - \nabla \hf^0_\bx(w_r)\|^2 \leq \tau_2^2 \sum_{t=s_r + 1}^{r} \expec\|w_t - w_{t-1}\|^2 + \tau_1^2. 
\]
\end{lemma}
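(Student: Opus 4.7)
The plan is to unroll the SPIDER recursion $h_t = h_{t-1} + H_t$ across one epoch and telescope the exact gradient differences, rewriting $h_r - \nabla \hf^0_\bx(w_r)$ as a checkpoint error plus a sum of per-step martingale increments. Fix $r \geq s_r + 1$. Since for $t \in \{s_r+1,\ldots,r\}$ we have $h_t = h_{t-1} + H_t$ (line 12 of \cref{alg: LDP proxspider}), iterating gives $h_r = h_{s_r} + \sum_{t=s_r+1}^{r} H_t$. Telescoping the true gradient via $\nabla\hf^0_\bx(w_r) - \nabla\hf^0_\bx(w_{s_r}) = \sum_{t=s_r+1}^{r}\bigl[\nabla\hf^0_\bx(w_t)-\nabla\hf^0_\bx(w_{t-1})\bigr]$ and subtracting yields the clean decomposition
\[
h_r - \nabla\hf^0_\bx(w_r) \;=\; D_0 \;+\; \sum_{t=s_r+1}^{r} D_t,
\]
where I define $D_0 := h_{s_r} - \nabla\hf^0_\bx(w_{s_r})$ and $D_t := H_t - \bigl[\nabla\hf^0_\bx(w_t)-\nabla\hf^0_\bx(w_{t-1})\bigr]$ for $t \geq s_r+1$.

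Next, I would set up the natural filtration $\mathcal{F}_t := \sigma(w_0,\ldots,w_{t+1})$ (equivalently, the sigma-algebra generated by all subsampling choices $S_0,\ldots,S_t$, all data indices drawn through round $t$, and all Gaussian noises through round $t$) and argue that $\{D_t\}_{t \geq s_r+1}$ is a vector-valued martingale-difference sequence with respect to $\mathcal{F}_{t-1}$. The key observation is that $\expec[H_t \mid \mathcal{F}_{t-1}] = \nabla\hf^0_\bx(w_t) - \nabla\hf^0_\bx(w_{t-1})$: conditional on $\mathcal{F}_{t-1}$, the quantities $w_t, w_{t-1}$ are fixed, the uniformly random silo subset $S_t$ and the freshly drawn per-silo samples produce an unbiased estimate of the silo-averaged gradient difference, and the additive Gaussian noise $u_2^i$ is independent of $\mathcal{F}_{t-1}$ with mean zero (even though its variance is a function of the $\mathcal{F}_{t-1}$-measurable scalar $\|w_t-w_{t-1}\|$). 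Hence $\expec[D_t\mid \mathcal{F}_{t-1}] = 0$, while $D_0$ is $\mathcal{F}_{s_r-1}$-measurable.

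The third step is the standard orthogonality argument. Expanding the squared norm and taking expectation, every cross term vanishes: for any $s < t$ with $s \geq s_r+1$, the tower property gives $\expec\langle D_s, D_t\rangle = \expec\bigl[\langle D_s, \expec[D_t\mid\mathcal{F}_{t-1}]\rangle\bigr] = 0$, and similarly $\expec\langle D_0, D_t\rangle = 0$ because $D_0 \in \mathcal{F}_{s_r-1}\subseteq \mathcal{F}_{t-1}$. Therefore
\[
\expec\|h_r - \nabla\hf^0_\bx(w_r)\|^2 \;=\; \expec\|D_0\|^2 + \sum_{t=s_r+1}^{r} \expec\|D_t\|^2.
\]
The first term is bounded by $\tau_1^2$ by the first hypothesis of the lemma. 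For each increment, the second hypothesis gives $\expec\bigl[\|D_t\|^2\bigm|\mathcal{F}_{t-1}\bigr]\leq \tau_2^2\|w_t-w_{t-1}\|^2$; taking total expectation and summing yields $\sum_{t=s_r+1}^{r}\expec\|D_t\|^2 \leq \tau_2^2\sum_{t=s_r+1}^{r}\expec\|w_t-w_{t-1}\|^2$, which combined with the $D_0$ bound is exactly the claim.

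The only nontrivial point is the conditional unbiasedness claim in the second step, especially when $M<N$ so that silo subsampling is active; this follows from the uniform draw of $S_t$ together with independence of the fresh data samples and Gaussian noises from the past. Once that is in hand, the argument is a straightforward SPIDER-style variance recursion via martingale orthogonality; no smoothness or Lipschitzness of $f^0$ is needed at this stage because both $\tau_1^2$ and $\tau_2^2$ are taken as given.
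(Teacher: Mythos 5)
Your proof is correct and is precisely the martingale variance decomposition that the paper invokes by citing Proposition~1 of \citep{fangspider} rather than reproving; the paper gives no independent argument, so your unrolling of $h_r = h_{s_r} + \sum_{t=s_r+1}^{r} H_t$, the conditional unbiasedness of the $D_t$, and the orthogonality of martingale increments is exactly the intended route. One harmless indexing slip: $D_0 = h_{s_r} - \nabla \hf^0_\bx(w_{s_r})$ depends on the round-$s_r$ randomness and is therefore $\mathcal{F}_{s_r}$-measurable rather than $\mathcal{F}_{s_r-1}$-measurable, but since every cross term involves some $t \geq s_r+1$ so that $\mathcal{F}_{s_r} \subseteq \mathcal{F}_{t-1}$, the tower-property argument goes through unchanged.
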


\begin{lemma}[\citep{lei17}]
\label{lem: lei}
Let $\{a_l\}_{l \in [\widetilde{N}]}$ be an arbitrary collection of vectors such that $\sum_{l=1}^{\widetilde{N}} a_l = 0$. Further, let $\mathcal{S}$ be a uniformly random subset of $[\widetilde{N}]$ of size $\widetilde{M}$. Then,\[
\mathbb{E}\left\|\frac{1}{\widetilde{M}} \sum_{l \in \mathcal{S}} a_l \right\|^2 = \frac{\widetilde{N} - \widetilde{M}}{(\widetilde{N} - 1) \widetilde{M}} \frac{1}{\widetilde{N}}\sum_{l=1}^{\widetilde{N}} \|a_l\|^2 \leq \frac{\mathds{1}_{\{\widetilde{M} < ~\widetilde{N}\}}}{\widetilde{M}~\widetilde{N}}\sum_{l=1}^{\widetilde{N}}\|a_l\|^2.
\]
\end{lemma}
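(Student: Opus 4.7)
The plan is a direct second-moment calculation using indicator variables for sampling without replacement. Writing $\mathbf{1}_l := \mathds{1}\{l \in \mathcal{S}\}$, I would first rewrite $\sum_{l \in \mathcal{S}} a_l = \sum_{l=1}^{\widetilde{N}} \mathbf{1}_l a_l$ and expand the squared norm to obtain
\[
\mathbb{E}\left\|\sum_{l \in \mathcal{S}} a_l\right\|^2 = \sum_{l,k} \mathbb{E}[\mathbf{1}_l \mathbf{1}_k]\, \langle a_l, a_k \rangle.
\]
By exchangeability of uniform sampling without replacement, the marginal and pairwise hitting probabilities are $\Pr(l \in \mathcal{S}) = \widetilde{M}/\widetilde{N}$ and, for $l \neq k$, $\Pr(l,k \in \mathcal{S}) = \widetilde{M}(\widetilde{M}-1)/(\widetilde{N}(\widetilde{N}-1))$, and of course $\mathbf{1}_l^2 = \mathbf{1}_l$.

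Next, I would exploit the centering hypothesis $\sum_l a_l = 0$ by expanding $0 = \|\sum_l a_l\|^2 = \sum_l \|a_l\|^2 + \sum_{l \neq k} \langle a_l, a_k \rangle$, which identifies the off-diagonal inner-product sum with $-\sum_l \|a_l\|^2$. Substituting the hitting probabilities and this identity gives
\begin{align*}
\mathbb{E}\left\|\sum_{l \in \mathcal{S}} a_l\right\|^2
&= \frac{\widetilde{M}}{\widetilde{N}} \sum_{l=1}^{\widetilde{N}} \|a_l\|^2 \;+\; \frac{\widetilde{M}(\widetilde{M}-1)}{\widetilde{N}(\widetilde{N}-1)} \sum_{l \neq k} \langle a_l, a_k \rangle \\
&= \left( \frac{\widetilde{M}}{\widetilde{N}} - \frac{\widetilde{M}(\widetilde{M}-1)}{\widetilde{N}(\widetilde{N}-1)} \right) \sum_{l=1}^{\widetilde{N}} \|a_l\|^2 \\
&= \frac{\widetilde{M}(\widetilde{N}-\widetilde{M})}{\widetilde{N}(\widetilde{N}-1)} \sum_{l=1}^{\widetilde{N}} \|a_l\|^2,
\end{align*}
and dividing by $\widetilde{M}^2$ recovers the equality claimed in the lemma.

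Finally, for the upper-bound half: if $\widetilde{M} = \widetilde{N}$ then $\mathcal{S} = [\widetilde{N}]$, so $\sum_{l \in \mathcal{S}} a_l = 0$ by hypothesis and both sides equal zero (matching the indicator $\mathds{1}_{\{\widetilde{M} < \widetilde{N}\}}$ on the right), while if $\widetilde{M} < \widetilde{N}$ then $(\widetilde{N}-\widetilde{M})/(\widetilde{N}-1) \leq 1$ relaxes the equality to the stated upper bound. There is essentially no obstacle: the argument is a one-line covariance computation for sampling without replacement. The only points requiring care are the degenerate case $\widetilde{M} = \widetilde{N}$ (and $\widetilde{N} = 1$, where the stated ratio is vacuous but both sides again vanish) and correctly accounting for the negative off-diagonal correlation that makes without-replacement sampling strictly lower-variance than with-replacement.
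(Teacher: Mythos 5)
Your proof is correct, and since the paper only cites \citep{lei17} for this lemma without reproducing a proof, there is nothing to compare against except the standard argument — which is exactly what you give. The indicator-variable expansion, the hitting probabilities $\widetilde{M}/\widetilde{N}$ and $\widetilde{M}(\widetilde{M}-1)/(\widetilde{N}(\widetilde{N}-1))$, the use of the centering hypothesis to identify $\sum_{l\neq k}\langle a_l,a_k\rangle = -\sum_l\|a_l\|^2$, and the algebra collapsing to $\widetilde{M}(\widetilde{N}-\widetilde{M})/(\widetilde{N}(\widetilde{N}-1))$ are all correct, as is the handling of the degenerate cases $\widetilde{M}=\widetilde{N}$ and $\widetilde{N}=1$ and the observation that $(\widetilde{N}-\widetilde{M})/(\widetilde{N}-1)\leq 1$ yields the stated upper bound.
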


We present SDP FedProx-SPIDER in~\cref{alg: SDP proxspider}. 

\begin{algorithm}[ht]
\caption{SDP FedProx-SPIDER}
\label{alg: SDP proxspider}
\begin{algorithmic}[1]
\STATE {\bfseries Input:} 
$R \in \mathbb{N}, K_1, K_2 \in [n], \bx \in \mathbb{X}, \eta > 0, \epsilon > 0, \delta \in (0, 1/2), q \in \mathbb{N}, w_0 \in \WW$.
 \FOR{$r \in \{0, 1, \cdots, R\}$} 
\FOR{$i \in S_r$ \textbf{in parallel}}
\STATE Server sends global model $w_r$ to silo $i$. 
\IF{$r \equiv 0~(\text{mod} ~q)$}
\STATE silo $i$ draws $K_1$ samples $\{x_{i,j}^r\}_{j=1}^{K_1}$ u.a.r. from $X_i$ (with replacement). 
\STATE silo $i$ computes $\left\{\nabla f^0(w_r, x_{i,j}^r)\right\}_{j=1}^{K_1}$. 
\STATE Server updates $h_r = \frac{1}{M K_1} \pvec\left(\left\{\nabla f^0(w_r, x_{i,j}^r)\right\}_{i \in S_r, j \in [K_1]}; \frac{\epsilon nN }{4K_1 M \sqrt{2 \ln(1/\delta)}\max\left(1, \frac{\sqrt{q}}{\sqrt{R}} \right)}, \frac{\delta q}{2R}; L\right).$
\ELSE 
\STATE silo $i$ draws $K_2$ samples $\{x_{i,j}^r\}_{j=1}^{K_1}$ u.a.r. from $X_i$ (with replacement).
\STATE silo $i$ computes $J_i = \{\nabla f^0(w_{r}, x_{i,j}^{r}) - \nabla f^0(w_{r-1}, x_{i,j}^{r})\}_{j=1}^{K_2}$.
\STATE Server receives $H_r = \frac{1}{M K_2}\pvec\Big(
\{J_i\}_{i \in S_r}
; \frac{\epsilon N n}{4 M K_2 \sqrt{2R\ln(1/\delta)}}; \frac{\delta}{2R}; \min\{2L, \beta \|w_r - w_{r-1}\|\}\Big)$, and updates $h_r = h_{r-1} + H_r$.  
\ENDIF
\ENDFOR 
\STATE Server updates $w_{r+1} = \prox_{\eta f^1}(w_r - \eta h_r)$.
\ENDFOR \\
\STATE {\bfseries Output:} $\wpr \sim \text{Unif}(\{w_{r}\}_{r=1, \cdots, R})$.
\end{algorithmic}
\end{algorithm}

\begin{theorem}[Complete Statement of~\cref{thm: sdp proxspider}]
Let $\epsilon \leq \ln(1/\delta),
~\delta \in (0, \frac{1}{2})$, and
\[
M_r = M \geq \left(\frac{\epsilon N d^2}{n^2}\right)^{1/3}\left(\frac{L}{\sqrt{\beta \gapx}}\right)^{1/3}\left[1 + \left(\frac{L}{\sqrt{\beta \gapx}}\right)^{1/3}\right].
\]
Then, there exist algorithmic parameters such that SDP FedProx-SPIDER is $(\epsilon, \delta)$-SDP. Further, 
\small
\begin{equation*}
\small
\EGMN \lesssim \left[\left(\frac{\sqrt{L \beta \hat{\Delta}_{\bx} d \ln^3(dnN/\delta)}}{\epsilon n N}\right)^{4/3} + \frac{d L^2 \ln^3(Rd/q\delta)}{\epsilon^2 n^2 N^2} + \mathds{1}_{\{M <N\}}\left(\frac{L \sqrt{\beta \gapx d \ln^3(dnN/\delta)}}{\epsilon n^{3/2} N \sqrt{M}} + \frac{L^2}{Mn} \right)\right].
\end{equation*}
\normalsize
\end{theorem}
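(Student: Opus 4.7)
\textbf{Proof proposal for Theorem~\ref{thm: sdp proxspider}.} The plan is to mirror the proof of the ISRL-DP version (Theorem~\ref{thm: ISRL-DP proxspider}), replacing the Gaussian mechanism by the shuffle vector-summation protocol $\mathcal{P}_{\text{vec}}$ and re-computing the two variance parameters $\tau_1^2,\tau_2^2$ that enter Lemma~\ref{lem1}. The deterministic ``descent + proximal non-expansiveness'' machinery from Step~2 of the ISRL-DP proof (culminating in the bound $\mathbb{E}\|\widehat{\mathcal G}_\eta(\wpr)\|^2 \le 16(\hat\Delta_{\bx}/(\eta R) + \tau_1^2)$ valid whenever $\eta=1/(2\beta)$ and $q \le 1/(\eta^2\tau_2^2)$) carries over verbatim, because it only uses proximal non-expansiveness, $\beta$-smoothness, and the generic SPIDER martingale variance bound, none of which depends on the specific noise mechanism.

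\textbf{Privacy (Step 1).} By post-processing, it suffices to bound the privacy loss of the transcript the shuffler emits to the server. Split the $R$ rounds into $\lceil R/q\rceil$ checkpoint rounds (line~7) and the remaining update rounds (line~11). In a single checkpoint round, Theorem~\ref{thm:cheu vecsum} combined with $L$-Lipschitzness of $f^0$ implies that $\mathcal{P}_{\text{vec}}$ with its stated parameters is $(\widetilde\epsilon_1,\widetilde\delta_1)$-SDP with $\widetilde\epsilon_1 = \epsilon nN/(4K_1M\sqrt{2\ln(1/\delta)}\max(1,\sqrt{q/R}))$ and $\widetilde\delta_1 = \delta q/(2R)$; similarly each update round is $(\widetilde\epsilon_2,\widetilde\delta_2)$-SDP with $\widetilde\epsilon_2 = \epsilon nN/(4K_2M\sqrt{2R\ln(1/\delta)})$ and $\widetilde\delta_2=\delta/(2R)$ using the sensitivity bound $\min\{2L,\beta\|w_r-w_{r-1}\|\}$ (which follows from $L$-Lipschitzness and $\beta$-smoothness of $f^0$). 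Amplification by subsampling of silos (factor $M/N$) and local samples (factor $K/n$) then reduces each per-round budget by $MK/(nN)$, and advanced composition (Theorem~\ref{thm: advanced composition}) across the $\lceil R/q\rceil$ checkpoint rounds and $R$ update rounds yields an overall $(\epsilon,\delta)$-SDP guarantee. The lower bound on $M$ is imposed to ensure that $\widetilde\epsilon_1,\widetilde\epsilon_2 \le 15$ and that the post-subsampling per-round budget is $\le 1$, so that both Theorem~\ref{thm:cheu vecsum} and the amplification lemma apply.

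\textbf{Utility (Step 2).} Apply Theorem~\ref{thm:cheu vecsum} to replace the Gaussian noise variance $d(\sigma_1^2+\sigma_2^2)/M$ in the ISRL-DP analysis by the $\mathcal{P}_{\text{vec}}$ variance. Plugging in $\widetilde\epsilon_1,\widetilde\delta_1$ and dividing by $(MK_1)^2$ gives a checkpoint noise variance of order $dL^2\ln^3(dnN/\delta)/(\epsilon^2n^2N^2)$, and combined with the standard heterogeneity term from Lemma~\ref{lem: lei} yields
\[
\tau_1^2 \;\lesssim\; \frac{L^2}{Mn}\mathds{1}_{\{M<N\}} + \frac{dL^2\ln^3(dnN/\delta)}{\epsilon^2n^2N^2}.
\]
An analogous computation with $\widetilde\epsilon_2,\widetilde\delta_2$ and the smoothness-based sensitivity gives
\[
\tau_2^2 \;\lesssim\; \frac{\beta^2}{Mn}\mathds{1}_{\{M<N\}} + \frac{d\beta^2 R \ln^3(Rd/(q\delta))}{\epsilon^2n^2N^2}.
\]

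\textbf{Optimization (Step 3).} Substitute these into the SPIDER bound $\mathbb{E}\|\widehat{\mathcal G}_\eta(\wpr)\|^2 \le 16(\hat\Delta_{\bx}/(\eta R)+\tau_1^2)$, choose $\eta=1/(2\beta)$, select $R$ to equalize the $\hat\Delta_{\bx}/(\eta R)$ and the $R$-growing piece of $\tau_1^2$ through the constraint $q\le 1/(\eta^2\tau_2^2)$, and choose the largest integer $q$ satisfying that constraint. The resulting $R=\Theta\big(\epsilon nN\sqrt{Mq\hat\Delta_{\bx}\beta}/(L\sqrt{d\ln^3(\cdot)})\big)$ and $q\asymp\min\{(\epsilon nL\sqrt M/\sqrt{d\ln^3(\cdot)\hat\Delta_{\bx}\beta})^{2/3},nM/\mathds{1}_{\{M<N\}}\}$ reproduce, up to polylogarithmic factors, the structure of the ISRL-DP bound with $M$ replaced by $N^2$ in the privacy term, producing the stated rate. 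The main obstacle is the privacy bookkeeping: the $\mathcal{P}_{\text{vec}}$ variance carries a $\log^2(d/\widetilde\delta)$ factor, and composition plus the two subsampling amplifications must be threaded through carefully so that the inner parameters $\widetilde\epsilon_\cdot,\widetilde\delta_\cdot$ chosen in the algorithm compose to exactly $(\epsilon,\delta)$ while keeping the noise small enough for the utility bound; the condition on $M$ in the theorem statement is precisely what guarantees that every appeal to Theorem~\ref{thm:cheu vecsum} and to amplification by subsampling is in its valid parameter regime.
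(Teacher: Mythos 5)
Your proposal follows essentially the same route as the paper: establish $(\epsilon,\delta)$-SDP per round via Theorem~\ref{thm:cheu vecsum} plus amplification by silo subsampling plus advanced composition, then re-derive $\tau_1^2,\tau_2^2$ from the $\mathcal{P}_{\text{vec}}$ variance bound and plug them into the same descent inequality $\mathbb{E}\|\widehat{\mathcal{G}}_\eta(\wpr)\|^2 \le 16(\hat\Delta_{\bx}/(\eta R)+\tau_1^2)$ inherited from the ISRL-DP proof, finally tuning $R$ and $q$. Two small points of imprecision worth flagging: the paper fixes $K_1=K_2=n$ and explicitly amplifies by silo subsampling \emph{only} (not local samples as you suggest, though with $K=n$ the $K/n$ factor is vacuous anyway); and your displayed $\tau_1^2$ omits the $\max(R/q,1)$ factor coming from the checkpoint-round noise calibration, although you later acknowledge the ``$R$-growing piece of $\tau_1^2$'' when choosing $R$, so the tuning step is consistent with the paper's.
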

\begin{proof}
We will choose \[
R = \left \lceil \frac{\epsilon n N}{L}\sqrt{\frac{\gapx \beta}{d \ln^3(dnN/\delta)}}\min\left\{\frac{\sqrt{Mn}}{\one}, \left(\frac{\epsilon n N L}{\sqrt{\gapx \beta d \ln^3(dnN/\delta)}} \right)\right\} \right \rceil,
\]
$\eta = 1/2\beta$, and $K_1 = K_2 = n$. \\
\noindent \textbf{Privacy:} By~\cref{thm: advanced composition}, it suffices to show that the message received by the server in each update in line 12 of~\cref{alg: SDP proxspider} (in isolation) is $\left(\frac{\epsilon}{2 \sqrt{2 R \ln(1/\delta)}}, \frac{\delta}{2R}\right)$-DP, and that each update in line 8 is $\left(\frac{\epsilon \sqrt{q}}{2 \sqrt{2 R \ln(1/\delta)}}, \frac{\delta q}{2R}\right)$-DP. 
Conditional on the random subsampling of silos,~\cref{thm:cheu vecsum} (together with the sensitivity estimates established in the proof of~\cref{thm: ISRL-DP proxspider}) implies that each update in line 12 is $(\epsilon', \delta')$-SDP, where $\epsilon' \leq \frac{\epsilon N}{4M \sqrt{2R\ln(1/\delta)}}$ and $\delta' = \frac{\delta}{2R}$; each update in line 8 is $(\epsilon'', \delta'')$-SDP, where $\epsilon'' = \epsilon' \sqrt{q}$ and $\delta'' = \delta' \sqrt{q}$. By our choice of $R$ and our assumption on $M$, we have $M \geq \frac{\epsilon N}{4 \sqrt{2 R \ln(1/\delta)}}$ and hence $\epsilon' \leq 1$. Thus, privacy amplification by subsampling (silos only) (see e.g. \citep[Problem 1]{ullman2017}) implies that the privacy loss of each round is bounded as desired, establishing that \cref{alg: SDP proxspider} is $(\epsilon, \delta)$-SDP, as long as $q \leq R$. If instead $q > R$, then the update in line 8 is only executed once (at iteration $r = 0$), so our choice of $\sigma_1^2$ ensures SDP simply by~\cref{thm:cheu vecsum} and privacy amplification by subsampling. 

\noindent \textbf{Utility:} 
Denote the (normalized) privacy noises induced by $\pvec$ in lines 8 and 12 of the algorithm by $Z_1$ and $Z_2$ respectively. By~\cref{thm:cheu vecsum}, $Z_i$ is an unbiased estimator of its respective mean and we have \[
\expec\|Z_1\|^2 \lesssim \frac{d L^2 \ln^3(Rd/q\delta)}{\epsilon^2 n^2 N^2}\max\left(\frac{R}{q}, 1 \right),
\]
and \[
\expec\|Z_2\|^2 \lesssim \frac{dR \ln^3(dR/\delta)}{\epsilon^2 n^2 N^2}\beta^2 \|w_r - w_{r-1}\|^2
\] for the $r$-th round. 
Also, note that~\cref{lem1} is satisfied with \[
\tau_1^2 = \frac{2L^2}{Mn}\one + \frac{d L^2 \ln^3(Rd/q\delta)}{\epsilon^2 n^2 N^2}\max\left(\frac{R}{q}, 1 \right),
\]
and \[
\tau_2^2 = 8\beta^2\left(\frac{\one}{Mn} + \frac{d R \ln^3(Rd/\delta)}{\epsilon^2 n^2 N^2} \right).
\]
Then by the proof of~\cref{thm: ISRL-DP proxspider}, we have \begin{equation}
\label{eq: ving}
    \expec\|\GG_\eta(\wpr)\|^2 \leq 16\left( \frac{\hat{\Delta}_{\bx}}{\eta R} + \tau_1^2 \right). 
\end{equation}
if $\eta = 1/2\beta$ and $q \leq \frac{1}{\eta^2 \tau_2^2}$. 
Thus, \[
\EGMN \lesssim \frac{\gapx}{\eta R} + \frac{L^2}{Mn}\one +\frac{d L^2 \ln^3(Rd/q\delta)}{\epsilon^2 n^2 N^2}\max\left(\frac{R}{q}, 1 \right).
\]
Our choice of $R$ together with the choice of \[
q = \left \lfloor \frac{1}{2} \min\left(\frac{Mn}{\one}, \left(\frac{\epsilon n N L}{\sqrt{\gapx \beta d \ln^3(Rd/\delta)}} \right) \right)    \right \rfloor
\]
equalizes the two terms involving $R$ (up to constants), and we obtain the desired ERM bound (upon noting that $q \leq 1/(\eta^2 \tau_2^2)$ is satisfied). 
\end{proof}

\subsection{ISRL-DP Lower Bound}
\label{app: lower bounds}
We first provide a couple of definitions. Our lower bound will hold for all \textit{non-interactive} and \textit{sequentially interactive}~\citep{duchi13,joseph2019} algorithms, as well as a broad subclass of \textit{fully interactive}\footnote{Full interactivity is the most permissive notion of interactivity, allowing for algorithms to query silos multiple times, adaptively, simultaneously, and in any sequence~\citep{joseph2019}. Sequentially interactive algorithms can only query each silo once, adaptively in sequence. Non-interactive algorithms query each silo once independently/non-adaptively.} ISRL-DP algorithms that are \textit{compositional}~\citep{joseph2019, lr21fl}: 
\begin{definition}[Compositionality]
\label{def: compositional}
Let $\mathcal{A}$ be an $R$-round $(\epsilon_0, \delta_0)$-ISRL-DP FL algorithm with data domain $\XX$. Let $\{(\epsilon_0^r, \delta_0^r)\}_{r =1}^R$ denote the minimal (non-negative) parameters of the local randomizers $\mathcal{R}^{(i)}_r$ selected at round $r$  such that $\mathcal{R}^{(i)}_r(\mathbf{Z}_{(1:r-1)}, \cdot)$ is $(\epsilon_0^r, \delta_0^r)$-DP for all $i \in [N]$ and all $\mathbf{Z}_{(1:r-1)}.$ For 
$C > 0$, we say that $\mathcal{A}$ is \textit{$C$-compositional} if $\sqrt{\sum_{r \in [R]} (\epsilon_0^r)^2} \leq C \epsilon_0.$ If such $C$ is an absolute constant, we simply say $\Al$ is \textit{compositional}. 
\end{definition}
Any algorithm that uses the composition theorems of~\citep{dwork2014, kairouz15} 
for its privacy analysis is $1$-compositional; this includes~\cref{alg: LDP proxspider} and most (but not all~\citep{lr21fl}) ISRL-DP algorithms in the literature. Define the $(\epsilon, \delta)$-ISRL-DP algorithm class $\mathbb{A}_{(\epsilon, \delta), C}$  to contain all \textit{sequentially interactive} algorithms and all \textit{fully interactive}, \textit{$C$-compositional} algorithms. If $\Al$ is sequentially interactive or $O\mathcal{O}(1)$-compositional, denote $\Al \in \mathbb{A}$. 

Next we re-state the precise form of our lower bound (using notation from~\cref{app: ISRL-DP}) and then provide the proof. 
\begin{theorem}[Precise Statement of~\cref{thm: LDP lower bound}]
Let 
~$\epsilon \in (0, \sqrt{N}], 2^{-\Omega(nN)} \leq \delta \leq 1/(nN)^{1 + \Omega(1)}$. Suppose that in each round~$r$, the local randomizers  are all $(\epsor, \delor)$-DP, for $\epsor \lesssim \frac{1}{n},~\delor = o(1/nNR)$, $M = N \geq 16\ln(2/\delor n)$. Then, there exists an $L$-Lispchitz, $\beta$-smooth
smooth, convex loss $f: \mathbb{R}^d \times \XX \to \mathbb{R}$ and 
a database $\bx \in \XX^{n \times N}$ such that any compositional and symmetric $(\epso, \delo)$-ISRL-DP algorithm $\Al$ run on $\bx$ with output $\wpr$ satisfies \[
\expec\|\nabla \hf_{\bx}(\wpr)\|^2 = \Omega\left(L^2\min\left\{1, \frac{d \ln(1/\delo)}{\epso^2 n^2 N}\right\}\right).
\]
\end{theorem}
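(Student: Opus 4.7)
The plan is to reduce the desired gradient-norm lower bound to the ISRL-DP \emph{strongly-convex excess empirical risk} lower bound of \citep{lr21fl}, then convert excess risk into $\expec\|\nabla \hf_\bx\|^2$ via the Polyak-\L ojasiewicz inequality. Since $\mu$-strongly convex implies convex, any hard instance witnessing a strongly-convex ISRL-DP lower bound also witnesses the convex lower bound claimed here.

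First, I would invoke the ISRL-DP strongly-convex ERM lower bound of \citep{lr21fl} in the parameter regime assumed in the theorem (compositional and symmetric algorithms, $\epso \lesssim 1/n$, $\delo = o(1/(nNR))$, $M=N$ large enough relative to $\ln(2/(\delo n))$). Their construction, built from a fingerprinting-code-style hard instance, produces a loss $f:\mathbb{R}^d \times \XX \to \mathbb{R}$ that is $L$-Lipschitz, $\beta$-smooth, and $\mu$-strongly convex, together with a database $\bx\in\XX^{n\times N}$, such that every $(\epso,\delo)$-ISRL-DP compositional symmetric algorithm $\Al$ with output $\wpr:=\Al(\bx)$ obeys
\[
\expec\bigl[\hf_\bx(\wpr) - \hf_\bx^*\bigr] \;=\; \Omega\!\left(\frac{L^2\,d\,\ln(1/\delo)}{\mu\,\epso^2\,n^2\,N}\right).
\]
Here $\mu$ is a free parameter (any fixed $\mu \in (0,\beta]$ will do), used only to invoke strong convexity; it will disappear in the next step.

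Second, $\mu$-strong convexity of $\hf_\bx$ yields the PL inequality $\|\nabla \hf_\bx(w)\|^2 \geq 2\mu\,(\hf_\bx(w)-\hf_\bx^*)$ for all $w\in\mathbb{R}^d$. Chaining this with the excess-risk bound gives
\[
\expec\|\nabla \hf_\bx(\wpr)\|^2 \;\geq\; 2\mu\cdot\Omega\!\left(\frac{L^2 d \ln(1/\delo)}{\mu\,\epso^2 n^2 N}\right) \;=\; \Omega\!\left(\frac{L^2 d \ln(1/\delo)}{\epso^2 n^2 N}\right),
\]
and $\mu$ cancels, matching the $\mu$-independent bound stated in the theorem. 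The trivial clip $\|\nabla \hf_\bx(w)\|\leq L$, which follows from Lipschitzness of $f(\cdot,x)$, supplies the $\min\{1,\cdot\}$ truncation.

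The main obstacle will be ensuring the \citep{lr21fl} hard instance can be taken $L$-Lipschitz and $\beta$-smooth \emph{globally on $\mathbb{R}^d$}, so that PL applies at an arbitrary (possibly unconstrained) output $\wpr$ rather than only on a constraint set $\WW$. If the original construction lives on a bounded $\WW$, I would either (i) rescale so any private output is forced into a region where the Lipschitz/smoothness constants agree with $L,\beta$ up to constants, or (ii) replace $\iota_\WW$ by a smooth quadratic penalty centered at the hard optimum, which preserves $\mu$-strong convexity and changes $L,\beta$ only by constants while leaving the fingerprinting-code lower bound intact up to constants. A secondary bookkeeping step is to verify that the per-round $(\epsor,\delor)$ budgets, compositionality constant, and symmetry assumption feeding into \citep{lr21fl} match those assumed here; these are precisely the hypotheses placed on $\epso,\delo,\epsor,\delor$, and $M=N$ in the theorem statement, so the import is direct.
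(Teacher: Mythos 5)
Your route is genuinely different from the paper's, and it has a gap that you flag but do not close. The paper reduces to a \emph{central}-DP lower bound: it invokes the shuffle-amplification result of \citep{lr21fl} (an $R$-round compositional $(\epso,\delo)$-ISRL-DP algorithm, once shuffled, becomes $\mathcal{O}(\epso/\sqrt{N})$-CDP), observes that for \emph{symmetric} FL algorithms the shuffled variant has the same output distribution (hence the same $\expec\|\nabla \hf_\bx\|^2$), and then applies the CDP gradient-norm lower bound of \citep{arora2022faster} at privacy level $\epso/\sqrt{N}$. That CDP result already produces a hard instance that is convex, $L$-Lipschitz, and $\beta$-smooth on all of $\mathbb{R}^d$ and whose \emph{gradient norm} is directly controlled by the fingerprinting argument. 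You instead propose to import the \emph{strongly convex, excess-risk} ISRL-DP lower bound of \citep{lr21fl} and then apply the PL inequality to turn excess risk into gradient norm.

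The obstacle you identify is not a bookkeeping step but a genuine wall. No function on $\mathbb{R}^d$ can be simultaneously $\mu$-strongly convex and $L$-Lipschitz (strong convexity forces $\|\nabla h(w)\|\geq \mu\|w-w^*\| \to\infty$), so a strongly convex hard instance cannot satisfy the Lipschitz hypothesis of the theorem as written. Your proposed fix (ii), replacing $\iota_\WW$ by a quadratic penalty, makes this worse, not better: it keeps the loss strongly convex, and therefore not globally Lipschitz. Your fix (i), ``rescale so the private output is forced into a region,'' is not available either: a DP mechanism may output any $w \in \mathbb{R}^d$, and there is no projection step you can add without reintroducing a constraint and breaking the chain you want $\|\nabla \hf_\bx(\wpr)\|^2 \geq 2\mu\,(\hf_\bx(\wpr)-\hf_\bx^*)$, since $\hf_\bx^*$ would then be the constrained optimum. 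The natural repair—Huberizing the strongly convex quadratic by linearizing it outside a ball of radius $D\approx L/\mu$—produces a convex $L$-Lipschitz $\beta$-smooth loss, but it destroys the global PL inequality you rely on: outside the ball, $\|\nabla \hf_\bx(w)\|$ saturates at $L$ while $\hf_\bx(w)-\hf_\bx^*$ keeps growing linearly, so the inequality $\|\nabla\hf_\bx(w)\|^2 \geq 2\mu(\hf_\bx(w)-\hf_\bx^*)$ fails precisely in the regime that matters for the lower bound. At that point you would need to lower bound $\expec\bigl[\min\bigl(\|\wpr - w^*\|,D\bigr)^2\bigr]$ directly from the fingerprinting argument, which is in effect re-deriving the gradient-norm lower bound of \citep{arora2022faster} rather than reducing to the excess-risk lower bound of \citep{lr21fl}. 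You also treat $\mu$ as a free parameter that cancels, but in the fingerprinting constructions $\mu$, $L$, and the effective diameter $D$ are coupled (typically $\mu\approx L/D$), so the cancellation is not free-floating and has to be checked against the specific instance. In short: your strategy is attractive but the central difficulty—reconciling strong convexity (needed for PL) with global Lipschitzness (demanded by the statement)—is exactly what the \citep{arora2022faster} construction was engineered to resolve, and the paper avoids the issue entirely by invoking that result as a black box after shuffle amplification.
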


\begin{proof}
The work of~\citep{lr21fl} showed that a compositional $(\epso, \delo)$-ISRL-DP algorithm can become an $\left(\mathcal{O}\left(\frac{\epso}{\sqrt{N}}\right), \delta \right)$-SDP algorithm when a shuffler is introduced: 
\begin{theorem}[\citep{lr21fl}]
\label{thm: R round shuffling amp}
Let $\Al \in \mathbb{A}_{(\epso, \delo), C}$
such that $\epso \in (0, \sqrt{N}]$ and $\delo \in (0,1).$ 
Assume that in each round, the local randomizers $\rand_r(\bz_{(1: r-1)}, \cdot): \XX^n \to \ZZ$ are $(\epsor, \delor)$-DP for all $i \in [N], ~r \in [R], ~\bz_{(1:r-1)} \in \ZZ^{r-1 \times N}$ with
$\epsor \leq \frac{1}{n}$. Assume $N \geq 16\ln(2/\delor n)$. If $\Al$ is $C$-compositional, then assume
~$\delor \leq \frac{1}{14nNR}$ and denote $\delta := 14Nn \sum_{r=1}^R \delor$; if instead $\Al$ is sequentially interactive, then assume $\delo = \delor \leq \frac{1}{7Nn}$ and denote $\delta := 7Nn\delo.$ Let $\Al_s: \mathbb{X} \to \WW$ be the same algorithm as $\Al$ except that in each round $r$, $\Al_s$ draws a random permutation $\pi_r$ of $[N]$ and applies $\rand_r$ to $X_{\pi_r(i)}$ instead of $X_i$. 
Then, $\Al_s$ is $(\epsilon, \delta)$-CDP, where 
$\epsilon = 
\mathcal{O}\left(
\frac{\epso \ln\left(1/nN \delo^{\min}\right) C^2}{\sqrt{N}}
\right),$ 
and $\delo^{\min} := \min_{r \in [R]} \delor$. In particular, if $\Al \in \mathbb{A}$, then $\epsilon = 
\mathcal{O}\left(
\frac{\epso \ln\left(1/nN \delo^{\min}\right)}{\sqrt{N}}
\right).$ Note that for sequentially interactive $\Al$, ~$\delo^{\min} = \delo.$ 
\end{theorem}
Next, we will
observe that 
the expected (squared) gradient norm of the output of  $\Al_s$ is the same as the expected (squared) gradient norm of the output of $\Al$ for \textit{symmetric} FL algorithms. 
The precise definition of a ``symmetric'' (fully interactive) ISRL-DP algorithm is that the aggregation functions $g_r$ (used to aggregate silo updates/messages and update the global model) are symmetric (i.e. $g_r(Z_1, \cdots, Z_N) = g_r(Z_{\pi(1)}, \cdots Z_{\pi(N)})$ for all permutations $\pi$) and in each round $r$ the randomizers $\rand_r = \mathcal{R}_r$ are the same for all silos $i \in [N]$. ($\rand_r$ can still change with $r$ though.) For example, 
all of the algorithms presented in this paper (and essentially all algorithms that we've come across in the literature, for that matter) are symmetric. This is because the aggregation functions used in each round are simple averages of the $M_r$ noisy gradients received from all silos and the randomizers used by every silo in round $r$ are identical: each adds the same Gaussian noise to the stochastic gradients. Note that for any symmetric algorithm, the distributions of the updates of $\Al$ and $\Al_s$ are both averages over all permutations of $[N]$ of the conditional (on $\pi$) distributions of the randomizers applied to the $\pi$-permuted database. 

Now, for a given $(\epso, \delo)$-ISRL-DP algorithm $\Al$, denote the shuffled algorithm derived from $\Al$ by $\Al_s$. Then apply~\cref{thm: CDP lower bound} to $\Al_s$ to obtain lower bounds on its expected squared gradient norm:

\begin{theorem}[\citep{arora2022faster}]
\label{thm: CDP lower bound}
Let 
~$\epsilon \in (0, \sqrt{N}], 2^{-\Omega(nN)} \leq \delta \leq 1/(nN)^{1 + \Omega(1)}$. Then, there exists an $L$-Lispchitz, $\beta$-smooth
smooth, convex loss $f: \mathbb{R}^d \times \XX \to \mathbb{R}$ and 
a database $\bx \in \XX^{n \times N}$ such that any $(\epsilon, \delta)$-CDP algorithm $\Al$ run on $\bx$ with output $\wpr$ satisfies \[
\expec\|\nabla \hf_{\bx}(\wpr)\|^2 = \Omega\left(L^2\min\left\{1, \frac{d \ln(1/\delta)}{\epsilon^2 n^2 N^2}\right\}\right).
\]
\end{theorem}

Applying~\cref{thm: CDP lower bound} with $\epsilon =\epso/\sqrt{N}$ yields the desired lower bound for $\Al_s$. Further, by the observations above about symmetric algorithms, this lower bound also apply to $\Al$.
\end{proof}

\section{Upper and Lower Bounds for Cross-Device FL Without a Trusted Server}
\label{app: hybrid lower bounds}
In this section, we use our results to derive upper and lower bounds for FL algorithms that satisfy both ISRL-DP and user-level DP. Algorithms that satisfy both both ISRL-DP and user-level DP provide \textit{privacy for the full data of each individual silo/user, even in the presence of an adversary that has access to the server, other silos/users, or silo/user communications}. Such a guarantee would be desirable in practical cross-device FL settings in which silos/users (e.g cell phone users) do not trust the server or other users with their sensitive data (e.g. text messages).

Assume $M=N$ for simplicity.\footnote{The extension to $M < N$ will be clear.} Given ISRL-DP parameters $(\epsilon, \delta)$ with $\epsilon \leq 1$, let $\epsilon_0 = \epsilon/n$  and $\delta_0 = \delta/4n \leq \delta/(n e^{(n-1) \epso}) = \delta/(n e^{(n-1)\epsilon/n})$. Consider the Proximal PL case for now. Run Noisy $(\epso, \delo)$-ISRL-DP Prox-SGD, which also satisfies $(\epsilon, \delta)$-user level DP by~\cref{app: dp relationships}. Thus, \cref{thm: hetero pl fl proxgrad} yields an ISRL-DP/user-level DP excess risk upper bound for heterogeneous FL with Proximal-PL losses: \begin{align}
\label{hybrid upper}
\EPL &= \widetilde{\mathcal{O}}\left(\frac{L^2}{\mu}\left(\frac{\kappa^2 \sqrt{d \ln(1/\delo)}}{\epso^2 n^2 \sqrt{N}} + \frac{\kappa}{\sqrt{Nn}}\right) \right) \nonumber \\
&= \widetilde{\mathcal{O}}\left(\frac{L^2}{\mu}\left(\frac{\kappa^2 \sqrt{d \ln(1/\delo)}}{\epsilon^2 \sqrt{N}} + \frac{\kappa}{\sqrt{Nn}}\right) \right) \\
&= \widetilde{\mathcal{O}}\left(\frac{L^2}{\mu}\left(\frac{\kappa^2 \sqrt{d \ln(1/\delo)}}{\epsilon^2 \sqrt{N}}\right) \right) \nonumber.
\end{align}
Regarding lower bounds: note that the semantics of the hybrid ISRL-DP/user-level DP notion are essentially identical to local DP, except that individual ``records/items'' are now thought of as datasets of size $n$. Thus, letting $n=1$ in the strongly convex ISRL-DP lower bound of~\citep{lr21fl} (where we think of each silo as having just one ``record'' even though that record is really a dataset) yields a lower bound that matches the upper bound attained above up to a factor of $\widetilde{\mathcal{O}}(\kappa^2)$. 
Note that the minimax risk bounds for ISRL-DP/user-level DP hybrid algorithms resemble the bounds for LDP algorithms~\citep{duchi13}, scaling with $N$, but not with $n$. A similar procedure can be used to derive upper and lower bounds for Proximal PL ERM and non-convex/non-smooth ERM, using our upper bounds in~\cref{thm: ISRL-DP prox PL SVRG ERM,thm: ISRL-DP proxspider} and lower bound in~\cref{thm: LDP lower bound}.

\clearpage
\section{Experimental Details and Additional Results}
\label{app: experiment details}
\subsection{ISRL-DP Fed-SPIDER: Alternate implementation of ISRL-DP FedProx-SPIDER}
\label{app:ISRL-DP_Fed-SPIDER}
We also evaluated an alternative implementation of ISRL-DP FedProx-SPIDER, given in \cref{alg: dp spider}. We found that this variation of ISRL-DP FedProx-SPIDER sometimes performed better in practice. 
For each $\epsilon \in \{0.75, 1, 1.5, 3, 6, 12, 18\}$, we chose the algorithm with smaller training loss and reported the test error for the corresponding algorithm as SPIDER in the plots. 
\begin{algorithm}[H]
\caption{ISRL-DP Fed-SPIDER: Alternate Implementation}
\label{alg: dp spider}
\begin{algorithmic}[1]
\STATE {\bfseries Input:} 
Number of silos $N \in \mathbb{N},$ dimension $d \in \mathbb{N}$ of data, noise parameters $\sigma^2_1$ and $\sigma^2_2$, data sets $X_i \in \XX^{n_i}$ for $i \in [N]$, loss function $f(w, x),$ number of rounds $E - 1 \in \mathbb{N}$, local batch size parameters $K_1$ and $K_2$, step size $\eta$.
 \STATE Server initializes $w_0^2 := 0$ and broadcasts. 
 \STATE Silos sync $w_0^{i,2} := w_0^2$ ($i \in [N]$).
\STATE Network determines random subset $S_0$ of $M_0 \in [N]$ available silos. 
 \FOR{$i \in S_0$ \textbf{in parallel}} 
 \STATE Silo $i$ draws $K_2$ random samples $\{x^{0,2}_{i, j}\}_{j \in [K_2]}$ (with replacement) from $X_i$ and noise $u_2^{(i)} \sim N(0, \sigma_2^2 \mathbf{I}_d).$ 
 \STATE Silo $i$ computes noisy stochastic gradient $\widetilde{v}^{i, 2}_0 := \frac{1}{K_2} \sum_{j =1}^{K_2} \nabla f(w_0^2, x_{i,j}^{0,2}) + u_2^{(i)}$ and sends to server.
 \ENDFOR
 \STATE Server aggregates $\widetilde{v}^{2}_0 := \frac{1}{M_0} \sum_{i \in S_0} \widetilde{v}^{i, 2}_0$ and broadcasts. 
 \FOR{$r \in \{0, 1, \cdots, E-2\}$}
 \STATE Network determines random subset $S_{r+1}$ of $M_{r+1} \in [N]$ available silos. 
 \FOR{$i \in S_{r+1}$ \textbf{in parallel}} 
 \STATE Server updates $w_{r+1}^0 := w_r^2$, $w_{r+1}^{1} := w_r^2 - \eta \widetilde{v}_r^2$ and broadcasts to silos.
 \STATE Silos sync $w^{i, 0}_{r+1} := w_{r+1}^0$, $\widetilde{v}^{i,0}_{r+1} := 
 \widetilde{v}^2_{r}$, and $w_{r+1}^{i,1} := w_{r+1}^{1}$ ($i \in [N])$. 
 \STATE Silo $i$ draws $K_1$ random samples $\{x^{r+1, 1}_{i, j}\}_{j \in [K_1]}$ (with replacement) from $X_i$ and noise $u_1^{(i)} \sim N(0, \sigma_1^2 \mathbf{I}_d).$
 \STATE Silo $i$ computes $\widetilde{v}^{i, 1}_{r+1} := \frac{1}{K_1} \sum_{j =1}^{K_1} [\nabla f(w_{r+1}^{1}, x^{r+1, 1}_{i, j}) - \nabla f(w_{r+1}^{0}, x^{r+1, 1}_{i, j})] + \widetilde{v}^{i, 0}_{r+1} + u_1^{(i)}$ and sends to server.
 \STATE Server aggregates 
 $\widetilde{v}_{r+1}^1 := \frac{1}{M_{r+1}} \sum_{i \in S_{r+1}} \widetilde{v}_{r+1}^{i, 1}$, updates $w_{r+1}^{2} := w_{r+1}^{1} - \eta \widetilde{v}_{r+1}^1$, and broadcasts.
 \STATE Silos sync $w_{r+1}^{i, 2} := w_{r+1}^{2}$.
 \STATE Silo $i$ draws $K_2$ random samples $\{x_{i, j}^{r+1,2}\}_{j \in [K_2]}$ (with replacement) from $X_i$ and noise $u_2^{(i)} \sim N(0, \sigma_2^2 \mathbf{I}_d).$ 
 \STATE Silo $i$ computes $\widetilde{v}^{i,2}_{r+1} := \frac{1}{K_2} \sum_{j=1}^{K_2} \nabla f(w_{r+1}^{2}, x_{i, j}^{r+1,2}) + u_2^{(i)}$ and sends to server. 
 \STATE Server updates $\widetilde{v}_{r+1}^{2} := \frac{1}{M_{r+1}} \sum_{i \in S_{r+1}} \widetilde{v}^{i,2}_{r+1}$ and broadcasts.  
 \ENDFOR
 \ENDFOR \\
\STATE {\bfseries Output:} $\wpr \sim \text{Unif}(\{ w_r^t \}_{r=1, \cdots, E-1; t=1,2})$.
\end{algorithmic}
\end{algorithm}

\subsection{MNIST experiment}
\label{app: MNIST data}
The MNIST data is available at \url{http://yann.lecun.com/exdb/mnist/}. In our implementation, we use \texttt{torchvision.datasets.MNIST} to download the MNIST data. All experiments are conducted on a device with 6-core Intel Core i7-8700.

\noindent{\textbf{Experimental setup}}: To divide the data into $N=25$ silos and pre-process it, we rely on the code provided by~\citep{woodworth2020}. The code is shared under a Creative Commons Attribution-Share Alike 3.0 license. We fix $\delta=1/n^2$ (where $n=$number of training samples per silo, is given in ``\textbf{Preprocessing}'') and test $\epsilon \in \{0.75, 1, 1.5, 3, 6, 12, 18\}$.

\noindent{\textbf{Preprocessing}}: 
First, we standardize the numerical data to have mean zero and unit variance, and flatten them. Then, we utilize PCA to reduce the dimension of flattened images from $d=784$ to $d=50$. To expedite training, we used 1/7 of the $5,421$ samples per digit, which is 774 samples per digit. As each silo is assigned data of two digits, each silo has $n=1,543$ samples. We employ an 80/20 train/test split for data of each silo.

\noindent \textbf{Gradient clipping: }
Since the Lipschitz parameter of the loss is unknown for this problem, we incorporated gradient clipping \citep{abadi16} into the algorithms. Noise was calibrated to the clip threshold $L$ to guarantee ISRL-DP (see below for more details). We also allowed the non-private algorithms to employ clipping if it was beneficial. 

\noindent{\textbf{Hyperparameter tuning}}: For each algorithm, each $\epsilon \in \{0.75, 1, 1.5, 3, 6, 12, 18\}$, and each $(M, R) \in \{(12, 25), (12, 50),$ $(25, 25), (25, 50)\}$, we swept through a range of constant stepsizes and clipping thresholds to find the (approximately) optimal stepsize and clipping threshold for each algorithm and setting. The stepsize grid consists of 5 evenly spaced points between $e^{-9}$ and 1. The clipping threshold includes 5 values of 1, 5, 10, 100, 10000. For ISRL-DP FedProx-SPIDER, we use $q \in \{1, 2, 3, 4\}$ for $R=50$ and $q \in \{1, 2\}$ for $R=25$.  Due to memory limitation, we did not check large $q$ values because it results in large batch size based on $K$ in ISRL-DP FedProx-SPIDER (see below for more details).

\noindent\textbf{Choice of $\sigma^2$ and $K$:} 
We used noise with smaller constants/log terms (compared to the theoretical portion of the paper) to get better utility (at the expense of needing larger $K$ to ensure privacy), by appealing to the moments accountant~\citep[Theorem 1]{abadi16} instead of the advanced composition theorem~\citep[Theorem 3.20]{dwork2014}. 

For ISRL-DP FedProx-SPIDER, we used $\sigma_1^2 = \frac{16L^2 \ln(1/\delta)}{\epsilon^2 n^2}\max\left(\frac{R}{q}, 1 \right)$, $\sigma_2 = \infty$, and $\hat{\sigma}_2^2 = \frac{64 L^2 R \ln(1/\delta)}{\epsilon^2 n^2}$. We chose $\sigma_2 = \infty$ because we do not have an a priori bound on the smoothness parameter $\beta$. Therefore, only the variance-reduction benefits of SPIDER are illustrated in the experiments and not the smaller privacy noise.

For ISRL-DP FedSPIDER: Alternate Implementation, we used $\sigma_1^2 = \frac{32L^2 \ln(2/\delta) R}{n^2 \epsilon^2}$ and $\sigma_2^2 = \frac{8L^2 \ln(2/\delta) R}{n^2 \epsilon^2}$ with $K_1 = K_2 = \frac{n \sqrt{\epsilon}}{2 \sqrt{R}}$ given above, which guarantees ISRL-DP by~\citep[Theorem 1]{abadi16}. Note that the larger constant $32$ is needed for ISRL-DP in $\sigma_1^2$ because the $\ell_2$ sensitivity of the updates in line 16 of~\cref{alg: LDP proxspider} is larger than simple SGD updates (which are used in MB-SGD, Local SGD, and line 20 of~\cref{alg: LDP proxspider}) by a factor of $2$.

For ISRL-DP MB-SGD and ISRL-DP Local SGD, we use the same implementation as~\citep{lr21fl}.

\noindent{\textbf{Generating Noise}}: Due to the low speed of NumPy package in generating multivariate random normal vectors, we use an alternative approach to generate noises. For ISRL-DP SPIDER and ISRL-DP MB-SGD algorithms, we generate the noises on MATLAB and save them. Then, we load them into Python when we run the algorithms. 
Since the number of required noise vectors for ISRL-DP Local SGD is much larger ($K$ times larger) than two other ISRL-DP algorithms, saving the noises beforehand requires a lot of memory. Hence, we generate the noises of ISRL-DP Local SGD
on Python by importing a MATLAB engine. 

\noindent \textbf{Plots and additional experimental results:}
See Figure~\ref{fig:M12R25} and Figure~\ref{fig:M25R50} for results of the two remaining experiments: $(M = 12, R = 25)$ and $(M=25, R = 50)$. 
The results are qualitatively similar to those presented in the main body. In particular, ISRL-DP SPIDER continues to outperform both ISRL-DP baselines in most tested privacy levels. Also, ISRL-DP MB-SGD continues to show strong performance in the high privacy regime ($\epsilon \leq 1.5$).  

\begin{figure}[ht] 
  \centering
  \begin{tabular}{@{}c@{}}
    \includegraphics[width=0.5\linewidth]{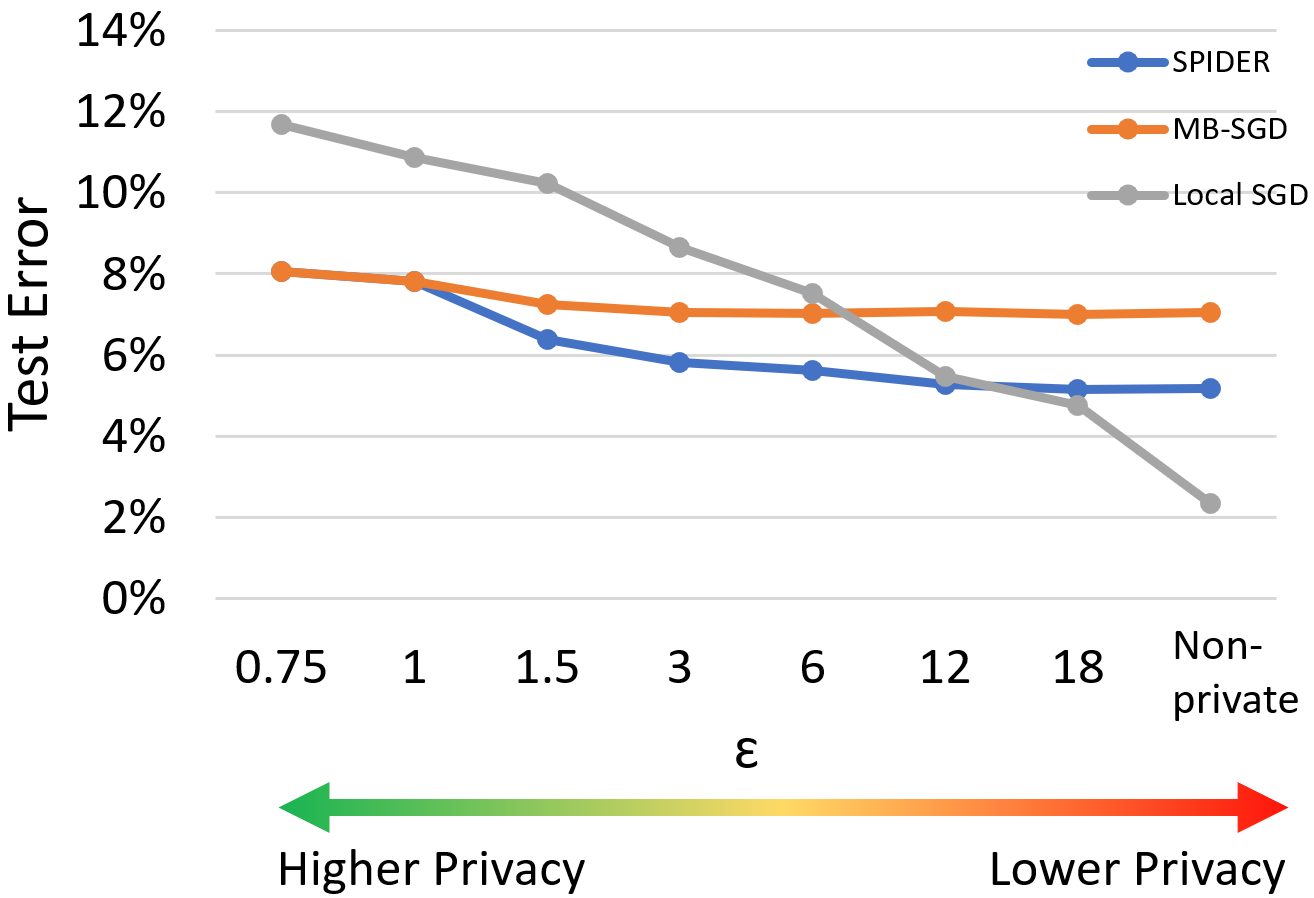} \\[\abovecaptionskip]
  \end{tabular}
  \caption{MNIST. $M = 12, R = 25$.}
  \label{fig:M12R25}
\end{figure}

\begin{figure}[ht] 
  \centering
  \begin{tabular}{@{}c@{}}
    \includegraphics[width=0.5\linewidth]{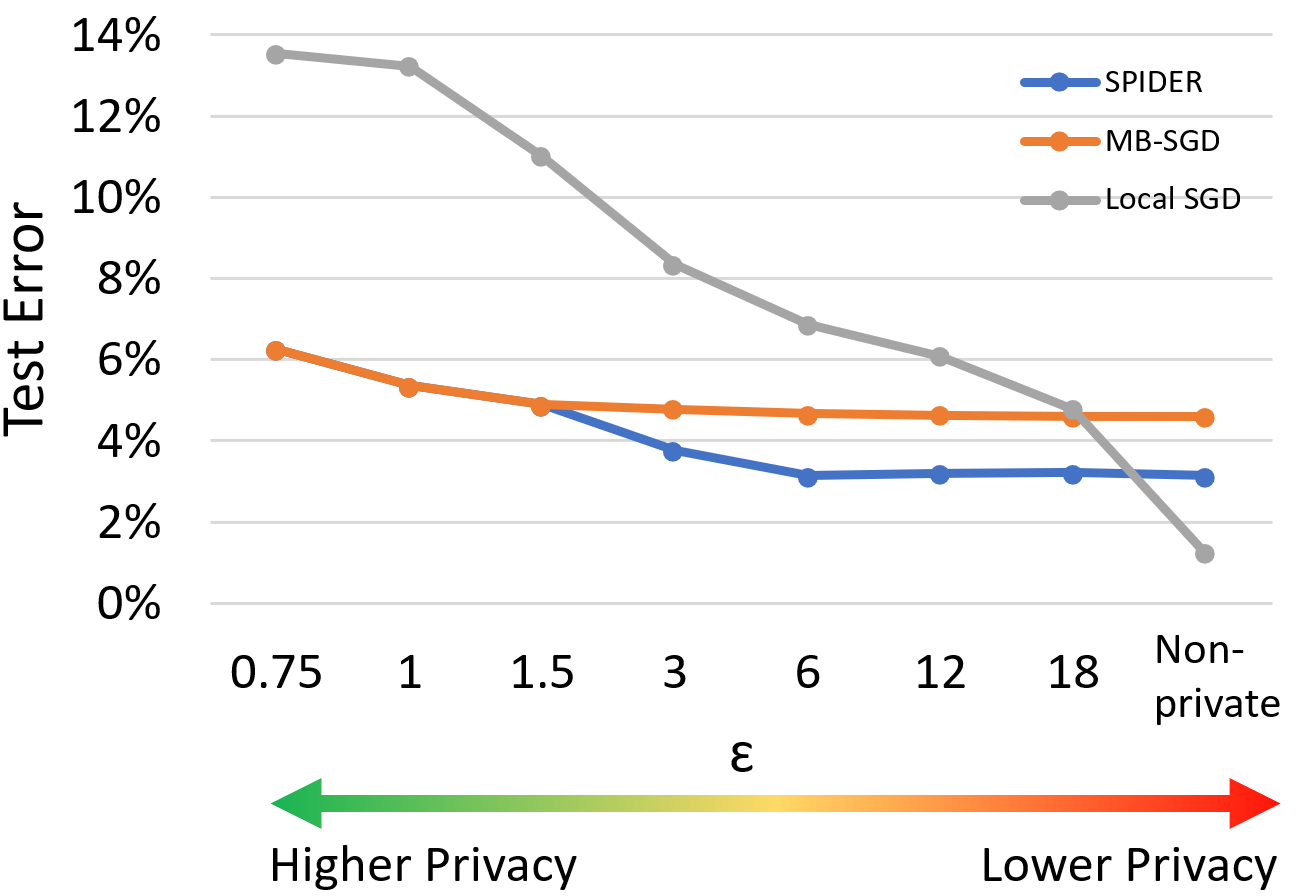} \\[\abovecaptionskip]
  \end{tabular}
  \caption{MNIST. $M = 25, R = 50$.}
  \label{fig:M25R50}
\end{figure}

\subsection{CIFAR10 experiment}
\label{app: CIFAR10 data}
We run an experiment on CIFAR10 data to further evaluate the performance of ISRL-DP SPIDER in image classification. We partition the data set into 10 heterogeneous silos, each containing one class out of 10 classes of data. We use a 5-layer CNN with two 5x5 convolutional layers (the first with 6 channels, the second with 16 channels, each followed by a ReLu activation and a 2x2 max pooling) and three fully connected layers with 120, 84, 10 neurons in each fully connected layer (the first and second fully connected layers followed by a ReLu activation). For 7 privacy levels ranging from $\epsilon = 0.75$ to $\epsilon = 18$, we compare ISRL-DP SPIDER against standard FL baselines: MB-SGD, Local SGD (a.k.a. Federated Averaging)~\citep{mcmahan2017originalFL}, ISRL-DP MB-SGD~\citep{lr21fl}, and ISRL-DP Local SGD. We fix $\delta = 1/n^2$. As Figure~\ref{fig:CIFAR10M10R50} shows, \textit{ISRL-DP SPIDER outperforms both ISRL-DP baselines for most tested privacy levels}. The results are based on the average error of 10 random assignment of train/test split of data for each algorithm/epsilon pair. 
CIFAR10 data is available at~\url{https://www.cs.toronto.edu/~kriz/cifar.html}. In our implementation, we directly download the data from \texttt{torchvision.datasets.CIFAR10}. 

\noindent{\textbf{Experimental setup}}: 
To divide the CIFAR10 data into $N=10$ heterogeneous silos, we use labels. That is, we assign one unique image class to each of 10 heterogeneous silos. 

\noindent{\textbf{Preprocessing}}: We standardize the numerical data to have mean zero and unit variance. We utilize a 80/20 train/test split for data of each client.

\noindent \textbf{Gradient clipping: }
Since the Lipschitz parameter of the loss is unknown for this problem, we incorporated gradient clipping \citep{abadi16} into the algorithms. Noise was calibrated to the clip threshold $L$ to guarantee ISRL-DP (see below for more details). We also allowed the non-private algorithms to employ clipping if it was beneficial. 

\noindent{\textbf{Hyperparameter tuning}}: It is similar to hyperparameter tuning of MNIST data. However, we check $(M, R) = (10, 50)$ and $(M, R) = (10, 100)$ here. Also, the stepsize grid of Local SGD consists of 20 evenly spaced points between $e^{-5}$ and $e^{1}$ for local SGD. The stepsize grid of MB-SGD and SPIDER with R=50 consists of 12 evenly spaced points between $e^{-5}$ and $e^{0}$ and with R=100 consists of 8 evenly spaced points between $e^{-5}$ and $e^{1}$. The clipping threshold of all algorithms includes 6 values of 0.001, 0.01, 0.1, 1, 5, 10. For ISRL-DP FedProx-SPIDER, we use $q \in \{1, 2, 3, 4\}$ for $R=50$ and $q \in \{1, 2, \dots, 8\}$ for $R=100$. Due to memory limitation, we did not check large $q$ values because it results in large batch size based on $K$ in ISRL-DP FedProx-SPIDER (see below for more details).

\noindent\textbf{Choice of $\sigma^2$ and $K$:} 
Same as in MNIST: see~\cref{app: MNIST data}. 

\subsection{Breast cancer experiment}
\label{app: breast cancer}

We run an experiment on Wisconsin Breast Cancer (Diagnosis) data (WBCD) to further evaluate the performance of ISRL-DP SPIDER in binary (malignant vs. benign) classification. We partition the data set into 2 heterogeneous silos, one containing malignant labels and the other benign labels. We use a 2-layer perceptron with 5 neurons in the hidden layer. For 7 privacy levels ranging from $\epsilon = 0.75$ to $\epsilon = 18$, we compare ISRL-DP SPIDER against standard FL baselines: MB-SGD, Local SGD (a.k.a. Federated Averaging)~\citep{mcmahan2017originalFL}, ISRL-DP MB-SGD~\citep{lr21fl}, and ISRL-DP Local SGD. We fix $\delta = 1/n^2$. As Figure~\ref{fig:M2R25} shows, \textit{ISRL-DP SPIDER outperforms both ISRL-DP baselines for most tested privacy levels}. The results are based on the average error of 10 random assignment of train/test split of data for each algorithm/epsilon pair. 
WBCD data is available at~\url{https://archive.ics.uci.edu/ml/datasets} and we directly download the data from UCI repository website. The experiment is conducted on a device with 6-core Intel Core i7-8700.

\noindent{\textbf{Experimental setup}}: 
To divide the WBCD data into $N=2$ silos, we use labels (malignant vs. benign). We split the data into 2 parts, one only has malignant labels and the other only has benign data. Then, we assign each part to a client to have full heterogeneous silos. 
In all experiments, we fix $\delta=1/n^2$ (where $n=$number of training samples per client, is given in ``\textbf{Preprocessing}'') and test $\epsilon \in \{0.75, 1, 1.5, 3, 6, 12, 18\}$.

\noindent{\textbf{Preprocessing}}: We standardize the numerical data to have mean zero and unit variance. We utilize a 80/20 train/test split for data of each client.

\noindent \textbf{Gradient clipping: }
Since the Lipschitz parameter of the loss is unknown for this problem, we incorporated gradient clipping \citep{abadi16} into the algorithms. Noise was calibrated to the clip threshold $L$ to guarantee ISRL-DP (see below for more details). We also allowed the non-private algorithms to employ clipping if it was beneficial. 

\noindent{\textbf{Hyperparameter tuning}}: It is similar to hyperparameter tuning of MNIST data. However, we check $(M, R) = (4, 25)$ here. Also, the stepsize grid consists of 15 evenly spaced points between $e^{-9}$ and 1. The clipping threshold includes 4 values of 0.1, 1, 5, 10. For ISRL-DP FedProx-SPIDER, we use $q \in \{1, 2, \dots, 10\}$. Due to memory limitation, we did not check large $q$ values because it results in large batch size based on $K$ in ISRL-DP FedProx-SPIDER (see below for more details).

\noindent\textbf{Choice of $\sigma^2$ and $K$:} Same as in MNIST: see~\cref{app: MNIST data}. 

\noindent{\textbf{Generating Noise}}: Due to the low speed of NumPy package in generating multivariate random normal vectors, we use an alternative approach to generate noises. For ISRL-DP SPIDER and ISRL-DP MB-SGD algorithms, we generate the noises on MATLAB and use them in Python when we run the algorithms. Since the number of required noise vectors for ISRL-DP Local SGD is much larger ($K$ times larger) than two other ISRL-DP algorithms, saving the noises beforehand requires a lot of memory. Hence, we generate the noises of ISRL-DP Local SGD on Python by importing a MATLAB engine.

\section{Limitations}
\label{app: limitations}
A major focus of this work is on developing DP algorithms that can handle a broader, more practical range of ML/optimization problems: e.g. non-convex/non-smooth, Proximal-PL, heterogeneous silos. However, some assumptions may still be strict for certain practical applications. In particular, the requirement of an a priori bound on the Lipschitz parameter of the loss--which the vast majority of works on DP ERM and SO also rely on--may be unrealistic in cases where the underlying data distribution is unbounded and heavy-tailed. Understanding what privacy and utility guarantees are possible without this assumption is an interesting problem for future work. 

\textit{Limitations of Experiments:} Pre-processing and hyperparameter tuning were done non-privately, since the focus of this work is on DP FL.\footnote{See \citep{abadi16, liu2019private, steinkehyper} and the references therein for discussion of DP PCA and DP hyperparameter tuning.} This means that the total privacy loss of the entire experimental process is higher than the $\epsilon$ indicated, which only accounts for the privacy loss from executing the FL algorithms with given (fixed) hyperparameters and (pre-processed) data.

\section{Societal Impacts}
\label{app: impacts}
We expect a net positive impact on society from our work, given that our algorithms can prevent sensitive data leakage during FL. Nonetheless, like all technologies, it carries potential for misuse and unintended outcomes. For instance, companies may attempt to legitimize invasive data collection by arguing that the user data will solely be utilized to train a differentially private model to safeguard privacy. Furthermore, in some parameter ranges, privacy comes at the expense of lower model accuracy, which could have adverse effects in crucial applications such as medicine and environmental science.

\end{document}